\newcommand{\cA}{\mathcal{A}}
\newcommand{\cG}{\mathcal{G}}
\newcommand{\cV}{\mathcal{V}}
\newcommand{\cD}{\mathcal{D}}
\newcommand{\cM}{\mathcal{M}}
\newcommand{\cN}{\mathcal{N}}
\newcommand{\cH}{\mathcal{H}}
\newcommand{\bN}{\mathbb{N}}
\newcommand{\bR}{\mathbb{R}}
\newcommand{\bS}{\mathbb{S}}
\newcommand{\1}{\mathbf{1}}
\newcommand{\0}{\mathbf{0}}
\newcommand{\norm}[1]{ \left\| #1 \right\| }
\newcommand{\softmax}{\mathrm{softmax}}
\DeclareMathOperator{\Conv}{Conv}
\DeclareMathOperator{\LN}{LN}
\DeclareMathOperator{\SRank}{SRank}
\DeclareMathOperator{\dist}{dist}
\DeclareMathOperator{\diag}{diag}
\DeclareMathOperator{\ew}{\leq_{ew}}
\DeclareMathOperator{\res}{res}
\DeclareMathOperator{\Tr}{Tr}
\DeclareMathOperator*{\argmin}{arg\,min}
\newtheorem{definition}{Definition}
\newtheorem{theorem}{Theorem}
\newtheorem{lemma}{Lemma}
\newtheorem{corollary}{Corollary}
\newtheorem{notation}{Notation}
\newtheorem{remark}{Remark}
\title{ On the Role of Attention Masks \\and LayerNorm in Transformers}
\author{Xinyi Wu$^{1}$\qquad Amir Ajorlou$^1$\qquad Yifei Wang$^2$ \qquad Stefanie Jegelka$^{3,2}$\qquad Ali Jadbabaie$^{1}$\\  
$^1$MIT LIDS \qquad $^2$MIT CSAIL \qquad $^3$TU Munich\\
\texttt{\{xinyiwu,ajorlou,yifei\_w,stefje,jadbabai\}@mit.edu}}
\begin{document}

\maketitle

\begin{abstract}
Self-attention is the key mechanism of transformers, which are the essential building blocks of modern foundation models. Recent studies have shown that pure self-attention suffers from an increasing degree of rank collapse as depth increases, limiting model expressivity and further utilization of model depth. The existing literature on rank collapse, however, has mostly overlooked other critical components in transformers that may alleviate the rank collapse issue. In this paper, we provide a general analysis of rank collapse under self-attention, taking into account the effects of attention masks and layer normalization (LayerNorm). In particular, we find that although pure masked attention still suffers from exponential collapse to a rank one subspace, sparse or local masked attention can provably slow down the collapse rate. In the case of self-attention with LayerNorm, we first show that for certain classes of value matrices, collapse to a rank one subspace still happens exponentially. However, through construction of nontrivial counterexamples, we then establish that with proper choice of value matrices, a general class of sequences may not converge to a rank one subspace, and the self-attention dynamics with LayerNorm can simultaneously possess a rich set of equilibria with any possible rank between one and full. Our result refutes the previous hypothesis that LayerNorm plays no role in the rank collapse of self-attention and suggests that self-attention with LayerNorm constitutes a much more expressive, versatile nonlinear dynamical system than what was originally thought.
\end{abstract}

\section{Introduction}

The celebrated attention mechanism has proven highly effective in the architecture of transformers~\cite{Vaswani2017AttentionIA},
which serve as the key building block of modern foundation models including large language models. From a theoretical perspective, understanding the underlying mechanism of transformers and attention in general has become pivotal for elucidating existing models and paving the way for developing more powerful future models~\cite{Dong2021AttentionIN, Geshkovski2023AMP}.

Transformers are known to exhibit the \emph{rank collapse phenomenon}\footnote{This is also referred to as the \emph{oversmoothing}~\cite{Shi2022RevisitingOI,Geshkovski2023AMP,Wu2023Demystify} or the \emph{token uniformity}~\cite{Dong2021AttentionIN, Noci2022SignalPI} problem.}~\cite{Dong2021AttentionIN, Noci2022SignalPI, Geshkovski2023AMP, Shi2022RevisitingOI, Wu2023Demystify, He2023DeepTW}, which refers to the observation that increasing the number of self-attention layers leads to homogeneous token representations. To gain more insight into the rank collapse issue and better understand the effects of self-attention in multi-layer models, it is essential to study the long-term behavior of tokens under self-attention dynamics~\cite{Dong2021AttentionIN,Geshkovski2023AMP,Geshkovski2023TheEO}. 

However, many existing studies 
do not take into account architectural components that are commonly used in practice. For instance, first,
the theoretical analysis of long-term self-attention dynamics often assumes that attention is \emph{fully bidirectional} --- that is, all tokens are allowed to attend to all other tokens in the sequence~\cite{Dong2021AttentionIN, Noci2022SignalPI,Geshkovski2023TheEO, Geshkovski2023AMP}, which only applies to certain attention mechanisms such as the one deployed in the BERT family~\cite{Devlin2019BERTPO, Lan2019ALBERTAL}. 
The vast majority of popular transformer architectures used nowadays in language models, including the GPT models~\cite{radford2019language, Brown2020LanguageMA}, use the causal attention masks where tokens are only allowed to attend to preceding tokens, or have sparse attention structure (sparse attention)~\cite{Yun2020OnCA, Zaheer2020BigBT, Beltagy2020LongformerTL, Child2019GeneratingLS, Roy2020EfficientCS, hwang2024transformerfam, Xiao2023EfficientSL, Jiang2023Mistral7, Bulatov2023ScalingTT}, where attention is restricted to local interactions between tokens and their chosen neighbors. This limits the practical applicability of the theoretical results developed in~\cite{Dong2021AttentionIN, Noci2022SignalPI, Geshkovski2023TheEO, Geshkovski2023AMP}, as they heavily rely on the key assumption that attention is fully bidirectional.

Second, layer normalization (LayerNorm) is another inherent component of transformers. \citet{Dong2021AttentionIN} put forth a hypothesis that LayerNorm plays no role in preventing rank collapse in self-attention networks. This hypothesis is then partially validated by a continuous-time analysis of self-attention dynamics conducted in~\citet{Geshkovski2023AMP}. The analysis, which incorporates LayerNorm, shows the exponential convergence of tokens to a common point on the unit sphere. However, this result relies on a strong assumption that all the value matrices are the identity matrix. In contrast, for multilayer perceptrons (MLPs), \citet{Joudaki2023OnTI} show that LayerNorm improves the isometry of the representations and can prevent rank collapse in that setting. It thus remains a question whether it is truly the case that LayerNorm could not have a similar effect in transformers under more general assumptions due to self-attention.

Hence, in this work, we rigorously study the effect of attention masks and LayerNorm on the token dynamics and rank collapse in transformers. The main questions we address are as follows:

\begin{quote}
    \hspace{-3ex}\textit{Can attention masks alleviate rank collapse of tokens under self-attention? If so, what type of attention mask would be the most effective?  } 
\end{quote}
\vspace{-2ex}
\begin{quote}
    \hspace{-3ex}\textit{Can LayerNorm alleviate rank collapse of tokens under self-attention? If so, what long-term behavior of tokens would LayerNorm lead to?}
\end{quote}
We answer these questions through a rigorous analysis of the self-attention dynamics. Notably, unlike some previous works~\cite{Geshkovski2023TheEO,Geshkovski2023AMP}, which regard self-attention as a continuous-time dynamical system,
we view self-attention as a discrete-time dynamical system, more closely resembling the architecture used in practice. 
Furthermore, our analysis extends the results to more general masked attention by making novel use of a graph-theoretic approach and incorporating advanced results on infinite products of inhomogeneous non-negative matrices that may be of independent interest.

\textbf{In summary, we make the following contributions:}
\begin{itemize}[leftmargin = 3ex]
    \item We establish that with pure self-attention, the exponential convergence of tokens to a common representation holds for a broad class of attention masks, accounting for the causal mask and a wide class of sparse attention patterns such as the sliding window~\cite{Beltagy2020LongformerTL, Zaheer2020BigBT}. The key property that leads to the exponential rank collapse is a token that serves as a common “context” for all the other tokens in the sequence to directly or indirectly attend to. Our results also show that \emph{local} attention can slow down the rate of rank collapse, suggesting its potential advantage over full bidirectional attention from an expressivity perspective at finite depth.

    \item We show that with LayerNorm, when the value matrices are orthogonal, the exponential convergence of tokens to a common point on the unit sphere holds for a broad class of attention masks. Nonetheless, by constructing nontrivial counterexamples, we prove that the self-attention dynamics with LayerNorm can simultaneously have a rich set of equilibria of any possible rank ranging from one to full. Moreover, we rigorously establish that
    self-attention with LayerNorm, together with proper choice of value matrices,  can provably prevent complete collapse of tokens to a rank one subspace for a generic class of input sequences.

\end{itemize}

\section{Related work}
\paragraph{Analysis of self-attention dynamics}
Understanding the attention mechanism is a pivotal step towards understanding  the inner workings of transformer-based models. From a dynamical systems perspective, one could abstract the forward pass of the model as tokens undergoing a nonlinear, time-varying dynamics determined by the self-attention mechanism and other parameters of the model. Specifically, \citet{Dong2021AttentionIN} first show that as the number of self-attention layers increases, tokens inevitably suffer from exponential rank collapse, while \citet{Wu2023Demystify} establish a similar result for the attention mechanism in graph neural networks, taking additionally the layer-wise nonlinear activation functions into account. Other works \cite{Geshkovski2023AMP, Geshkovski2023TheEO} take a continuous-time approach and study more fine-grained clustering behaviors of the dynamics in transformers.

\paragraph{The effect of LayerNorm in transformers} LayerNorm is one of the most commonly
used normalization techniques in modern neural networks~\cite{Ba2016LayerN} and has become an inherent component of transformers~\cite{Vaswani2017AttentionIA}. To better understand its role in transformers, \citet{Xiong2020OnLN} study it from an optimization perspective and show that LayerNorm stabilizes the gradients during the backward pass. In addition, \citet{Brody2023OnTE} find that LayerNorm plays a crucial role in improving the expressivity of the attention layer by making it easier for the model to compute the most frequent token in the input and avoid the problem of “unselectable” keys. Most relevant to our work, \citet{Dong2021AttentionIN} mention the effect of LayerNorm in terms of mitigating the rank collapse issue in transformers. The paper makes a hypothesis that LayerNorm has no effect for token rank collapse. The argument is based on a heuristic observation (see~\cref{app: problem_dong} for a detailed discussion) which is at odds with the case of simpler models such as MLPs where LayerNorm is shown to be pivotal in addressing a similar rank collapse problem~\cite{Joudaki2023OnTI}. It thus remains an open question what effect LayerNorm would have on rank collapse in transformers.

\paragraph{Sparse and local Attention} While many existing transformer models and LLMs do not use sparse attention, sparse and local attention is gaining popularity due to the demand for efficiency, particular for long-context tasks. For example, sparse attention was populated by Longformer~\cite{Beltagy2020LongformerTL} and OpenAI~\cite{Child2019GeneratingLS} and nowadays popular LLMs like Mistral 7B~\cite{Jiang2023Mistral7} use sliding window attention by default. Other popular sparse attention models include, but are not limited to BigBird~\cite{Zaheer2020BigBT}, Recurrent Memory Transformers (RMTs)~\cite{Bulatov2022RecurrentMT}, and Streaming Attention~\cite{Xiao2023EfficientSL}. Besides language tasks, sparse attention is also common in vision transformers~\cite{CV1, CV2, CV3}.

\section{Problem Setup}
%\subsection{Terminology and Notation}
\paragraph{Notation}
Let $\|\cdot\|_2$, $\|\cdot\|_F$ be the $2$-norm and Frobenius norm, respectively. We use the shorthand $[n]:=\{1,\ldots,n\}$. We denote the all-one vector of length $N$ by $\1\in\bR^N$. For a matrix \( M \), we denote its \( i \)-th row by \( M_{i,:} \) and its \( j \)-th column by \( M_{:,j} \).

 Throughout the analysis in the paper, we formalize the attention mask to be a directed graph $\cG$. Formally, we represent a directed graph with $N$ nodes by $\mathcal{G}$ and let $E(\cG)$ be the set of directed edges of $\cG$. If there is a directed edge going from $j$ to $i$ in $\mathcal{G}$, i.e. $(j,i)\in E(\cG)$, for the attention mechanism it means that token $j$ serves as a direct context for token $i$ or token $i$ attends to token $j$. The set $\cN_i$ of all neighbors of node $i$ is then $\{k: (k,i)\in E(\cG)\}$.

 Furthermore, we will be using the following graph-theoretic terminology:
\begin{definition} [Reachability] We say a node $v$ is \emph{reachable} from $u$ in a directed graph $\cG$ if and only if there is a directed path $(u, n_1), (n_1, n_2), ..., (n_k,v )$ from $u$ to $v$.
\end{definition}

\begin{definition}[Strongly Connected]
    A directed graph $\cG$ is said to be \emph{strongly connected} if and only if any two distinct of nodes are reachable from each other.
\end{definition}

\begin{definition} [Center Node]
    A node $v$ from which every node in the directed graph $\cG$ is reachable is called a \emph{center node}.
\end{definition}

\begin{definition}[Quasi-Strongly Connected]
    A directed graph $\cG$ is said to be \emph{quasi-strongly connected} if $\cG$ has at least one center node.
\end{definition}

\begin{definition}[Radius]
    The radius of a quasi-strongly connected graph is defined to be the longest distance from a center node to any node in the graph. If there are multiple center nodes, then it is the smallest value among them.
\end{definition}

\subsection{(Masked) Attention Mechanism}
Given token representations $X\in\bR^{N\times d}$, the raw attention score matrix is computed as 
$$R = XW_Q (XW_K)^\top/\sqrt{d_{QK}}\,,$$
where $W_Q, W_K\in\bR^{d\times d'}$ are the query and the key matrix, respectively, and $\sqrt{d_{QK}}$ is a temperature term to control the scale of raw attention scores. To enforce a masked attention, we create a sparse attention matrix $A \in \bR^{N \times N}$ based on $R$ whose sparsity pattern is specified by a directed graph $\cG$: we normalize $R_{ij}$ among all allowed token attention interactions $(k,i) \in E(\cG)$ such that
%tokens $j$ where token $i$ can attend to using the softmax function:
$$A_{ij} = {\softmax}_\cG (R_{ij}) = \frac{\exp(R_{ij})}{\sum_{k\in\cN_i}\exp(R_{ik})}\, \;\, \text{if $(j,i) \in E(\cG)$},\;\;\; A_{ij} = 0 \, \text{ otherwise}.$$ 
\subsection{LayerNorm}
Given token representations $X\in\bR^{N\times d}$, LayerNorm subtracts the mean across different columns in each row and then scales each row to have a unit $2$-norm. In this work, we consider LayerNorm to only perform the scaling operation, which is a common assumption in theoretical analyses of the attention mechanism~\cite{Geshkovski2023AMP, Tian2023ScanAS}\footnote{This definition of LayerNorm, precisely concides with the RMSNorm~\cite{Zhang2019RootMS} that is widely deployed in mainstream LLMs. We choose the terminology following the convention of literature.}. Mathematically, let $D = \diag(d_1, d_2, ..., d_N)$ where $d_i = 1/\|X_{i,:}\|_2$ for all $i\in [N]$, then $\LN(X) = DX\,.$

\subsection{Self-attention dynamics}
%Having defined the masked attention mechanism and LayerNorm, 
For our analysis, we consider %the following definition of 
single-head (masked) self-attention networks (SANs), where the layerwise update rule can be written as
\begin{gather}
    A^{(t)} = \softmax_{\cG^{(t)}}\left( X^{(t)}W^{(t)}_Q (X^{(t)}W^{(t)}_K)^\top/\sqrt{d_{QK}} \right)\\
    \tilde{X}^{(t+1)} = A^{(t)}X^{(t)}W_V^{(t)}\label{eq: update_no_LN}
    \\
    X^{(t+1)} = \LN(\tilde{X}^{(t)}) \vcentcolon = D^{(t)}A^{(t)}X^{(t)}W_V^{(t)}\,.
    \label{eq: update_w_LN}
\end{gather}
where $W_V^{(t)}\in\bR^{d\times d'}$ is the value matrix.
For simplicity, throughout the paper, we assume that $d=d'$ and $\cG^{(t)} =\cG$, i.e. the attention mask is the same for all the attention layers. Yet the results can be easily generalized to the case where masks are time-varying and satisfy similar regularity conditions.

\section{Main Results: Attention with Masking and LayerNorm}
To study how token representations evolve under the self-attention dynamics and behave in the long-term, we measure token similarity via  $\mu(\cdot): \bR^{N\times d} \to \bR_{\geq 0}$:
\begin{equation}
\mu(X)\vcentcolon=\|X - \1\gamma_X \|_F, \text{ where } \gamma_X = \1^\top X/N\,.
\label{eq::mu}
\end{equation}
This measure is mathematically equivalent to the measure $\res(X) = \argmin_{x\in\bR^d}\|X - \1x^\top\|_F$ used in~\cite{Dong2021AttentionIN}, but the form in~\eqref{eq::mu} is easier to work with in the general analysis and more direct to compute. Another advantage of our formulation is that it clearly demonstrates that \cref{thm: rank_collapse_SAN} and \cref{thm: rank_collapse_SAN_w_LN} are not dependent on the specific choice of $\mu(\cdot)$: these results apply to any Lipschitz $\mu'(\cdot)$  with a Lipschitz constant $L$ such that $\mu(X) = 0$ if and only $X_{i,:} = X_{j,:}, \forall i, j \in [N]$, as we can use the formulation to directly derive that
$$\mu'(X) = |\mu'(X) - \mu'(\1_{\gamma_X})| \leq L\|X-\1_{\gamma_X}\|_F = L\mu(X)\,.$$

Finally, we adopt the following assumptions in our analysis:

\begin{enumerate}
    \item [\textbf{A1}] $\cG$ contains self-loops. i.e. $(i,i)\in E$ for every token $i\in [N]$\,.
    \item [\textbf{A2}] There exist constants $C\in\bR$ such that $\underset{t\in\bN}{\max}\left\{\norm{W_Q^{(t)}}_2, \norm{W_K^{(t)}}_2\right\} \leq C$.
\end{enumerate}
\vspace{-2ex}
\textbf{A1} ensures that every token has a neighbor so that the masked attention computation is well-defined for every token in every layer, while \textbf{A2} assumes that the key and query weight matrices are bounded, which is key for efficient attention computation in practice~\cite{Alman2023FastAR}.

\subsection{Masked Attention}
We first analyze the case without LayerNorm and focus on the effect of the attention mask. To ensure  boundedness of the token trajectories $X^{(t)}$ for all $t\geq 0$ even without LayerNorm, we further assume that 
\vspace{-2ex}
\begin{enumerate}
    \item [\textbf{A3}] The sequence $\left\{\norm{\prod_{t=0}^k W_V^{(t)}}_2\right\}_{k=0}^\infty$ is bounded.
\end{enumerate}

Then with general attention masks $\cG$, there remains a strong connection between tokens via attention, and the token representations collapse exponentially to rank one.

\begin{theorem}
     Consider the self-attention dynamics without LayerNorm defined in~\eqref{eq: update_no_LN}. Under~\textup{\textbf{A1}}-\textup{\textbf{A3}}, if $\cG$ is a quasi-strongly connected graph, then
    there exists $\epsilon > 0$ where for all $t\geq 0$,
    \begin{equation}
        A^{(t)}_{i,j} \geq \epsilon, \quad \text{ for all }  (j,i) \in E. 
    \end{equation}
    As a result, a rank collapse of tokens happens exponentially with respect to $\mu(\cdot)$, i.e. there exists $C > 0$ such that 
    \begin{equation}
        \mu(X^{(t)}) \leq C\left(1-\epsilon^r\right)^{t/r}\,,
    \end{equation}
where $r$ is the radius of $\cG$, meaning that tokens converge to a common vector exponentially.
\label{thm: rank_collapse_SAN}
\end{theorem}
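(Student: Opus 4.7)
The plan is to attack the theorem in three stages: (i) establish a uniform positive lower bound $\epsilon$ on the entries $A^{(t)}_{ij}$ for $(j,i)\in E(\cG)$; (ii) use this together with quasi-strong connectedness to get a geometric contraction of the inhomogeneous products $A^{(t-1)}\cdots A^{(0)}$; and (iii) translate the contraction into exponential decay of $\mu(X^{(t)})$.

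For step (i), I would unroll the recursion as $X^{(t)} = \big(A^{(t-1)}\cdots A^{(0)}\big)\, X^{(0)}\, \big(W_V^{(0)}\cdots W_V^{(t-1)}\big)$. Because each $A^{(s)}$ is row-stochastic, every row of $(A^{(t-1)}\cdots A^{(0)})X^{(0)}$ is a convex combination of the rows of $X^{(0)}$, so $\max_i\|X^{(t)}_{i,:}\|_2 \le \max_i\|X^{(0)}_{i,:}\|_2\cdot \|W_V^{(0)}\cdots W_V^{(t-1)}\|_2$. Assumption~\textbf{A3} makes this uniformly bounded in $t$, and together with~\textbf{A2} the raw attention scores $R^{(t)}_{ij}$ are uniformly bounded in absolute value by some $M$. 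A one-line softmax estimate over the at most $N$ competing terms then gives $A^{(t)}_{ij}\ge e^{-2M}/N=:\epsilon$ for every $(j,i)\in E(\cG)$ and every $t$. The main technical delicacy sits exactly here: \textbf{A3} bounds only the full product of the $W_V^{(s)}$ rather than individual norms, so using the unrolled factorization (not a layerwise estimate) is essential for obtaining a $t$-independent bound on $\max_i\|X^{(t)}_{i,:}\|_2$.

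For step (ii), I pick a center node $c$ with $\dist(c,v)\le r$ for every $v$. Using the self-loops provided by~\textbf{A1} to pad any shorter directed path from $c$ to $v$ out to exact length $r$, there is a walk $c=n_0,n_1,\ldots,n_r=v$ in $\cG$ for every $v$, so the $(v,c)$ entry of the $r$-fold product $A^{(t+r-1)}\cdots A^{(t)}$ is at least $A^{(t+r-1)}_{v,n_{r-1}}\cdots A^{(t)}_{n_1,c}\ge \epsilon^r$. Hence every $r$-fold product of consecutive attention matrices has column $c$ entrywise bounded below by $\epsilon^r$, which forces the Dobrushin ergodicity coefficient $\tau(P):=\tfrac12\max_{i,j}\|P_{i,:}-P_{j,:}\|_1$ to satisfy $\tau(A^{(t+r-1)}\cdots A^{(t)})\le 1-\epsilon^r$. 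Submultiplicativity of $\tau$ on row-stochastic products then yields $\tau(A^{(t-1)}\cdots A^{(0)})\le (1-\epsilon^r)^{\lfloor t/r\rfloor}$.

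For step (iii), note that $(A^{(t-1)}\cdots A^{(0)})\1=\1$, so $\mu(X^{(t)})=\big\|(I-\1\1^\top/N)(A^{(t-1)}\cdots A^{(0)})\, X^{(0)}\, (W_V^{(0)}\cdots W_V^{(t-1)})\big\|_F$. Each row of $(I-\1\1^\top/N)P$ has $\ell_1$-norm at most $2\tau(P)$ (it differs from the average row, and by the triangle inequality the average row differs from each $P_{j,:}$ by at most $2\tau(P)$), so its Frobenius norm is at most $2\sqrt{N}\,\tau(P)$. Combining this with the contraction from step (ii) and the uniform bound on $\|W_V^{(0)}\cdots W_V^{(t-1)}\|_2$ from~\textbf{A3} yields $\mu(X^{(t)})\le 2\sqrt{N}\,B\,\|X^{(0)}\|_F\cdot (1-\epsilon^r)^{\lfloor t/r\rfloor}$, which after absorbing the rounding $\lfloor\cdot\rfloor$ into the prefactor is of the claimed form $C(1-\epsilon^r)^{t/r}$.
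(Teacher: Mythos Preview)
Your proposal is correct and yields the same rate $(1-\epsilon^r)^{t/r}$, but the route differs from the paper's in step~(ii). The paper does not invoke the Dobrushin ergodicity coefficient; instead it works coordinate-by-coordinate on $x^{(t)}=A^{(t-1)}\cdots A^{(0)}X^{(0)}_{:,i}$, tracking the scalar spread $M^{(t)}-m^{(t)}:=\max_k x_k^{(t)}-\min_k x_k^{(t)}$ and proving directly that $M^{(t+r)}-m^{(t+r)}\le (1-\epsilon^r)(M^{(t)}-m^{(t)})$ by propagating a bound on the center node along paths in $\cG$ (\cref{lem: 1D_bounds} and the lemma following it). This is essentially a hands-on derivation of the Dobrushin contraction in the $\ell_\infty$ sense. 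Your approach packages the same phenomenon more succinctly: once column $c$ of every $r$-fold product is $\ge\epsilon^r$, the bound $\tau\le 1-\epsilon^r$ and submultiplicativity give the contraction in one line, and the passage to $\mu(\cdot)$ via $\|(I-\1\1^\top/N)P\|_F\le 2\sqrt{N}\,\tau(P)$ is cleaner than the paper's columnwise Frobenius computation. The trade-off is that the paper's argument is fully self-contained, while yours assumes the reader knows the standard properties of $\tau$; both are equally valid, and step~(i) is essentially identical to the paper's \cref{lem: matrix_A}.
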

The detailed proof is provided in~\cref{app: thm1}. The above result suggests that with pure self-attention, as long as there is a token which all other tokens in the sequence can directly or indirectly attend to over a fixed number of layers, exponential rank collapse of tokens to a common vector is guaranteed. In particular, it generalizes the main result in~\cite{Dong2021AttentionIN} from $\cG$ being a complete graph to a much more general class of attention patterns: the attention pattern $\cG$ only needs to be quasi-strongly connected, meaning that the result applies to general attention masks used in practice including the causal mask used in decoder-only models such as the GPT family~\cite{radford2019language, Brown2020LanguageMA}, or sparse attention patterns deployed in many efficient transformer models~\cite{Beltagy2020LongformerTL, Zaheer2020BigBT, Roy2020EfficientCS, Child2019GeneratingLS, hwang2024transformerfam}. We discuss a few interesting implications below.
\vspace{-2ex}
\paragraph{Local vs. global attention} The exponential rate $(1-\epsilon^r)^{1/r}$ is monotone in the graph radius $r$. This means that rank collapse should be slower for graphs with larger radius $r$. Our result thus indirectly supports the use of local attention patterns~\cite{Zaheer2020BigBT, Beltagy2020LongformerTL}, which not only make the attention computation more efficient (what those works are originally motivated by), but also implicitly alleviate the rank collapse issue. 
\vspace{-1ex}
\paragraph{Focused vs. uniform attention} In addition, the exponential rate  is monotone decreasing in $\epsilon$, which means that rank collapse is slower with smaller $\epsilon$. One can interpret $\epsilon$ as how "focused" attention is distributed among reachable tokens, as $\epsilon$ is maximized when attention happens uniformly among reachable tokens.
Besides applying attention masks and restricting the number of reachable tokens, another way to control how focused attention would be is through the temperature term $d_{QK}$. As larger values of $d_{QK}$ would make the attention allocation among reachable tokens more even, they should make rank collapse happen faster across layers.
\vspace{-2ex}
\paragraph{Trade-off between rank collapse and universal approximating power} Finally, for strongly connected graphs, the above result also reveals a trade-off between universal function approximation power and the rate of rank-collapse. \citet{Yun2020OnCA} show that transformers with strongly connected graph masks are sequence-to-sequence function universal, yet with a mask $\cG$ they need at least the diameter of $\cG$ layers to achieve the full sequence-to-sequence function approximation property. This implies that masks with smaller diameters (and thus smaller radii, as radius $\leq$ diameter $\leq$ 2 radius) are more efficient in terms of function approximation power, yet they are more prone to rank collapse.

\begin{remark}
    Since the analysis in~\cite{Dong2021AttentionIN} fundamentally relies on the shift-invariant property of~$\softmax(\cdot)$, 
    it is necessary that all the tokens are allowed to attend to all the other tokens for their proof to work. On the contrary, we leverage a different graph-theoretic approach to exploit the common structure of the attention matrices to obtain a general result for masked attention.
\end{remark}

\subsection{Masked Attention with LayerNorm: Rank Collapse}
So far, we have considered the pure self-attention dynamics without LayerNorm and focused on the role of the attention mask. What happens if we add LayerNorm and consider the attention dynamics defined in~\eqref{eq: update_w_LN} instead? In this section, we first present a negative result, showing that exponential collapse of tokens to a common vector can still happen for certain classes of value matrices.
\begin{theorem}
    Consider the self-attention dynamics with LayerNorm defined in~\eqref{eq: update_w_LN}. Let $\cG$ be a strongly connected graph. Assume \textup{\textbf{A1}}-\textup{\textbf{A2}}, and that $W_V^{(t)}$ is orthogonal for all $t\geq 0$, and in addition, the initial input $X^{(0)}$ satisfies that 
    % the following conditions:
    \begin{itemize}
        % \item [(i)] There exists $v\in\bS^{d-1}$ such that $\langle X^{(0)}_{i,:}, v\rangle > 0$ for all $i\in[N]$;
         \item [\textup{$(\ast)$}] $N\leq d$, $X^{(0)}$ has full rank.
    \end{itemize}
    Then there exist $C > 0$, $\epsilon > 0$  such that $N\epsilon <1$ and
    \begin{equation}
        \mu(X^{(t)}) \leq C (1-N\epsilon^{2r})^\frac{t}{2r} \qquad \forall t \geq 0 \,,
            \label{thm: w_LN}
    \end{equation}
    where $r$ is the radius of $\cG$,  meaning that tokens converge to a common point on $\bS^{d-1}$ exponentially. 
        
    \label{thm: rank_collapse_SAN_w_LN}
\end{theorem}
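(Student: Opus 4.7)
My plan is to lift the multi-step attention-mixing argument of Theorem 1 to the LayerNorm setting by exploiting two facts about orthogonal $W_V^{(t)}$: it preserves the Frobenius measure $\mu$, and it rotates a hemisphere-separating direction rigidly. These let me control the LayerNorm scaling by a uniform lower bound on row norms, so LN contributes only bounded expansion per step while the strongly connected attention graph produces contraction. As a first step, exactly as in Theorem 1, the fact that rows of $X^{(t)}$ lie on $\bS^{d-1}$ after LN combined with \textbf{A1}, \textbf{A2} gives a uniform lower bound $A^{(t)}_{ij}\geq \epsilon>0$ for every $(j,i)\in E(\cG)$ and every $t\geq 0$, where $\epsilon$ can be chosen small enough that $N\epsilon<1$.

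Next I would establish that the rows remain in a uniform open hemisphere across all layers. Setting $v^{(0)}=v$ from condition (i) and $v^{(t+1)}=v^{(t)}W_V^{(t)}$, orthogonality preserves $\|v^{(t)}\|_2=1$ and
\begin{equation*}
\langle \tilde X^{(t+1)}_{i,:}, v^{(t+1)}\rangle = \sum_j A^{(t)}_{ij}\,\langle X^{(t)}_{j,:}, v^{(t)}\rangle \geq \min_k \langle X^{(t)}_{k,:},v^{(t)}\rangle.
\end{equation*}
Because $\tilde X^{(t+1)}_{i,:}$ is a convex combination of unit vectors under an orthogonal map, $\|\tilde X^{(t+1)}_{i,:}\|_2\leq 1$, so dividing by this norm in LN only enlarges the inner product. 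Inductively, the initial gap $\delta:=\min_i\langle X^{(0)}_{i,:},v^{(0)}\rangle>0$ is preserved, so $\min_i \langle X^{(t)}_{i,:},v^{(t)}\rangle \geq \delta$ for all $t$, and in particular $\|\tilde X^{(t+1)}_{i,:}\|_2\geq \delta$. Condition (ii) enters at the first step to guarantee that after normalizing the initial input the rows are nondegenerate, so that $\delta$ is strictly positive and the attention floor $\epsilon$ in Step~1 is uniform in $t$.

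The key contraction is then obtained over $2r$ layers. Step~2 yields the Lipschitz bound $\|\hat u - \hat w\|_2 \leq (2/\delta)\|u-w\|_2$ for row-normalization of vectors whose norms exceed $\delta$; together with the orthogonality-invariance $\mu(A^{(t)}X^{(t)}W_V^{(t)})=\mu(A^{(t)}X^{(t)})$, this gives $\mu(X^{(t+1)})\leq (2/\delta)\,\mu(A^{(t)}X^{(t)})$. Because $\cG$ is strongly connected with diameter $r$ and has self-loops, the product of any $2r$ consecutive attention matrices is row-stochastic with every entry at least $\epsilon^{2r}$, so its Dobrushin coefficient of ergodicity is at most $1-N\epsilon^{2r}$. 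Combining the $2r$-step mixing estimate with the LN bounds, I would derive
\begin{equation*}
\mu\big(X^{(t+2r)}\big) \leq \big(1 - N\epsilon^{2r}\big)\,\mu\big(X^{(t)}\big),
\end{equation*}
and telescoping yields the stated exponential rate $(1-N\epsilon^{2r})^{t/(2r)}$.

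The main obstacle is reconciling the per-step LN expansion factor $2/\delta>1$ with the $2r$-step mixing contraction: a naive composition gives expansion $(2/\delta)^{2r}$, which would overwhelm a single factor $(1-N\epsilon^{2r})$. My proposed resolution is to decompose each row as $X^{(t)}_{i,:}=\alpha_i v^{(t)}+w_i$ with $\langle w_i,v^{(t)}\rangle=0$ and $\alpha_i\geq\delta$; the parallel component $\alpha_i$ is close to $1$ and is essentially fixed by LN, while the orthogonal component $w_i$ carries the deviation $\mu(X^{(t)})$ and is genuinely contracted by the attention mixing. Because $W_V^{(t)}$ is orthogonal, the two components transform independently across layers, so the $2r$-step mixing estimate can be applied to the orthogonal component without the $2/\delta$ penalty compounding at each LN step. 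Making this decomposition rigorous over $2r$ layers, and aligning it with the combinatorial $\epsilon^{2r}$ lower bound on path products in the attention graph, is the technical heart of the proof.
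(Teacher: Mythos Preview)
Your hemisphere-preservation argument in the second paragraph is clean and correct, and it is in fact a more elementary route to the norm lower bound than the paper's convex-cone argument. However, the contraction step has a genuine gap, and the orthogonal decomposition you propose does not close it.

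The difficulty you identify is real: working with $\mu$ directly, each LayerNorm step contributes an expansion factor of order $1/\delta$, and over $2r$ steps this overwhelms a single Dobrushin factor $1-N\epsilon^{2r}$. Your proposed fix is to split $X^{(t)}_{i,:}=\alpha_i v^{(t)}+w_i$ and argue the two pieces evolve independently. But they do not: after attention and $W_V^{(t)}$ you get $\tilde\alpha_i v^{(t+1)}+\tilde w_i$, and LayerNorm divides both pieces by $(\tilde\alpha_i^2+\|\tilde w_i\|_2^2)^{1/2}$, which depends on $i$ through \emph{both} components. The normalization therefore reshuffles the spread of the $w_i$'s in a way controlled by the spread of the $\tilde\alpha_i$'s and vice versa; the components are coupled, and the ``no compounding'' claim does not follow. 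You also misplace the role of condition~(ii): condition~(i) alone already gives $\delta>0$, so (ii) is doing something else entirely.

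The paper resolves the LN-expansion obstacle by abandoning $\mu$ as the Lyapunov quantity and arguing in two phases. First, it writes $X^{(t+1)}=D^{(t)}A^{(t)}\cdots D^{(0)}A^{(0)}X^{(0)}\tilde W^{(t)}$ and shows the partial products $P^{(t)}$ are \emph{ergodic} in the sense of infinite products of row-allowable nonnegative matrices (via a criterion from \cite{Hartfiel2002NonhomogeneousMP}), using strong connectivity to make $r$-fold blocks strictly positive. Ergodicity gives $P^{(t)}=u^{(t)}(v^{(t)})^\top+E^{(t)}$ with $E^{(t)}_{ij}/u^{(t)}_i\to 0$; condition~(ii) enters here, because full rank of $X^{(0)}$ bounds $\|v^\top X^{(0)}\tilde W^{(t)}\|_2\geq \sigma_{\min}(X^{(0)})>0$, forcing all $u^{(t)}_i$ to share a finite common limit and hence $E^{(t)}\to 0$ and $\mu(X^{(t)})\to 0$. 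Second, once qualitative convergence is in hand, there is a finite $t_0$ after which $\phi^{(t)}:=\min_{i,j}\langle X^{(t)}_{i,:},X^{(t)}_{j,:}\rangle\geq 0$. The key point is that for nonnegative inner products, LayerNorm is \emph{monotone} (since $D^{(t)}_{ii}\geq 1$, scaling only increases a nonnegative inner product), so over $r$ layers one gets $1-\phi^{(t+r)}\leq(1-N\epsilon^{2r})(1-\phi^{(t)})$ with no expansion factor at all. The exponential bound on $\mu$ then follows from $\|X_i-X_j\|_2^2\leq 2(1-\phi^{(t)})$. In short, the missing idea in your proposal is to switch from $\mu$ to the pairwise inner product $\phi$, for which LayerNorm helps rather than hurts; but $\phi$ only becomes usable after an initial convergence phase, and that phase is where condition~(ii) and the ergodicity machinery are actually needed.
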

The detailed proof is provided in~\cref{app: thm2}. The result can be seen as a generalized discrete version of Theorem 4.1 in~\cite{Geshkovski2023AMP}. Notably, our analysis is based purely on advanced linear algebra tools: infinite products of non-negative matrices and their ergodicity, and can account for time-varying weights and general attention masks, as opposed to fixed $W_K, W_Q, W_V$ over time and $\cG$ being complete (which is the case in~\cite{Geshkovski2023AMP}). One way to satisfy the condition ($\ast$) on the initial input $X^{(0)}$ is to require $N \leq d$ and to initialize tokens uniformly randomly on $\bS^{d-1}$, then the condition hold almost surely. This is how the condition is dealt with in~\cite{Geshkovski2023AMP}.

Note that the condition $(\ast)$ implies that there exists $v\in\bS^{d-1}$ such that $\langle X^{(0)}_{i,:}, v\rangle > 0$ for all $i\in[N]$ by either the hyperplane separation theorem or Farkas' lemma (see~\cref{lem: open_hemi} in~\cref{app: thm2}). If the initial token geometry satisfies a stronger condition than the above, then ($\ast$) is no longer necessary and~\cref{thm: rank_collapse_SAN_w_LN} even directly generalizes to quasi-strongly connected graphs $\cG$. We define $\phi^{(t)} \vcentcolon = \underset{i, j\in[N]}{\min}\langle X^{(t)}_{i,:}, X^{(t)}_{j,:}\rangle$, indicating the minimal cosine similarity between tokens. If the cosine similarities are non-negative for all pairs of tokens initially, then the rank collapse happens exponentially, as long as $\cG$ is quasi-strongly connected.

\begin{corollary}
     Consider the self-attention dynamics with LayerNorm defined in~\eqref{eq: update_w_LN}. Let $\cG$ be a quasi-strongly connected graph. Under~\textup{\textbf{A1}}-\textup{\textbf{A2}}, if $W_V^{(t)}$ is orthogonal for all $t\geq 0$ and  $\phi^{(0)} \geq 0$, then there exist $C > 0$, $\epsilon > 0$  such that $N\epsilon <1$ and
    \begin{equation}
        \mu(X^{(t)}) \leq C (1-\epsilon^{2r})^\frac{t}{2r}, \qquad \forall t \geq 0 \,.
    \end{equation}
     where $r$ is the radius of $\cG$,  meaning that tokens converge to a common point on $\bS^{d-1}$ exponentially.
    \label{cor: w_LN_quasi}
\end{corollary}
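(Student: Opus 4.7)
My approach is to show that the hypothesis $\phi^{(0)} \geq 0$, combined with the orthogonality of $W_V^{(t)}$, is preserved by the dynamics, and that this invariant substitutes for the stronger initial conditions (i)--(ii) of~\cref{thm: rank_collapse_SAN_w_LN} while also accommodating the weakening from strongly connected to quasi-strongly connected $\cG$. The proof then reduces to the machinery already developed for~\cref{thm: rank_collapse_SAN_w_LN}.

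First, I would establish by induction on $t$ that $\phi^{(t)} \geq 0$. Assuming $\phi^{(t)} \geq 0$, a direct computation gives
$$\langle \tilde{X}^{(t+1)}_{i,:}, \tilde{X}^{(t+1)}_{j,:}\rangle = \sum_{k,\ell} A^{(t)}_{ik} A^{(t)}_{j\ell} \langle X^{(t)}_{k,:} W_V^{(t)}, X^{(t)}_{\ell,:} W_V^{(t)}\rangle = \sum_{k,\ell} A^{(t)}_{ik} A^{(t)}_{j\ell} \langle X^{(t)}_{k,:}, X^{(t)}_{\ell,:}\rangle \geq 0,$$
by orthogonality of $W_V^{(t)}$, non-negativity of $A^{(t)}$, and the inductive hypothesis. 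Assumption \textbf{A1} together with unit row norms of $X^{(t)}$ yields $\|\tilde{X}^{(t+1)}_{i,:}\|^2 \geq (A^{(t)}_{ii})^2 > 0$, so LayerNorm is well-defined and rescales each row by a positive factor; this preserves the signs of inner products, giving $\phi^{(t+1)} \geq 0$.

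Next, since LayerNorm keeps every row of $X^{(t)}$ on the unit sphere and \textbf{A2} bounds $W_Q^{(t)}, W_K^{(t)}$, we obtain $|R^{(t)}_{ij}| \leq C^2/\sqrt{d_{QK}}$ for all $i,j$, and thus $A^{(t)}_{ij} \geq \exp(-2C^2/\sqrt{d_{QK}})/N \vcentcolon= \epsilon$ uniformly for all $(j,i) \in E$ and all $t \geq 0$. Because $\cG$ is quasi-strongly connected with diameter $r$ and has self-loops by \textbf{A1}, the $r$-fold product $A^{(t+r-1)}\cdots A^{(t)}$ then has at least one column --- indexed by a center node of $\cG$ --- all of whose entries are $\geq \epsilon^r$. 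Combined with inner-product preservation under orthogonal $W_V^{(t)}$, this scrambling-type lower bound plays exactly the role of the fully positive product matrix arising in the strongly connected case of~\cref{thm: rank_collapse_SAN_w_LN}, so the contraction argument of that theorem applies over $2r$ consecutive steps to yield $\mu(X^{(t+2r)}) \leq (1-N\epsilon^{2r}) \mu(X^{(t)})$, and iterating gives the claimed exponential rate.

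The main obstacle is the first step --- the invariance $\phi^{(t)} \geq 0$ --- because this is precisely what substitutes for conditions (i), (ii) and the strong-connectivity hypothesis of~\cref{thm: rank_collapse_SAN_w_LN}. The orthogonality of $W_V^{(t)}$ is indispensable here: without inner-product preservation under the value transformation, a single step of the dynamics could break $\phi^{(t)} \geq 0$, in which case the uniform lower bound on attention entries, and hence the center-node scrambling column, would no longer be guaranteed. Once this invariant is established, the remaining analysis is a direct adaptation of~\cref{thm: rank_collapse_SAN_w_LN}.
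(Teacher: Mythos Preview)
Your proposal is correct and matches the paper's own proof in all essential respects: the inductive invariance $\phi^{(t)}\geq 0$ (which the paper uses implicitly), the uniform lower bound on attention entries, the center-node column of the $r$-step product coming from quasi-strong connectivity plus self-loops, and the resulting geometric contraction. The only cosmetic difference is that the paper runs the contraction on $1-\phi^{(t)}$ over $r$ steps, obtaining $1-\phi^{(t+r)}\leq (1-N\epsilon^{2r})(1-\phi^{(t)})$, and then converts to $\mu$ via $\|X_{i,:}-X_{j,:}\|_2^2\leq 2(1-\phi^{(t)})$; your phrasing as a contraction of $\mu$ over $2r$ steps yields the same rate.
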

\vspace{-2ex}

\paragraph{Full mask vs. causal mask} We can refine~\cref{cor: w_LN_quasi} by specifying the number of center nodes $n$ in the mask $\cG$,  then the upper bound for the exponential rate would be $(1-n\epsilon^{2r})$ instead, meaning that the rate of rank collapse can be negatively affected by the number of center nodes in the mask $\cG$. In the case of full attention where $\cG$ is the complete graph, the mask would have $N$ center nodes, matching the bound in~\cref{thm: rank_collapse_SAN_w_LN}. In the case of causal attention where $\cG$ is the causal graph, the mask would only have one center node, and the upper bound would be looser, suggesting the advantage of the causal mask in mitigating the rate of rank collapse as compared to the full mask.

\paragraph{Post-LN vs. Pre-LN} The definition of LayerNorm in~\eqref{eq: update_w_LN} follows from the original transformer paper~\cite{Vaswani2017AttentionIA} and nowadays it is referred as \emph{post-LN}~\cite{Xiong2020OnLN}. An alternative use of LayerNorm in many LLMs, where LayerNorm comes before self-attention, is called \emph{pre-LN} and can be written instead as 
\begin{equation}
     X^{(t+1)} = A^{(t)}\LN({X}^{(t)})W^{(t)} \vcentcolon = A^{(t)}D^{(t)}X^{(t)}W_V^{(t)}\,.
\end{equation}

Note that~\cref{thm: rank_collapse_SAN_w_LN} and~\cref{cor: w_LN_quasi} apply directly to the case of pre-LN with similar proofs.

\subsection{Masked Attention with LayerNorm: Counterexample}
The main results from the previous sections seem pessimistic: the self-attention dynamics seem doomed to collapse into a rank one subspace in the long run, with or without LayerNorm.
In this section, however, we will first construct a nontrivial counterexample with only LayerNorm such that for a general class of input sequences, tokens converge to an equilibrium where rank collapse does not happen. 
Notice that for a transformer model to be practical, it is important that it can prevent rank collapse for a general class of input sequences rather than a specific input sequence.
We will then show a general result stating that, with LayerNorm and proper choice of value matrices, the self-attention dynamics can have a rich set of equilibria with \emph{any possible rank} between one and full. Moreover, for a general class of input sequences, tokens provably do not converge to a rank one subspace under the resultant dynamics.

\subsubsection{Illustrative Counterexample}\label{sec: illustrative_ex}
For simplicity of illustration, we consider $N =2, d=2$, and $\cG$ to be the causal mask. Then let $W_K^{(t)} = W_Q^{(t)} = \0$, which leads to the attention matrices %and hence the definition of attention computation we have the attention matrices
\[A^{(t)} = \begin{bmatrix}
1 & 0 \\
1/2 & 1/2 
\end{bmatrix}\,\quad\forall t\geq 0\,.\]
We further let
\[W_V^{(t)} = \begin{bmatrix}
1 & w \\
0 & 1
\end{bmatrix}\, \quad\forall t\geq 0\,,\]
for $w \in \bR$. Without loss of generality, fix $w > 1$. Then a careful analysis shows that, depending on its initial position, the first token will either converge to $(0, 1)$ or $(0, -1)$. Suppose the first token converges to $(0, 1)$. Then the convergence of the second token as $t\to\infty$ is illustrated  in~\cref{fig:illustrative_eg}, where $A = \left(-\frac{1}{ w}, \sqrt{1-\frac{1}{w^2}}
\right)$, $B = \left(-\frac{1}{w}, -\sqrt{1-\frac{1}{w^2}}
\right)$. A rigorous proof can be found in~\cref{app: d=2}. Note that due to the scaling effect of LayerNorm, any scaled version $c^{(t)}W_V^{(t)}, c^{(t)} > 0$ of $W_V^{(t)}$ works equivalently here. 

\begin{remark}
    For any orthogonal $Z\in\bR^{d\times d}$, $W_Z = Z^\top WZ$ works equivalently in this example, and the resulting token trajectories follow $X^{(t)}Z$. 
\end{remark}

This nontrivial counterexample suggests that with the LayerNorm dynamics in~\eqref{eq: update_w_LN}, there are proper choices for $W_K^{(t)}, W_Q^{(t)}, W_V^{(t)}$ matrices that can prevent tokens from collapsing to a rank one subspace, for a nonzero measure set of input sequences.

\begin{figure}[t]
    \centering
    \includegraphics[width=0.9\linewidth]{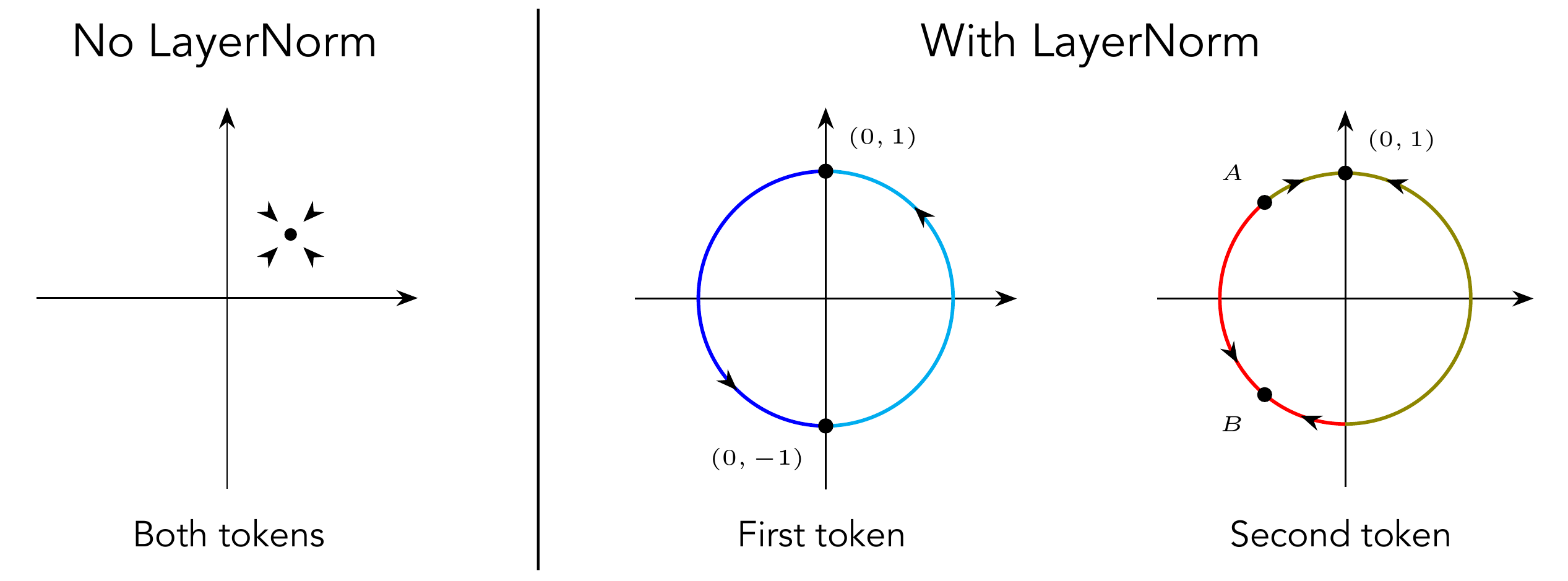}
    \caption{Long-term behavior of tokens in the case of $N=2, d=2$. Without LayerNorm (left), both tokens collapse to the same point in $\bR^{2}$; whereas with LayerNorm (right), such a collapse would not necessarily happen and token representations can maintain full rank in the long term (first token converges either to $(0,1)$ or $(0,-1)$.  Assuming convergence to $(0,1)$ for the first token, the second token converges to $B$, if it is initially located within the red segment).}
    \label{fig:illustrative_eg}
\end{figure}

\subsubsection{Generalization of the Counterexample}

We conclude this section with the following statement generalizing the previous illustrative example: with a proper choice of weight matrices, tokens under the attention dynamics with LayerNorm can simultaneously have equilibria of any rank between one and full. Moreover, for a general class of input sequences, tokens provably do not converge to a rank one subspace.

\begin{theorem}
Let $\cG$ be the causal graph and consider the attention dynamics with LayerNorm defined in~\eqref{eq: update_w_LN}.  Then there exists $\left\{W_Q^{(t)}, W_K^{(t)}, W_V^{(t)}\right\}_{t=0}^\infty$ satisfying \textup{\textbf{A2}}-\textup{\textbf{A3}} such that the corresponding dynamic has the following properties:
    \begin{enumerate}[leftmargin = 5ex]
        \item For any $1\leq k \leq \min\{N, d\}$, there exist at least $2^k$ equilibria of rank $k$;
        \item 
        For a general class of input sequences $X^{(0)}$ in $(\bS^{d-1})^N$ with measure greater than $0$, $X^{(t)}$ does not converge to a rank one subspace as $t\to\infty$.
    \end{enumerate}

   \label{thm: counterexample_generalization}
    
\end{theorem}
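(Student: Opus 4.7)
My plan is to generalize the $N = d = 2$ construction of Section~\ref{sec: illustrative_ex} to arbitrary $N$ and $d$ by keeping the causal mask with $W_Q^{(t)} = W_K^{(t)} = \mathbf{0}$ (so the attention matrix is the time-invariant causal uniform matrix $A$ with $A_{ij} = 1/i$ for $j \le i$) and taking the time-invariant value matrix $W_V^{(t)} = c\,(I + wN)$, where $N = \sum_{i=1}^{d-1} E_{i,i+1}$ is the upper-bidiagonal nilpotent shift, $w > 1$, and $c \in (0, 1/(1+w))$. The scalar $c$ is picked so that $\|W_V\|_2 < 1$, which makes $\{\|\prod_{t \le k} W_V^{(t)}\|_2\}_k$ bounded and hence \textup{\textbf{A3}} holds; \textup{\textbf{A2}} is trivial. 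Since LayerNorm is positive-scale invariant, the dynamics and equilibria coincide with those of $W_V = I + wN$, and any equilibrium $X^*$ satisfies the triangular fixed-point recursion
\begin{equation}
X^*_{i,:} \;=\; \LN\!\left(\textstyle\sum_{j=1}^{i} X^*_{j,:}\, W_V\right), \qquad i = 1, \ldots, N.
\label{eq: plan-fp}
\end{equation}

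For part 1, I would solve \eqref{eq: plan-fp} inductively. Token $1$'s equation $y = \LN(yW_V)$ forces $y$ to be a positive left eigenvector of $W_V$; the Jordan structure of $I + wN$ leaves only the eigenline $\mathrm{span}(e_d)$, yielding the two equilibria $v_1 = s_1 e_d$ for $s_1 \in \{\pm 1\}$. A telescoping computation shows that, along the cascade $X^*_j = v_j$ for $j < i$, one has $\sum_{j < i} v_j W_V = v_{i-1}$, so token $i$'s equation collapses to $y = \LN(v_{i-1} + yW_V)$. Solving this in the same way as the 2D base case of Appendix~\ref{app: d=2} produces the off-axis solutions $v_i(s_1, \ldots, s_i)$, $s_i \in \{\pm 1\}$, whose first nonzero entry sits at coordinate $d - i + 1$; by that property $v_1, \ldots, v_k$ are linearly independent and span a $k$-dimensional subspace. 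The cascade $(v_1, \ldots, v_k, X^*_{k+1}, \ldots, X^*_N)$ is then an equilibrium of rank exactly $k$ whenever the remaining tokens $X^*_j$, $j > k$, are fixed points of the reduced map $y \mapsto \LN(v_k + yW_V)$ that lie inside $\mathrm{span}(v_1, \ldots, v_k)$ (note $\mathrm{span}(v_1, \ldots, v_k)$ is invariant under this map because $v_i W_V = v_i - v_{i-1}$); existence of such restricted fixed points follows either from an explicit polynomial continuation of the 2D computation or, more abstractly, from a Lefschetz / degree argument on the induced self-map of the unit sphere of the invariant subspace. Varying $(s_1, \ldots, s_k) \in \{\pm 1\}^k$ produces the required $2^k$ distinct rank-$k$ equilibria.

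For part 2, I would show that each cascade equilibrium with $k \ge 2$ has a basin of attraction of positive Lebesgue measure in $(\bS^{d-1})^N$. Token $1$'s basin for $v_1 = s_1 e_d$ is the open set $\{X^{(0)}_{1,:} \in \bS^{d-1} : s_1\, X^{(0)}_{1,1} > 0\}$; this follows by the sign-preservation-of-$y_1$ and monotone-drift-of-the-ratio $y_d/y_1$ argument that is used explicitly in Appendix~\ref{app: d=2}. Once token $1$ has entered a small neighborhood of $v_1$, the dynamics for token $2$ becomes a vanishing non-autonomous perturbation of the autonomous map $y \mapsto \LN(v_1 + yW_V)$, whose two off-axis fixed points $v_2(s_1, \pm)$ are hyperbolic attractors on an open subset of $\bS^{d-1}$ by the 2D analysis (cf.~Figure~\ref{fig:illustrative_eg}); a standard persistence-of-hyperbolicity argument transfers these open basins to the non-autonomous system. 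Iterating the argument in the token index shows that each sign vector $(s_1, \ldots, s_N) \in \{\pm 1\}^N$ yields an open (hence positive-measure) basin inside $(\bS^{d-1})^N$; picking any sign vector whose cascade has rank at least two exhibits a positive-measure set of inputs $X^{(0)}$ for which $X^{(t)}$ does not converge to a rank-one subspace.

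The main obstacle is the cascade-stability step: while the counting of $2^k$ rank-$k$ equilibria is a direct algebraic extension of the 2D computation in Appendix~\ref{app: d=2}, proving that the off-axis equilibria $v_k$ remain attracting along the non-autonomous trajectory driven by the transient motion of earlier tokens requires a quantitative slow-fast perturbation analysis. This couples the exponential convergence of each reduced one-token dynamics (established by a fixed-point iteration analogous to Appendix~\ref{app: d=2}) with a persistence-of-hyperbolicity estimate that transfers the autonomous basin structure to the non-autonomous cascade; this is where the technical core of the proof lies.
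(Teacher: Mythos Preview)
Your choice of weights --- $W_Q^{(t)}=W_K^{(t)}=\mathbf{0}$ with $W_V^{(t)}$ a scalar multiple of the Jordan block $I+wN$ --- and the resulting Jordan-chain cascade $v_i(I-W_V)=v_{i-1}$ coincide exactly with the paper's construction in Appendix~\ref{app: thm3}. Your rank-$k$ count, however, has a real gap. The telescoping $\sum_{j<i} v_j W_V = v_{i-1}$ relies on $\beta_j=1$ along the chain, which holds because $v_{j-1}+v_jW_V=v_j$ has unit norm. Once token $k{+}1$ is placed at any $y\in\mathrm{span}(v_1,\ldots,v_k)$, you cannot keep $\beta_{k+1}=1$: the equation $y(I-W_V)=v_k$ has no solution in that span, since by your own identity $v_i(I-W_V)=v_{i-1}$ the operator $I-W_V$ sends the span into $\mathrm{span}(v_1,\ldots,v_{k-1})$. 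Hence token $k{+}2$ faces the map $y\mapsto\LN(y_{k+1}/\beta_{k+1}+yW_V)$, not $y\mapsto\LN(v_k+yW_V)$; the tail tokens do not solve a common reduced equation, and your Lefschetz/degree step does not produce a complete rank-$k$ equilibrium. The paper builds rank $k$ from the other end: set the first $N{-}k{+}1$ tokens all equal to $e_d$ (this is consistent, with $\beta_{N-k+1}=1/(N{-}k{+}1)$), then run the Jordan chain with $\beta_j=1$ for the remaining $k{-}1$ tokens. This needs $w\geq N$, and the $2^k$ count comes from the free sign at each branching step.

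For part 2, your persistence-of-hyperbolicity program aims higher than necessary and higher than what the paper actually proves. The paper does not establish convergence to any specific higher-rank equilibrium: after showing by explicit ratio computations (generalizing Appendix~\ref{app: d=2}) that token $1$ converges to $\pm e_d$ on an open half-sphere of initial data, it works under the simplifying assumption that token $1$ has already converged and exhibits an explicit forward-invariant region $\{X_{2,d-1}\leq -1/w\}$ for token $2$ under the autonomous map $y\mapsto\LN(e_d+yW_V)$, via a monotonicity lemma generalizing Lemma~\ref{lem: g_increasing}. That invariant region alone keeps token $2$ bounded away from $\pm e_d$, forcing rank $\geq 2$ and delivering the positive-measure conclusion. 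Your slow--fast perturbation analysis would remove the paper's sequential-convergence simplification, but it is the hard step you yourself flag and is not needed for the theorem as stated.
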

\vspace{-2ex}
The detailed proof is provided in~\cref{app: thm3}. For the sake of simplicity and observing that in a causal graph, there is an inherent chronological order among tokens, in the convergence analysis of each token we assume all tokens preceding it have already converged. 

The above result is in direct contrast to the case without LayerNorm (\cref{thm: rank_collapse_SAN}), where all input sequences eventually result in complete rank collapse to a one-dimensional subspace. It also stands in contrast to the hypothesis that LayerNorm plays no role in mitigating the collapse~\cite{Dong2021AttentionIN}, and suggests that it is an important component of the self-attention dynamics in transformers---adding LayerNorm fundamentally changes the behavior of the underlying system. Compared to the case without LayerNorm, first, LayerNorm guarantees that the system never diverges, no matter what the value matrices are. Second, and more importantly, attention with LayerNorm leads to a surprisingly more expressive, versatile dynamics than the system without it---composition with different value matrices $W_V^{(t)}$ can result in different long-term token behaviors: in some cases, tokens completely collapse to a single point (\cref{thm: rank_collapse_SAN_w_LN}), while in others tokens can retain higher rank asymptotically (\cref{thm: counterexample_generalization}).

\subsection{Discussion}

Next, we discuss three further intriguing findings from our analysis of the self-attention dynamics with LayerNorm presented in the previous section.

\paragraph{Scaling effect of LayerNorm is key}

From a dynamical system perspective, rank collapse happens under pure self-attention because the attraction of the center node causes all the tokens to be aligned. In the counterexample provided in Section 4.3.1, the key insight to prevent the second token from aligning with the first token is that the updated representation of the second token must generate a component canceling out the attraction from the first token, which acts as a repulsion. The crucial role of LayerNorm here is to stabilize this cancellation process by readjusting the scale of tokens to ensure that the cancellation persists in all the updates, which would not be the case if there is no LayerNorm.

\paragraph{Anisotropy of token embeddings}
Analyzing and understanding the token geometry in transformers has long been of interest to the NLP and general ML community. In particular, empirical findings suggest that contextual token embeddings generated by transformers are anisotropic, meaning they tend to concentrate in a narrow region~\cite{Cai2021IsotropyIT, Godey2024AnisotropyII, Ethayarajh2019HowCA, Gao2019RepresentationDP, AitSaada2023IsAT}. Interestingly, 
as we show in~\cref{app: anisotropy_proof}, the full rank equilibria described in~\cref{thm: counterexample_generalization} align with this observation, as tokens lie in a narrow region on $\bS^{d-1}$.
While it still remains unclear how exactly such a representation geometry is useful for downstream tasks~\cite{Cai2021IsotropyIT, Gao2019RepresentationDP, Godey2024AnisotropyII, AitSaada2023IsAT}, empirical studies have found that anisotropy is not necessarily harmful for semantic representations and can assist with tasks like clustering~\cite{AitSaada2023IsAT, Mickus2024IsotropyCA}.

\paragraph{Stability of the equilibria} Finally, we would like to note that due to the combinatorial nature of the analysis with increasing dimensions, we do not prove the exact convergence to specific equilibria beyond not converging to a rank one subspace. However, we observe in simulation that these equilibria are indeed stable in certain directions, despite their regions of attraction being relatively small. This suggests that while LayerNorm improves the expressivity of pure self-attention dynamics, the resulting expressive power might still be limited in a way that it would be difficult for a generic input sequence to reach a specific configuration at equilibrium. This observation seems in line with the fact that MLP modules are crucial for the universal sequence-to-sequence approximation power of transformers~\cite{Yun2019AreTU,Yun2020OnCA}. To achieve maximal expressivity, and in particular, to be able to move tokens freely around the whole space (such as in the case for transformers with MLPs), nonlinear power from MLPs would be necessary.

\section{Numerical Experiments}\label{sec:exp}
In this section, we validate our theoretical findings via numerical experiments. Following~\cite{Dong2021AttentionIN}, we randomly select 3000 samples of 128-token excerpts (based on the BERT tokenizer) from Wikipedia using the Wikipedia API in Python. More experimental details and additional results can be found in~\cref{app:numerical}.
\paragraph{The effect of attention masks and LayerNorm}
We first verify the effect of different attention masks and LayerNorm on rank collapse with respect to $\mu(\cdot)$. We use BERT~\cite{Devlin2019BERTPO} as the backbone transformer model and consider five different model variants for a controlled experiment: self-attention network (SAN) in which the model has self-attention layers; SAN with skip connections;  SAN with LayerNorm where in each layer there is self-attention followed by LayerNorm; SAN with both skip connections and LayerNorm where LayerNorm is put after skip connections; and finally the standard transformer with all the components. For attention masks, we select four representative types: the complete graph, the causal graph, the sliding window (tokens can only attend to the token right before and after and themselves) and the uni-directional sliding window (tokens can only attend to the token right before and themselves). \cref{fig: effect_of_mask_LN} shows the average values of $\mu(X^{(t)})$ with standard deviations over the $3000$ samples in different architectures. We see that in SANs, $\mu(X^{(t)})$ converges to zero exponentially for all attention masks, yet more local attention masks slow down the convergence rate. Furthermore, in randomly initialized models, right after we add in LayerNorm, $\mu(X^{(t)})$ no longer converges to zero and can be maintained at a stable level significantly above zero. In pretrained models, LayerNorm helps prevent the issue together with other components such as skip connections and stabilize the representations. Results in $128$-layer initialized models also exhibit the same trend, and can be found in~\cref{app:numerical_deeper}.

\begin{figure}[h]
    \centering
    \includegraphics[width=\linewidth]{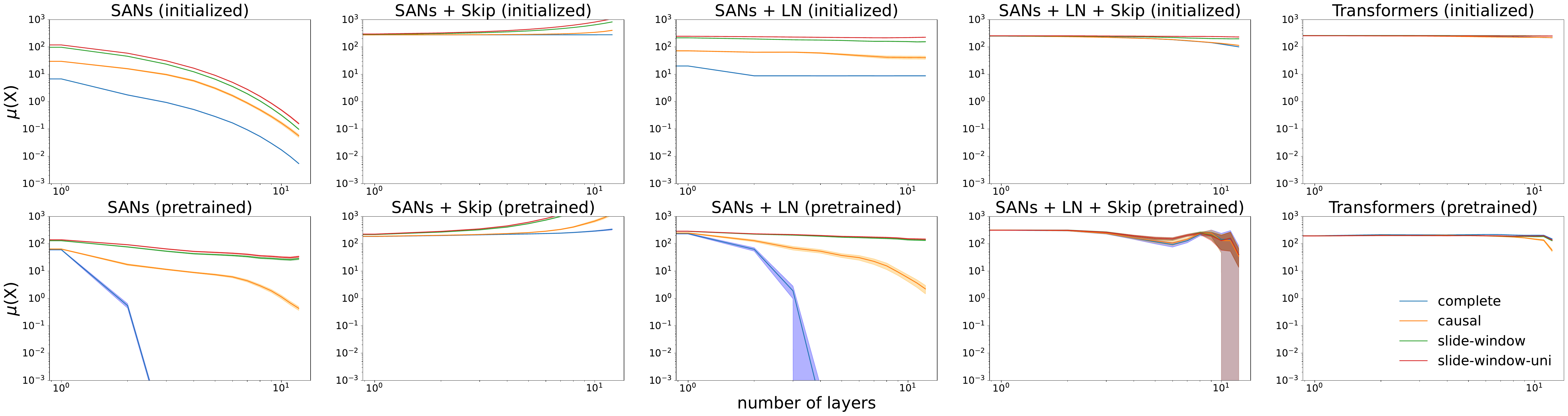}
    \caption{Evolution of $\mu(X^{(t)})$ (in log-log scale) as the number of layers increases. Rank collapse happens exponentially for pure attention, despite different attention masks having different convergence rates. However, as soon as we solely add in LayerNorm,  $\mu(X^{(t)})$ no longer converge to zero in randomly initialized models; in  pretrained models, LayerNorm helps prevent the issue together with other components and stabilize the representations.}
    \label{fig: effect_of_mask_LN}
\end{figure}

\begin{figure}[h]
    \centering
    \includegraphics[width=\linewidth]{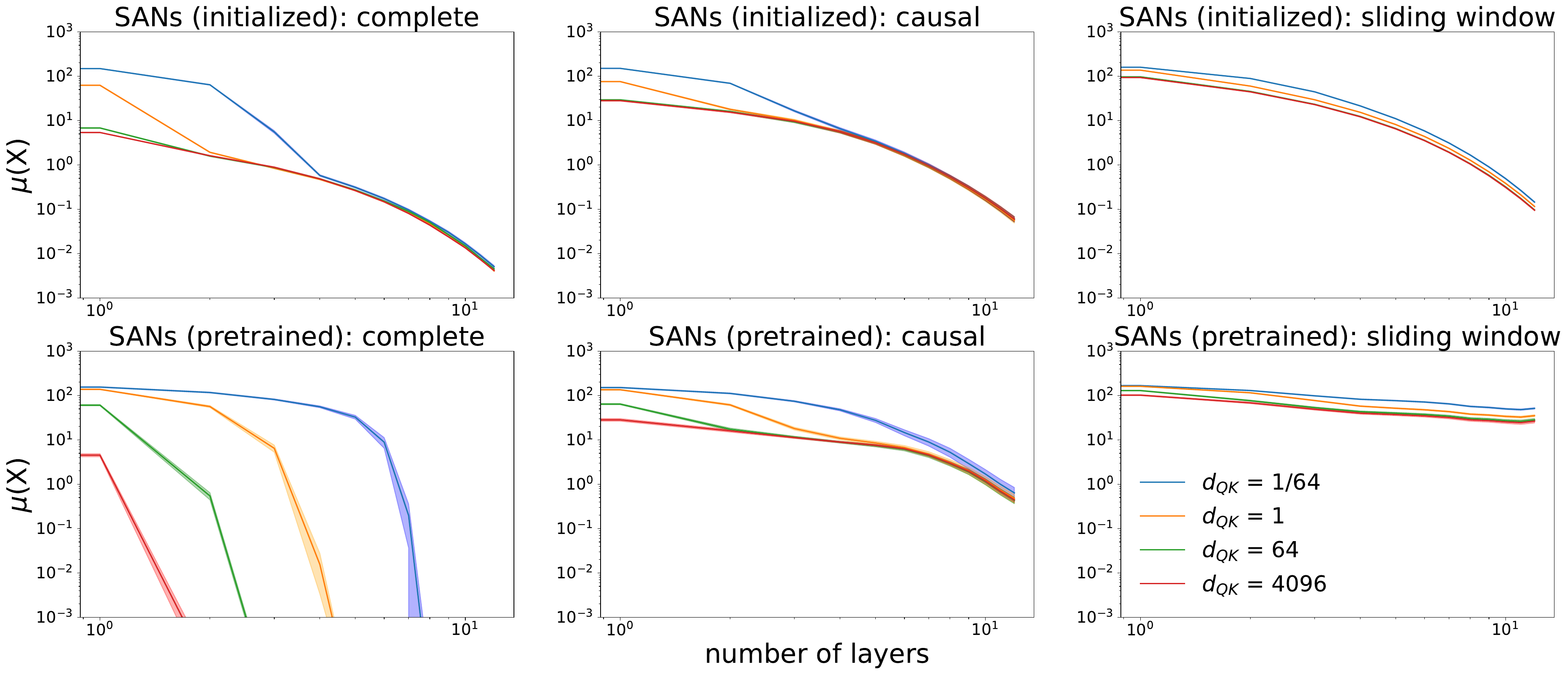}
    \caption{Evolution of $\mu(X^{(t)})$ (in log-log scale) as the number of layers increases. Smaller temperature terms alleviate the rate of rank collapse, and effect is more significant with global attention than with sparser masked attention, and more in shallower layers than deeper layers. }
    \label{fig: effect_temp_pretrained}
\end{figure}

\paragraph{The complex interplay between different components in transformers} Previous works have shown that skip connections can help combat rank collapse in transformers. Yet in~\cref{fig: effect_of_mask_LN}, skip connections seem to make $\mu(X)$ unstable, particularly in deeper layers (for 128-layer results, see~\cref{app:numerical_deeper}). Compared with full transformers where $\mu(X)$ stays relatively stable, there is a clear discrepancy. In that case, LayerNorm emerges as a crucial element in mitigating rank collapse and stabilizing token representations while also counteracting potential negative effects of pure skip connections. This underscores the complex interplay between different components in transformers.

\paragraph{The effect of the temperature term}
Next, we investigate the effect of the temperature term $d_{QK}$ in the attention calculation. Since the amount of focus of the attention is affected by both the attention mask and the temperature term, we investigate the  effect of $d_{QK}$ with different attention masks. \cref{fig: effect_temp_pretrained} shows the average values of $\mu(X^{(t)})$ with standard deviations over the $3000$ samples with different masks in pretrained SANs. We observe that while a smaller temperature term $d_{QK}$ alleviates the (initial) rate of rank collapse in all cases, the effect is more significant with global attention than with sparser masked attention, and more in shallower layers than in deeper layers. The results in initialized models show similar trends and can be found in~\cref{app:numerical_temp_init}.

\paragraph{Evolution of token geometry}\label{sec: initial_geometry}
Finally, we study the evolution of token geometry as the number of layers increases in transformers. Specifically, to capture more fine-grained details than $\mu(\cdot)$, we measure how the rank, minimal singular value of token representations, and absolute values of pairwise cosine similarities among tokens change as tokens pass through the transformer layers. We select four representative pretrained transformer models: BERT~\cite{Devlin2019BERTPO}, GPT2~\cite{radford2019language}, T5~\cite{2020t5} and ALBERT~\cite{Lan2019ALBERTAL} and the average results over $3000$ samples are shown in~\cref{fig: geometry_evolution}. We see that all models exhibit similar behaviors as tokens evolve along the layers: 
the models can effectively preserve the full rank of token representations, as also evident from the minimal singular values consistently remaining significantly above zero. Yet the absolute cosine similarities among tokens suggest that tokens gradually concentrate in a narrow region. 
This empirical observation aligns with the characteristics of the equilibrium in our counterexample that a stable long-term geometry for token representations in transformers can retain full rank while being close to a low dimensional geometry at the same time.

\begin{figure}[h]
    \centering
    \includegraphics[width=\linewidth]{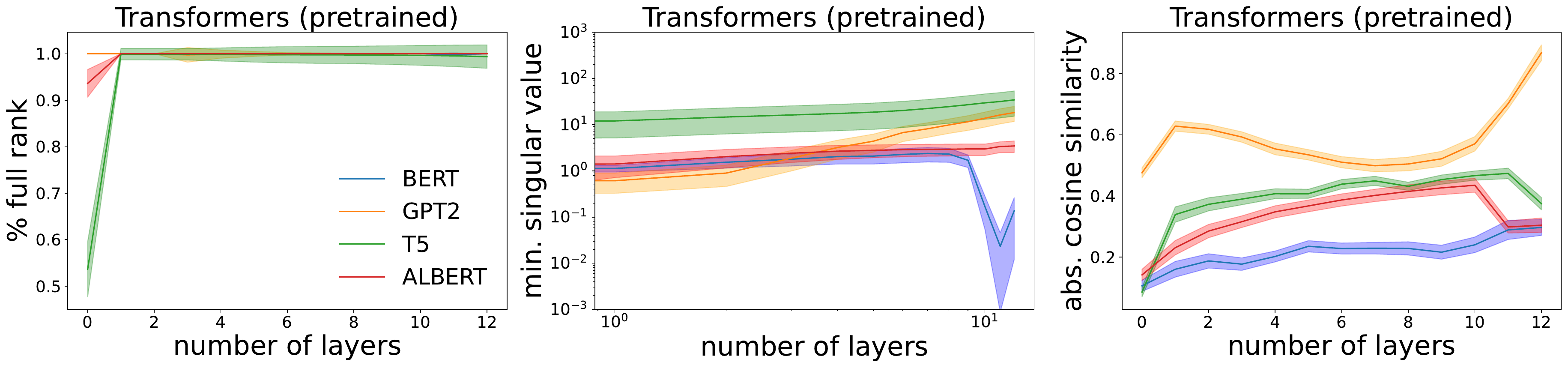}
    \caption{Evolution of token geometry as the number of layers increases. We see that tokens are indeed able to maintain full rank, while at the same time the representations are anisotropic, meaning that they concentrate in a narrow region, as indicated by the average pairwise absolute cosine similarities. 
}
    \label{fig: geometry_evolution}
\end{figure}

\section{Conclusion}
The attention mechanism has led to significant empirical progress in building foundation models. From a dynamical system perspective, the time-varying and nonlinear nature of self-attention dynamics, when coupled with attention masks, LayerNorm and other components in transformers, enables expressive and complex behavior. In particular, we show that the choice of attention masks directly affects the token dynamics. It would hence be valuable for future research to investigate how to effectively design attention masks. Our result further suggests that robust token geometries under multi-layer self-attention can exhibit both full-rankness and anisotropic characteristics simultaneously, which is the case in real transformers as well. It would be interesting to study how such co-existence of these two characteristics would help with learning tasks --- whether the full-rank yet close-to-low-dimensional geometry enables efficient learning and generalization while still letting tokens capture meaningful fine-grained details. 

\section*{Acknowledgments}
X.W., A.A., and A.J. would like to thank Bernard Chazelle for helpful discussions. X.W., A.A., and A.J. were supported by ONR Award N00014-23-1-2299 and a Vannevar Bush fellowship from the Office of the Under Secretary of Defense for Research and Engineering (USD(R\&E)). Y.W. and S.J. were supported by ONR Award N00014-20-1-2023 (MURI ML-SCOPE), NSF AI Institute TILOS (NSF CCF-2112665), NSF Award 2134108, and the Alexander von Humboldt Foundation.

\bibliographystyle{plainnat}
\bibliography{neurips_2024}

\begin{thebibliography}{43}
\providecommand{\natexlab}[1]{#1}
\providecommand{\url}[1]{\texttt{#1}}
\expandafter\ifx\csname urlstyle\endcsname\relax
  \providecommand{\doi}[1]{doi: #1}\else
  \providecommand{\doi}{doi: \begingroup \urlstyle{rm}\Url}\fi

\bibitem[Ait-Saada and Nadif(2023)]{AitSaada2023IsAT}
Mira Ait-Saada and Mohamed Nadif.
\newblock Is anisotropy truly harmful? a case study on text clustering.
\newblock In \emph{ACL}, 2023.

\bibitem[Alman and Song(2023)]{Alman2023FastAR}
Josh Alman and Zhao Song.
\newblock Fast attention requires bounded entries.
\newblock In \emph{NeurIPS}, 2023.

\bibitem[Ba et~al.(2016)Ba, Kiros, and Hinton]{Ba2016LayerN}
Jimmy Ba, Jamie~Ryan Kiros, and Geoffrey~E. Hinton.
\newblock Layer normalization.
\newblock \emph{ArXiv}, abs/1607.06450, 2016.

\bibitem[Beltagy et~al.(2020)Beltagy, Peters, and Cohan]{Beltagy2020LongformerTL}
Iz~Beltagy, Matthew~E. Peters, and Arman Cohan.
\newblock Longformer: The long-document transformer.
\newblock \emph{ArXiv}, abs/2004.05150, 2020.

\bibitem[Brody et~al.(2023)Brody, Alon, and Yahav]{Brody2023OnTE}
Shaked Brody, Uri Alon, and Eran Yahav.
\newblock On the expressivity role of layernorm in transformers' attention.
\newblock In \emph{ACL}, 2023.

\bibitem[Brown et~al.(2020)Brown, Mann, Ryder, Subbiah, Kaplan, Dhariwal, Neelakantan, Shyam, Sastry, Askell, Agarwal, Herbert-Voss, Krueger, Henighan, Child, Ramesh, Ziegler, Wu, Winter, Hesse, Chen, Sigler, Litwin, Gray, Chess, Clark, Berner, McCandlish, Radford, Sutskever, and Amodei]{Brown2020LanguageMA}
Tom~B. Brown, Benjamin Mann, Nick Ryder, Melanie Subbiah, Jared Kaplan, Prafulla Dhariwal, Arvind Neelakantan, Pranav Shyam, Girish Sastry, Amanda Askell, Sandhini Agarwal, Ariel Herbert-Voss, Gretchen Krueger, Tom Henighan, Rewon Child, Aditya Ramesh, Daniel~M. Ziegler, Jeff Wu, Clemens Winter, Christopher Hesse, Mark Chen, Eric Sigler, Mateusz Litwin, Scott Gray, Benjamin Chess, Jack Clark, Christopher Berner, Sam McCandlish, Alec Radford, Ilya Sutskever, and Dario Amodei.
\newblock Language models are few-shot learners.
\newblock In \emph{NeurIPS}, 2020.

\bibitem[Bulatov et~al.(2022)Bulatov, Kuratov, and Burtsev]{Bulatov2022RecurrentMT}
Aydar Bulatov, Yuri Kuratov, and Mikhail~S. Burtsev.
\newblock Recurrent memory transformer.
\newblock In \emph{NeurIPS}, 2022.

\bibitem[Bulatov et~al.(2023)Bulatov, Kuratov, and Burtsev]{Bulatov2023ScalingTT}
Aydar Bulatov, Yuri Kuratov, and Mikhail~S. Burtsev.
\newblock Scaling transformer to 1m tokens and beyond with rmt.
\newblock \emph{ArXiv}, abs/2304.11062, 2023.

\bibitem[Cai et~al.(2021)Cai, Huang, Bian, and Church]{Cai2021IsotropyIT}
Xingyu Cai, Jiaji Huang, Yu-Lan Bian, and Kenneth~Ward Church.
\newblock Isotropy in the contextual embedding space: Clusters and manifolds.
\newblock In \emph{ICLR}, 2021.

\bibitem[Child et~al.(2019)Child, Gray, Radford, and Sutskever]{Child2019GeneratingLS}
Rewon Child, Scott Gray, Alec Radford, and Ilya Sutskever.
\newblock Generating long sequences with sparse transformers.
\newblock \emph{ArXiv}, abs/1904.10509, 2019.

\bibitem[Devlin et~al.(2019)Devlin, Chang, Lee, and Toutanova]{Devlin2019BERTPO}
Jacob Devlin, Ming-Wei Chang, Kenton Lee, and Kristina Toutanova.
\newblock Bert: Pre-training of deep bidirectional transformers for language understanding.
\newblock In \emph{ACL}, 2019.

\bibitem[Dong et~al.(2021)Dong, Cordonnier, and Loukas]{Dong2021AttentionIN}
Yihe Dong, Jean-Baptiste Cordonnier, and Andreas Loukas.
\newblock Attention is not all you need: Pure attention loses rank doubly exponentially with depth.
\newblock In \emph{ICML}, 2021.

\bibitem[Ethayarajh(2019)]{Ethayarajh2019HowCA}
Kawin Ethayarajh.
\newblock How contextual are contextualized word representations? comparing the geometry of bert, elmo, and gpt-2 embeddings.
\newblock In \emph{EMNLP}, 2019.

\bibitem[Gao et~al.(2019)Gao, He, Tan, Qin, Wang, and Liu]{Gao2019RepresentationDP}
Jun Gao, Di~He, Xu~Tan, Tao Qin, Liwei Wang, and Tie-Yan Liu.
\newblock Representation degeneration problem in training natural language generation models.
\newblock In \emph{ICLR}, 2019.

\bibitem[Geshkovski et~al.(2023{\natexlab{a}})Geshkovski, Letrouit, Polyanskiy, and Rigollet]{Geshkovski2023AMP}
Borjan Geshkovski, Cyril Letrouit, Yury Polyanskiy, and Philippe Rigollet.
\newblock A mathematical perspective on transformers.
\newblock \emph{ArXiv}, abs/2312.10794, 2023{\natexlab{a}}.

\bibitem[Geshkovski et~al.(2023{\natexlab{b}})Geshkovski, Letrouit, Polyanskiy, and Rigollet]{Geshkovski2023TheEO}
Borjan Geshkovski, Cyril Letrouit, Yury Polyanskiy, and Philippe Rigollet.
\newblock The emergence of clusters in self-attention dynamics.
\newblock In \emph{NeurIPS}, 2023{\natexlab{b}}.

\bibitem[Godey et~al.(2024)Godey, de~la Clergerie, and Sagot]{Godey2024AnisotropyII}
Nathan Godey, Eric~Villemonte de~la Clergerie, and Benoit Sagot.
\newblock Anisotropy is inherent to self-attention in transformers.
\newblock \emph{ArXiv}, abs/2401.12143, 2024.

\bibitem[Hartfiel(2002)]{Hartfiel2002NonhomogeneousMP}
Darald~J. Hartfiel.
\newblock \emph{Nonhomogeneous Matrix Products}.
\newblock 2002.

\bibitem[Hassani et~al.(2023)Hassani, Walton, Li, Li, and Shi]{CV3}
Ali Hassani, Steven Walton, Jiacheng Li, Shengjia Li, and Humphrey Shi.
\newblock Neighborhood attention transformer.
\newblock In \emph{CVPR}, 2023.

\bibitem[He et~al.(2023)He, Martens, Zhang, Botev, Brock, Smith, and Teh]{He2023DeepTW}
Bobby He, James Martens, Guodong Zhang, Aleksandar Botev, Andy Brock, Samuel~L. Smith, and Yee~Whye Teh.
\newblock Deep transformers without shortcuts: Modifying self-attention for faithful signal propagation.
\newblock In \emph{ICLR}, 2023.

\bibitem[Hwang et~al.(2024)Hwang, Wang, Huo, Sim, and Mengibar]{hwang2024transformerfam}
Dongseong Hwang, Weiran Wang, Zhuoyuan Huo, Khe~Chai Sim, and Pedro~Moreno Mengibar.
\newblock Transformerfam: Feedback attention is working memory.
\newblock \emph{ArXiv}, abs/2404.09173, 2024.

\bibitem[Jiang et~al.(2023)Jiang, Sablayrolles, Mensch, Bamford, Chaplot, de~Las~Casas, Bressand, Lengyel, Lample, Saulnier, Lavaud, Lachaux, Stock, Scao, Lavril, Wang, Lacroix, and Sayed]{Jiang2023Mistral7}
Albert~Qiaochu Jiang, Alexandre Sablayrolles, Arthur Mensch, Chris Bamford, Devendra~Singh Chaplot, Diego de~Las~Casas, Florian Bressand, Gianna Lengyel, Guillaume Lample, Lucile Saulnier, L'elio~Renard Lavaud, Marie-Anne Lachaux, Pierre Stock, Teven~Le Scao, Thibaut Lavril, Thomas Wang, Timoth{\'e}e Lacroix, and William~El Sayed.
\newblock Mistral 7b.
\newblock \emph{ArXiv}, 2023.

\bibitem[Joudaki et~al.(2023)Joudaki, Daneshmand, and Bach]{Joudaki2023OnTI}
Amir Joudaki, Hadi Daneshmand, and Francis~R. Bach.
\newblock On the impact of activation and normalization in obtaining isometric embeddings at initialization.
\newblock In \emph{NeurIPS}, 2023.

\bibitem[Lan et~al.(2020)Lan, Chen, Goodman, Gimpel, Sharma, and Soricut]{Lan2019ALBERTAL}
Zhenzhong Lan, Mingda Chen, Sebastian Goodman, Kevin Gimpel, Piyush Sharma, and Radu Soricut.
\newblock Albert: A lite bert for self-supervised learning of language representations.
\newblock In \emph{ICLR}, 2020.

\bibitem[Liu et~al.(2021)Liu, Lin, Cao, Hu, Wei, Zhang, Lin, and Guo]{CV1}
Ze~Liu, Yutong Lin, Yue Cao, Han Hu, Yixuan Wei, Zheng Zhang, Stephen Lin, and Baining Guo.
\newblock Swin transformer: Hierarchical vision transformer using shifted windows.
\newblock In \emph{ICCV}, 2021.

\bibitem[Mickus et~al.(2024)Mickus, Gr{\"o}nroos, and Attieh]{Mickus2024IsotropyCA}
Timothee Mickus, Stig-Arne Gr{\"o}nroos, and Joseph Attieh.
\newblock Isotropy, clusters, and classifiers.
\newblock \emph{ArXiv}, abs/2402.03191, 2024.

\bibitem[Noci et~al.(2022)Noci, Anagnostidis, Biggio, Orvieto, Singh, and Lucchi]{Noci2022SignalPI}
Lorenzo Noci, Sotiris Anagnostidis, Luca Biggio, Antonio Orvieto, Sidak~Pal Singh, and Aur{\'e}lien Lucchi.
\newblock Signal propagation in transformers: Theoretical perspectives and the role of rank collapse.
\newblock In \emph{NeurIPS}, 2022.

\bibitem[Pan et~al.(2023)Pan, Ye, Xia, Song, and Huang]{CV2}
Xuran Pan, Tianzhu Ye, Zhuofan Xia, Shiji Song, and Gao Huang.
\newblock Slide-transformer: Hierarchical vision transformer with local self-attention.
\newblock In \emph{CVPR}, 2023.

\bibitem[Paszke et~al.(2019)Paszke, Gross, Massa, Lerer, Bradbury, Chanan, Killeen, Lin, Gimelshein, Antiga, Desmaison, K{\"o}pf, Yang, DeVito, Raison, Tejani, Chilamkurthy, Steiner, Fang, Bai, and Chintala]{Paszke2019PyTorchAI}
Adam Paszke, Sam Gross, Francisco Massa, Adam Lerer, James Bradbury, Gregory Chanan, Trevor Killeen, Zeming Lin, Natalia Gimelshein, Luca Antiga, Alban Desmaison, Andreas K{\"o}pf, Edward Yang, Zach DeVito, Martin Raison, Alykhan Tejani, Sasank Chilamkurthy, Benoit Steiner, Lu~Fang, Junjie Bai, and Soumith Chintala.
\newblock Pytorch: An imperative style, high-performance deep learning library.
\newblock In \emph{NeurIPS}, 2019.

\bibitem[Radford et~al.(2019)Radford, Wu, Child, Luan, Amodei, and Sutskever]{radford2019language}
Alec Radford, Jeff Wu, Rewon Child, David Luan, Dario Amodei, and Ilya Sutskever.
\newblock Language models are unsupervised multitask learners.
\newblock 2019.

\bibitem[Raffel et~al.(2020)Raffel, Shazeer, Roberts, Lee, Narang, Matena, Zhou, Li, and Liu]{2020t5}
Colin Raffel, Noam Shazeer, Adam Roberts, Katherine Lee, Sharan Narang, Michael Matena, Yanqi Zhou, Wei Li, and Peter~J. Liu.
\newblock Exploring the limits of transfer learning with a unified text-to-text transformer.
\newblock \emph{Journal of Machine Learning Research}, 2020.

\bibitem[Roy et~al.(2020)Roy, Saffar, Vaswani, and Grangier]{Roy2020EfficientCS}
Aurko Roy, Mohammad~Taghi Saffar, Ashish Vaswani, and David Grangier.
\newblock Efficient content-based sparse attention with routing transformers.
\newblock \emph{Transactions of the Association for Computational Linguistics}, 2020.

\bibitem[Shi et~al.(2022)Shi, Gao, Xu, Liang, Li, Kong, Lee, and Kwok]{Shi2022RevisitingOI}
Han Shi, Jiahui Gao, Hang Xu, Xiaodan Liang, Zhenguo Li, Lingpeng Kong, Stephen M.~S. Lee, and James Tin-Yau Kwok.
\newblock Revisiting over-smoothing in bert from the perspective of graph.
\newblock In \emph{ICLR}, 2022.

\bibitem[Tian et~al.(2023)Tian, Wang, Chen, and Du]{Tian2023ScanAS}
Yuandong Tian, Yiping Wang, Beidi Chen, and Simon~Shaolei Du.
\newblock Scan and snap: Understanding training dynamics and token composition in 1-layer transformer.
\newblock In \emph{NeurIPS}, 2023.

\bibitem[Vaswani et~al.(2017)Vaswani, Shazeer, Parmar, Uszkoreit, Jones, Gomez, Kaiser, and Polosukhin]{Vaswani2017AttentionIA}
Ashish Vaswani, Noam~M. Shazeer, Niki Parmar, Jakob Uszkoreit, Llion Jones, Aidan~N. Gomez, Lukasz Kaiser, and Illia Polosukhin.
\newblock Attention is all you need.
\newblock In \emph{NeurIPS}, 2017.

\bibitem[Wolf et~al.(2020)Wolf, Debut, Sanh, Chaumond, Delangue, Moi, Cistac, Rault, Louf, Funtowicz, Davison, Shleifer, von Platen, Ma, Jernite, Plu, Xu, Scao, Gugger, Drame, Lhoest, and Rush]{Wolf2020TransformersSN}
Thomas Wolf, Lysandre Debut, Victor Sanh, Julien Chaumond, Clement Delangue, Anthony Moi, Pierric Cistac, Tim Rault, R{\'e}mi Louf, Morgan Funtowicz, Joe Davison, Sam Shleifer, Patrick von Platen, Clara Ma, Yacine Jernite, Julien Plu, Canwen Xu, Teven~Le Scao, Sylvain Gugger, Mariama Drame, Quentin Lhoest, and Alexander~M. Rush.
\newblock Transformers: State-of-the-art natural language processing.
\newblock In \emph{EMNLP}, 2020.

\bibitem[Wu et~al.(2023)Wu, Ajorlou, Wu, and Jadbabaie]{Wu2023Demystify}
Xinyi Wu, Amir Ajorlou, Zihui Wu, and Ali Jadbabaie.
\newblock Demystifying oversmoothing in attention-based graph neural networks.
\newblock In \emph{NeurIPS}, 2023.

\bibitem[Xiao et~al.(2024)Xiao, Tian, Chen, Han, and Lewis]{Xiao2023EfficientSL}
Guangxuan Xiao, Yuandong Tian, Beidi Chen, Song Han, and Mike Lewis.
\newblock Efficient streaming language models with attention sinks.
\newblock In \emph{ICLR}, 2024.

\bibitem[Xiong et~al.(2020)Xiong, Yang, He, Zheng, Zheng, Xing, Zhang, Lan, Wang, and Liu]{Xiong2020OnLN}
Ruibin Xiong, Yunchang Yang, Di~He, Kai Zheng, Shuxin Zheng, Chen Xing, Huishuai Zhang, Yanyan Lan, Liwei Wang, and Tie-Yan Liu.
\newblock On layer normalization in the transformer architecture.
\newblock In \emph{ICML}, 2020.

\bibitem[Yun et~al.(2020{\natexlab{a}})Yun, Bhojanapalli, Rawat, Reddi, and Kumar]{Yun2019AreTU}
Chulhee Yun, Srinadh Bhojanapalli, Ankit~Singh Rawat, Sashank~J. Reddi, and Sanjiv Kumar.
\newblock Are transformers universal approximators of sequence-to-sequence functions?
\newblock In \emph{ICLR}, 2020{\natexlab{a}}.

\bibitem[Yun et~al.(2020{\natexlab{b}})Yun, Chang, Bhojanapalli, Rawat, Reddi, and Kumar]{Yun2020OnCA}
Chulhee Yun, Yin-Wen Chang, Srinadh Bhojanapalli, Ankit~Singh Rawat, Sashank~J. Reddi, and Sanjiv Kumar.
\newblock O(n) connections are expressive enough: Universal approximability of sparse transformers.
\newblock In \emph{NeurIPS}, 2020{\natexlab{b}}.

\bibitem[Zaheer et~al.(2020)Zaheer, Guruganesh, Dubey, Ainslie, Alberti, Onta{\~n}{\'o}n, Pham, Ravula, Wang, Yang, and Ahmed]{Zaheer2020BigBT}
Manzil Zaheer, Guru Guruganesh, Kumar~Avinava Dubey, Joshua Ainslie, Chris Alberti, Santiago Onta{\~n}{\'o}n, Philip Pham, Anirudh Ravula, Qifan Wang, Li~Yang, and Amr Ahmed.
\newblock Big bird: Transformers for longer sequences.
\newblock In \emph{NeurIPS}, 2020.

\bibitem[Zhang and Sennrich(2019)]{Zhang2019RootMS}
Biao Zhang and Rico Sennrich.
\newblock Root mean square layer normalization.
\newblock In \emph{NeurIPS}, 2019.

\end{thebibliography}

%%%%%%%%%%%%%%%%%%%%%%%%%%%%%%%%%%%%%%%%%%%%%%%%%%%%%%%%%%%%

\appendix

\section{Comment about Section 3.3 in~\citet{Dong2021AttentionIN}}\label{app: problem_dong}
From the mathematical form of LayerNorm in~\cref{eq: update_w_LN}, since LayerNorm normalizes each row of $\tilde{X}^{(t)}$ to have 2-norm one, the operator that represents this scaling effect $D^{(t)}$ should be multiplied from the left hand side (scaling each row) rather than from the right hand side (scaling each column). In particular, since in~\citet{Dong2021AttentionIN}, token representation $X$ takes the same formulation that rows represent different tokens while columns represent different features, the same rule applies. Thus, the argument based on $D^{(t)}$ being multiplied from the right hand side and can be merged with $W^{(t)}_V$ would not work.

\section{Proof of Theorem 1}\label{app: thm1}
\subsection{Auxiliary results}
\begin{lemma}
    Under~\textup{\textbf{A1}}-\textup{\textbf{A3}}, there exists $\epsilon > 0$ such that $A^{(t)}_{i,j} \geq \epsilon$ for all $t\geq 0$, $(j,i) \in E$.
    \label{lem: matrix_A}
\end{lemma}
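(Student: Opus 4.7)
The plan is to show that the softmax denominators and numerators of $A^{(t)}_{i,j}$ are simultaneously bounded above and below by quantities that do not depend on $t$. Since each non-zero entry of $A^{(t)}$ has the form
\[
A^{(t)}_{i,j} \;=\; \frac{\exp(R^{(t)}_{ij})}{\sum_{k\in\cN_i}\exp(R^{(t)}_{ik})}, \qquad R^{(t)}_{ij} = \frac{\langle X^{(t)}_{i,:} W_Q^{(t)},\; X^{(t)}_{j,:} W_K^{(t)}\rangle}{\sqrt{d_{QK}}},
\]
it suffices to exhibit a constant $M$ such that $|R^{(t)}_{ij}|\le M$ uniformly in $t$, $i$, $j$. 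The bound $\epsilon = e^{-2M}/N$ then follows immediately, because $|\cN_i|\le N$ for every $i$.

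The main work is therefore to show a uniform row-norm bound $\max_i \|X^{(t)}_{i,:}\|_2 \le C'$. Unrolling the recursion in~\eqref{eq: update_no_LN} gives
\[
X^{(t)} \;=\; A^{(t-1)}A^{(t-2)}\cdots A^{(0)}\, X^{(0)}\, W_V^{(0)}W_V^{(1)}\cdots W_V^{(t-1)}.
\]
Let $Y^{(t)} \vcentcolon= X^{(0)} \prod_{k=0}^{t-1} W_V^{(k)}$; by \textbf{A3} the factor $\bigl\|\prod_{k=0}^{t-1} W_V^{(k)}\bigr\|_2$ is bounded by some constant $B$, so $\max_i \|Y^{(t)}_{i,:}\|_2 \le B\max_i\|X^{(0)}_{i,:}\|_2 \vcentcolon= C'$. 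Now each $A^{(s)}$ is row-stochastic (by construction of the masked softmax and \textbf{A1}, every row has at least one allowed entry), so the product $P^{(t)}\vcentcolon=A^{(t-1)}\cdots A^{(0)}$ is row-stochastic as well. Row-stochasticity means each row of $P^{(t)} Y^{(t)}$ is a convex combination of rows of $Y^{(t)}$, which yields $\max_i \|X^{(t)}_{i,:}\|_2 \le C'$.

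Once the row-norm bound is in hand, Cauchy--Schwarz together with \textbf{A2} gives
\[
|R^{(t)}_{ij}| \;\le\; \frac{\|X^{(t)}_{i,:}\|_2\,\|X^{(t)}_{j,:}\|_2\,\|W_Q^{(t)}\|_2\,\|W_K^{(t)}\|_2}{\sqrt{d_{QK}}} \;\le\; \frac{(C')^2 C^2}{\sqrt{d_{QK}}} \vcentcolon= M,
\]
uniformly in $t$, $i$, $j$. Substituting into the softmax formula yields, for every $(j,i)\in E$,
\[
A^{(t)}_{i,j} \;\ge\; \frac{e^{-M}}{\,|\cN_i|\,e^{M}} \;\ge\; \frac{e^{-2M}}{N},
\]
so $\epsilon \vcentcolon= e^{-2M}/N$ satisfies the claim.

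The only nontrivial step is the uniform row-norm bound; the rest is a direct softmax estimate. I expect no real obstacle here, since \textbf{A3} was introduced precisely to control the otherwise unbounded growth from repeated multiplication by $W_V^{(t)}$, and the row-stochasticity of the product of $A^{(s)}$'s (guaranteed by \textbf{A1}) is what keeps the contribution of the attention matrices from amplifying row norms.
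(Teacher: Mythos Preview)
Your proposal is correct and follows essentially the same approach as the paper: unroll the recursion, use \textbf{A3} together with row-stochasticity of the attention products to get a uniform bound on $\|X^{(t)}_{i,:}\|_2$, then apply \textbf{A2} to bound the pre-softmax scores and conclude. If anything, you are more explicit than the paper about why the product $A^{(t-1)}\cdots A^{(0)}$ does not inflate row norms (the convex-combination argument), which the paper compresses into ``token trajectories stay uniformly bounded by \textbf{A3}.''
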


\begin{proof}
    Writing~\eqref{eq: update_no_LN} recursively, we get that the token trajectories 
    \begin{equation}
        X^{(t+1)} = A^{(t)}...A^{(0)}X^{(0)}W_V^{(0)}...W_V^{(t)}\,,
        \label{eq: token_traj_no_LN}
    \end{equation}
    stay uniformly bounded for all $t\geq 0$ by~\textup{\textbf{A3}}.  Then it follows from~\textup{\textbf{A2}} that there exists $C\in \bR$ such that for all $t\geq 0$,
    \[\norm{\left(X^{(t)}W_Q^{(t)}\right)_{i,:}}_2 = \norm{X^{(t)}_{i,:}W_Q^{(t)}}_2 \leq C\,,\]
    \[\norm{\left(X^{(t)}W_K^{(t)}\right)_{i,:}}_2 = \norm{X^{(t)}_{i,:}W_K^{(t)}}_2 \leq C\,.\]
    Hence for all $i,j\in[N]$,
    \[ - C^2 \leq (X^{(t)}W^{(t)}_Q (X^{(t)}W^{(t)}_K)^\top)_{i,j} \leq C^2\,.\]
    This implies that there exists $\epsilon > 0$ such that 
    $A^{(t)}_{i,j} \geq \epsilon$ for all $(j,i) \in E$.
\end{proof}

Fix a vector $x^{(0)}\in\bR^d$ and a $\{A^{(n)}\}_{n=0}^\infty$ in $\cA_{\cG,\epsilon}$. Let $x^{(t)}\vcentcolon = A^{(t)}...A^{(0)}x^{(0)}$. We further denote that 

\[\underset{i\in d}{\max}~x_i^{(t)} \vcentcolon = M^{(t)}, \qquad \underset{i\in d}{\min}~x_i^{(t)} \vcentcolon = m^{(t)}\,.\]
Note that $M^{(t)}$ is monotone non-increasing in $t$, whereas $m^{(t)}$ is monotone non-decreasing in $t$.

\begin{lemma}
    Let $x_i^{(t)} = p m^{(t)} + (1-p) M^{(t)}$ for $0\leq p \leq 1$. Then for any $T \in \bN$,
    \begin{equation}
        x_i^{(t+T)} \leq p \left(\prod_{n=t}^{t+T-1}A_{ii}^{(n)}\right) m^{(t)} + \left(1-p \left(\prod_{n=t}^{t+T-1}A_{ii}^{(n)}\right)\right)M^{(t)}\,,\label{eq: 1D_upper_bound}
    \end{equation}
    and 
     \begin{equation}
        x_i^{(t+T)} \geq p \left(\prod_{n=t}^{t+T-1}A_{ii}^{(n)}\right) M^{(t)} + \left(1-p \left(\prod_{n=t}^{t+T-1}A_{ii}^{(n)}\right)\right)m^{(t)}\,.\label{eq: 1D_lower_bound}
    \end{equation}
\label{lem: 1D_bounds}
\end{lemma}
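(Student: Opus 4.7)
The plan is a straightforward induction on $T$ resting on two elementary observations. First, each $A^{(t)}$ is row-stochastic: its nonzero entries in row $i$ are softmax outputs over $\cN_i$, and by \textup{\textbf{A1}} the diagonal entry $A^{(t)}_{ii}$ is present, so $x_i^{(t+1)} = \sum_j A^{(t)}_{ij} x_j^{(t)}$ is a convex combination of the entries of $x^{(t)}$. Second, as noted immediately before the lemma, $M^{(t)}$ is non-increasing and $m^{(t)}$ is non-decreasing in $t$. Splitting off the diagonal term from the row sum and bounding the remaining off-diagonal convex combination above by $M^{(t)}$ and below by $m^{(t)}$ yields the one-step sandwich
\[A^{(t)}_{ii}\, x_i^{(t)} + (1 - A^{(t)}_{ii})\, m^{(t)} \;\leq\; x_i^{(t+1)} \;\leq\; A^{(t)}_{ii}\, x_i^{(t)} + (1 - A^{(t)}_{ii})\, M^{(t)},\]
which will be the only analytic input needed.

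For the base case $T=0$, the empty product $\prod_{n=t}^{t-1} A^{(n)}_{ii}$ equals $1$ and both displayed bounds reduce to the hypothesis $x_i^{(t)} = p m^{(t)} + (1-p) M^{(t)}$. For the inductive step of the upper bound, I would apply the one-step upper inequality at time $t+T$, replace $M^{(t+T)}$ by the larger $M^{(t)}$ via monotonicity, and substitute the induction hypothesis for $x_i^{(t+T)}$. Writing $P_T := \prod_{n=t}^{t+T-1} A^{(n)}_{ii}$, a short algebraic manipulation of
\[A^{(t+T)}_{ii}\bigl[p P_T m^{(t)} + (1 - p P_T) M^{(t)}\bigr] + \bigl(1 - A^{(t+T)}_{ii}\bigr) M^{(t)}\]
collapses exactly to $p P_{T+1} m^{(t)} + (1 - p P_{T+1}) M^{(t)}$, closing the induction. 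The lower bound~\eqref{eq: 1D_lower_bound} follows by the parallel argument using the one-step lower inequality together with $m^{(t+T)} \geq m^{(t)}$, or equivalently by applying the upper-bound result to the sequence $-x^{(t)}$ (under which $m$ and $M$ exchange roles).

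No substantive obstacle is expected; the result is essentially a tracking calculation that telescopes cleanly thanks to the particular form $p P m^{(t)} + (1 - p P) M^{(t)}$. The one point worth flagging is the crucial role played by monotonicity of $M^{(\cdot)}$ and $m^{(\cdot)}$: without the replacements $M^{(t+T)} \leq M^{(t)}$ and $m^{(t+T)} \geq m^{(t)}$ at each step of the induction, the bound would only close in terms of the running extremes, which would be too weak for its intended downstream use in the proof of \cref{thm: rank_collapse_SAN}, where it is iterated over a graph-diameter time scale in order to turn this per-coordinate contraction into a contraction of $\mu(X^{(t)})$ against a common token value.
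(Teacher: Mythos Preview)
Your proposal is correct and matches the paper's own argument: the paper splits off the diagonal term, bounds the off-diagonal convex combination by $M^{(t)}$ (respectively $m^{(t)}$), and then iterates the resulting one-step inequality, which is exactly your induction on $T$ written out step by step for $t+1$, $t+2$, and so on. The only cosmetic difference is that the paper absorbs the monotonicity of $M^{(\cdot)}$ and $m^{(\cdot)}$ implicitly (bounding $x_j^{(t+k)}$ directly by $M^{(t)}$ since it is a convex combination of entries of $x^{(t)}$), whereas you invoke it explicitly at each inductive step.
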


\begin{proof}
    Given $x_i^{(t)} = p m^{(t)} + (1-p) M^{(t)}$, we get that 
    \begin{align*}
        x_i^{(t+1)} &= \sum_{j=1}^N A_{ij}^{(t)}x_j^{(t)}\\
                    &\leq A_{ii}^{(t)}(p m^{(t)} + (1-p) M^{(t)}) + (1-A_{ii}^{(t)})M^{(t)}\\
                    & = pA_{ii}^{(t)}m^{(t)} + (1-pA_{ii}^{(t)})M^{(t)}\,.
    \end{align*}
    Subsequently,
     \begin{align*}
        x_i^{(t+2)} &= \sum_{j=1}^N A_{ij}^{(t+1)}x_j^{(t+1)}\\
                    &\leq A_{ii}^{(t+1)}(pA_{ii}^{(t)}m^{(t)} + (1-pA_{ii}^{(t)})M^{(t)}) + (1-A_{ii}^{(t+1)})M^{(t)}\\
                    & = p\left(\prod_{n=t}^{t+1} A_{ii}^{(s)}\right)m^{(t)} + \left(1-p\left(\prod_{n=t}^{t+1} A_{ii}^{(s)}\right)\right)M^{(t)}\,.
    \end{align*}
We obtain~\eqref{eq: 1D_upper_bound} by iterating the process. Similarly, \eqref{eq: 1D_lower_bound} can be derived using a symmetric argument as for the upper bound.
\end{proof}

\begin{lemma}
    Let $\{A^{(n)}\}_{n=0}^\infty$ in $\cA_{\cG,\epsilon}$ and $r$ be the radius of $\cG$. Then for $t\geq 0$ and $0\leq p\leq 1$,
    \begin{equation}
        M^{(t+kr)} - m^{(t+kr)} \leq \left(1-p\epsilon^r\right)^k(M^{(t)}-m^{(t)})\,, \quad k\in\bN_{\geq 0}.\label{eq: 1D_exp_contraction}
    \end{equation}
\end{lemma}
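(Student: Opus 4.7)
The plan is to prove the stronger one-step statement $M^{(t+r)} - m^{(t+r)} \leq (1-\epsilon^r)(M^{(t)} - m^{(t)})$ and then iterate $k$ times. Since $1-\epsilon^r \leq 1-p\epsilon^r$ for every $p\in[0,1]$, this one-step estimate immediately implies the stated bound $(1-p\epsilon^r)^k(M^{(t)}-m^{(t)})$ for all $p \in [0,1]$, so there is no loss in handling the strongest form first.

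The heart of the argument is a combinatorial estimate on the product $B \vcentcolon= A^{(t+r-1)}A^{(t+r-2)}\cdots A^{(t)}$: I claim that $B_{i,v^*} \geq \epsilon^r$ for every $i \in [N]$, where $v^*$ is a center node of $\cG$. Since $\cG$ is quasi-strongly connected with diameter $r$, every node $i$ is reachable from $v^*$ along a directed path of length $\ell_i \leq r$. By \textbf{A1}, every self-loop is in $E$, so I can pad that path with $r-\ell_i$ extra self-steps at the endpoint $i$, producing a length-$r$ walk from $v^*$ to $i$ whose every edge carries weight at least $\epsilon$ under the matrices $A^{(s)} \in \cA_{\cG,\epsilon}$. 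Expanding $B_{i,v^*}$ as a sum over all length-$r$ walks from $v^*$ to $i$ and retaining only this single walk yields $B_{i,v^*} \geq \epsilon^r$.

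Given the column lower bound, the contraction follows quickly. Write $x_{v^*}^{(t)} = q\, m^{(t)} + (1-q)\, M^{(t)}$ for some $q\in[0,1]$; since each $A^{(s)}$ is row-stochastic, so is $B$, hence
\begin{align*}
x_i^{(t+r)} &\leq B_{i,v^*}\, x_{v^*}^{(t)} + (1-B_{i,v^*})\, M^{(t)} \leq M^{(t)} - \epsilon^r q \bigl(M^{(t)} - m^{(t)}\bigr),\\
x_i^{(t+r)} &\geq B_{i,v^*}\, x_{v^*}^{(t)} + (1-B_{i,v^*})\, m^{(t)} \geq m^{(t)} + \epsilon^r (1-q) \bigl(M^{(t)} - m^{(t)}\bigr).
\end{align*}
Taking max over $i$ in the first line and min over $i$ in the second, then subtracting, gives $M^{(t+r)}-m^{(t+r)} \leq (1-\epsilon^r)(M^{(t)}-m^{(t)})$, independent of the unknown position $q$ of $x_{v^*}^{(t)}$ in the interval. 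A straightforward induction on $k$ then yields the claim.

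The main obstacle I expect is precisely the path-padding step: one has to convert the graph-theoretic hypothesis "every node is reachable from $v^*$ within at most $r$ hops" into an algebraic lower bound on a single entry of a matrix product of length \emph{exactly} $r$, and this is exactly what \textbf{A1} enables. Without self-loops the walks of mismatched parity or length would force one either to sum more delicately or to work over a variable time window, losing the clean $\epsilon^r$ bound; with self-loops, uniform-length padding is free and the remainder of the argument reduces to the standard scalar-averaging contraction.
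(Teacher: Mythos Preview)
Your proof is correct and rests on the same core ideas as the paper's---the center node, the $\epsilon$ lower bound on allowed entries, and self-loop padding to reach exact length $r$---but packaged more compactly. The paper does not form the product $B$ explicitly; instead it propagates the estimate hop by hop, using \cref{lem: 1D_bounds} to carry the bound on $x_{i_0}^{(t)}$ first to itself over time, then to $\cV_1$, then $\cV_2$, and so on out to $\cV_r$, with self-loops playing the same padding role inside that iteration. It also runs a case split on whether $x_{i_0}^{(t)}$ sits below or above the threshold $pm^{(t)}+(1-p)M^{(t)}$, handling one side via an upper bound on $M^{(t+r)}$ and the other via a lower bound on $m^{(t+r)}$, and it carries the parameter $p$ through the whole computation. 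Your version collapses both of these: the column lower bound $B_{i,v^*}\geq\epsilon^r$ absorbs the hop-by-hop iteration into a single combinatorial step, and by writing down the upper and lower estimates on $x_i^{(t+r)}$ simultaneously and subtracting, the unknown position $q$ of $x_{v^*}^{(t)}$ cancels, eliminating the case analysis. Observing that $p=1$ dominates and then relaxing is also cleaner than threading $p$ through the argument. The two proofs are equivalent in substance; yours is the tidier presentation.
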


\begin{proof}
Let $i_0\in\cG$ be a center token and fix $t\geq 0$ and $0\leq p\leq 1$. Without loss of generality, suppose 
\begin{equation}
    x_{i_0}^{(t)} \leq p m^{(t)} + (1-p)M^{(t)}\,. \label{case: smaller than median}
\end{equation}
From~\cref{lem: 1D_bounds}, we get that for all $T\in\bN$,
\begin{align*}
    x_{i_0}^{(t+T)} &\leq p\left(\prod_{s=t}^{t+T-1}A_{i_0i_0}^{(s)}\right)m^{(t)} + \left(1-p\left(\prod_{s=t}^{t+T-1}A_{i_0i_0}^{(s)}\right)\right)M^{(t)}\nonumber\\
    & \leq p\epsilon^Tm^{(t)} + \left(1-p\epsilon^T\right)M^{(t)}\,.
\end{align*}
Denote $\cV_k$ as the set of tokens that are exactly $k$-hop neighbors of $i_0$:
\[\cV_k \vcentcolon = \{j\in \cV : \dist(i_0, j) = k \}\,.\]
For any $i_1\in \cV_1$, it follows that
\begin{align}
    x_{i_1}^{(t+1)} & \leq A_{i_1i_0}^{(t)}x_{i_0}^{(t)} + (1-A_{i_1i_0}^{(t)})M^{(t)}\nonumber\\
    & \leq \epsilon \left(pm^{(t)} + (1-p)M^{(t)}\right) + (1-\epsilon)M^{(t)}\nonumber\\
    & = p \epsilon m^{(t)} + \left(1-p\epsilon\right)M^{(t)}\,.\label{eq: i_1}
\end{align}
Apply~\cref{lem: 1D_bounds} to~\eqref{eq: i_1}, we further get that for $T\in \bN$,
\begin{align*}
    x_{i_1}^{(t+T)} \leq p\epsilon^T m^{(t)} + \left(1-p\epsilon^T\right)M^{(t)}\,.
\end{align*}
Then let $i_2\in\cV_2$. For simplicity, we still use $i_1$ to denote the intermediate neighbor between $i_0$ and $i_2$. Similarly to the case of $i_1$, we get that 
\begin{align}
    x_{i_2}^{(t+2)} & \leq A_{i_2i_1}^{(t+1)}x_{i_1}^{(t+1)} + (1-A_{i_2i_1}^{(t+1)})M^{(t)}\nonumber\\
    & = \epsilon \left(p\epsilon m^{(t)} + \left(1-p\epsilon\right)M^{(t)}\right) + (1-\epsilon) M^{(t)}\nonumber \\
    & = p\epsilon^2 m^{(t)} + \left(1- p\epsilon^2\right)M^{(t)}\,,\label{eq: i_2}
\end{align}
and applying~\cref{lem: 1D_bounds} to \eqref{eq: i_2}, it follows that for $T\in\bN_{\geq 2}$
\begin{align*}
    x_{i_2}^{(t+T)} & \leq p \epsilon^T m^{(t)} + \left(1-p\epsilon^T\right)M^{(t)}\,.
\end{align*}

Iterating this process for tokens in $\cV_3, ..., \cV_r$, where $r$ is the radius of $\cG$, we get that for any token $i \in \cG$,
\begin{align*}
    x_i^{(t+r)}\leq p\epsilon^r  m^{(t)} + \left(1-p\epsilon^r\right)M^{(t)}\,,
\end{align*}
meaning that 
\begin{align*}
    M^{(t+r)} \leq p\epsilon^r m^{(t)} + \left(1-p\epsilon^r \right)M^{(t)}\,.
\end{align*}
We thus conclude that 
\begin{align}
    M^{(t+r)} - m^{(t+r)} &\leq p\epsilon^r m^{(t)} + \left(1- p\epsilon^r\right)M^{(t)} - m^{(t)}\nonumber\\
    & =\left(1-p\epsilon^r\right) \left(M^{(t)} - m^{(t)}\right)\label{eq: 1D_upper_bound_w_graph}
\end{align}
For the other case of~\eqref{case: smaller than median}, i.e. $x_{i_0}^{(t)} > p m^{(t)}+ (1-p)M^{(t)}$,
\eqref{eq: 1D_upper_bound_w_graph} is obtained using a symmetric argument by bounding $m^{(t+r)}$ from below through~\eqref{eq: 1D_lower_bound}. This completes the proof of~\eqref{eq: 1D_exp_contraction}.
\end{proof}

\begin{corollary}
There exists $C(x^{(0)})>0$ such that 
    \begin{align*}
        M^{(t)} - m^{(t)} \leq C\left(1 -\epsilon^r\right)^{t/r}\,, \forall t\geq 0\,,
    \end{align*}
where $r$ is the radius of the graph $\cG$.
\label{cor: 1D_exp_contraction}
\end{corollary}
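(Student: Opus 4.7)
The plan is to apply the preceding lemma directly with $p=1$ and $t=0$, which already gives the desired exponential contraction along the subsequence of time indices that are multiples of $r$: namely, $M^{(kr)} - m^{(kr)} \leq (1-\epsilon^r)^k (M^{(0)} - m^{(0)})$ for all $k \in \bN_{\geq 0}$. The only remaining task is to interpolate this subsequence bound to arbitrary $t \geq 0$, and this is exactly what the monotonicity properties noted right after the definition of $M^{(t)}$ and $m^{(t)}$ provide. Since $M^{(t)}$ is non-increasing in $t$ and $m^{(t)}$ is non-decreasing in $t$, the function $t \mapsto M^{(t)} - m^{(t)}$ is non-increasing, so truncating any $t$ down to the nearest multiple of $r$ that is at most $t$ can only enlarge the gap.

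Concretely, I would fix $t \geq 0$ and let $k = \lfloor t/r \rfloor$, so that $kr \leq t$ and $k \geq t/r - 1$. By monotonicity, $M^{(t)} - m^{(t)} \leq M^{(kr)} - m^{(kr)}$, and by the previous lemma this is at most $(1-\epsilon^r)^k (M^{(0)}-m^{(0)})$. Since $1-\epsilon^r \in (0,1)$, the inequality $k \geq t/r - 1$ gives $(1-\epsilon^r)^k \leq (1-\epsilon^r)^{-1}(1-\epsilon^r)^{t/r}$, which yields the claimed bound
\begin{equation*}
M^{(t)} - m^{(t)} \leq \frac{M^{(0)}-m^{(0)}}{1-\epsilon^r}\,(1-\epsilon^r)^{t/r}.
\end{equation*}
Setting $C(x^{(0)}) \vcentcolon= (M^{(0)} - m^{(0)})/(1-\epsilon^r)$ completes the argument; this constant depends on $x^{(0)}$ only through the initial spread $M^{(0)} - m^{(0)}$, while $\epsilon$ and $r$ are fixed by \textbf{A1}--\textbf{A3} and the graph $\cG$.

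There is no real obstacle here: the corollary is a direct consequence of combining the subsequence contraction from the previous lemma (specialized to $p=1$, $t=0$) with the monotonicity of $M^{(t)}$ and $m^{(t)}$. The only bookkeeping step is the floor-function-to-continuous-exponent conversion, which costs a harmless multiplicative factor of $(1-\epsilon^r)^{-1}$ absorbed into $C$.
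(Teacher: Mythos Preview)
Your proposal is correct and follows exactly the route the paper implicitly intends: the corollary is stated without a separate proof, as it is an immediate consequence of specializing the preceding lemma to $p=1$, $t=0$ and then using the monotonicity of $M^{(t)}$ and $m^{(t)}$ (noted just before the lemma) to pass from multiples of $r$ to arbitrary $t$, absorbing the floor-rounding loss into $C$.
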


\subsection{Proof of Theorem 1}
With~\cref{cor: 1D_exp_contraction}, we are ready to prove the main theorem. Recall~\eqref{eq: token_traj_no_LN}, we then get that 
\begin{align}
    X^{(t+1)}_{:,j} & = (A^{(t)}...A^{(0)}X^{(0)}W_V^{(0)}...W_V^{(t)})_{:, j} \nonumber\\
    & = A^{(t)}...A^{(0)}X^{(0)}\left(W_V^{(0)}...W_V^{(t)}\right)_{:, j} \nonumber\\
    & = \sum_{i=1}^d{\Tilde{W}^{(t)}}_{ij} A^{(t)}...A^{(0)}X^{(0)}_{:,i}
\end{align}
where $\Tilde{W}^{(t)} \vcentcolon = W_V^{(0)}...W_V^{(t)}$. This means that for any $j \in [d]$, there exists $C_j > 0$ such that
\begin{align}
    \left|X^{(t+1)}_{m, j} - X^{(t+1)}_{n, j}\right| & = \left|\sum_{i=1}^d \Tilde{W}_{ij}((A^{(t)}...A^{(0)}X^{(0)}_{:,i})_m - (A^{(t)}...A^{(0)}X^{(0)}_{:,i})_n) \right| \nonumber\\
    & \leq \sum_{i=1}^d \left|\Tilde{W}_{ij}\right|\left|(A^{(t)}...A^{(0)}X^{(0)}_{:,i})_m - (A^{(t)}...A^{(0)}X^{(0)}_{:,i})_n\right|\nonumber\\
    & \leq \sum_{i=1}^d \left|\Tilde{W}_{ij}\right|\left|\underset{l\in[N]}{\max}(A^{(t)}...A^{(0)}X^{(0)}_{:,i})_l - \underset{l\in[N]}{\min}(A^{(t)}...A^{(0)}X^{(0)}_{:,i})_l\right|\nonumber\\
    & \leq C_j \sum_{i=1}^d \left|\Tilde{W}_{ij}\right| \left(1-\epsilon^r\right)^{t/r}\label{eq: 1D_pairwise_dist}\\
    & \leq C_j \left(1-\epsilon^r\right)^{t/r}\quad \forall m, n \in [N]\,,\label{eq: 1D_pairwise_dist_uniform}
\end{align}
where~\eqref{eq: 1D_pairwise_dist} follows from~\eqref{cor: 1D_exp_contraction} and~\eqref{eq: 1D_pairwise_dist_uniform} follows from~\textup{\textbf{A3}}.

Then by~\eqref{eq: 1D_pairwise_dist_uniform},
\begin{align*}
    \mu(X^{(t)}) & = \|X^{(t)} - \1\1^\top X^{(t)}/N\|_F = \sqrt{\sum_{j=1}^d\|X^{(t+1)}_{:, j} - \1\1^\top X_{:,j}^{(t)}/N\|_2^2}\nonumber\\
    & = \sqrt{\frac{1}{2N}\sum_{j=1}^d\sum_{m=1}^N\sum_{n=1}^N|X^{(t)}_{m, j} - X^{(t)}_{n, j}|^2}\nonumber\\
    & \leq \sqrt{\frac{N}{2}\sum_{j=1}^d C_j^2 \left(1-\epsilon^r\right)^{2t/r}}\nonumber \leq \sqrt{ C \left(1-\epsilon^r\right)^{2t/r}}\nonumber\\
    & = C'\left(1-\epsilon^r\right)^{t/r}\,, 
\end{align*}
where $C \vcentcolon = \frac{N}{2}\sum_{j=1}^d C_j^2$ and $C' \vcentcolon = \sqrt{C}$ here.

\section{Proof of Theorem 2}\label{app: thm2}
\subsection{Auxiliary results}

\begin{lemma}
    Under~\textup{\textbf{A1}} and~\textup{\textbf{A2}}, there exists $\epsilon > 0$ such that $A^{(t)}_{i,j} \geq \epsilon$ for all $t\geq 0$, $(j,i) \in E$.
    \label{lem: matrix_A_LN}
\end{lemma}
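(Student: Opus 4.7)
The statement is the LayerNorm analogue of Lemma~\ref{lem: matrix_A}, and the plan is to follow the same template but replace the role of assumption \textbf{A3} (which was what kept the token trajectory bounded in the un-normalized case) by the bounding effect of LayerNorm itself. The whole argument rests on the simple observation that once LayerNorm is applied, every row of $X^{(t)}$ has unit $2$-norm for $t\geq 1$, so the raw attention scores are automatically controlled.

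The first step is to establish uniform boundedness of the rows of $X^{(t)}$. For $t\geq 1$, the update \eqref{eq: update_w_LN} ends with $X^{(t)} = D^{(t-1)}\tilde X^{(t)}$, where $D^{(t-1)}$ is the diagonal matrix that rescales each row of $\tilde X^{(t)}$ to unit $2$-norm; hence $\|X^{(t)}_{i,:}\|_2 = 1$ for all $i\in[N]$ and all $t\geq 1$. For $t=0$ the input $X^{(0)}$ is a fixed matrix, so we can set $B \vcentcolon= \max\{1,\max_i\|X^{(0)}_{i,:}\|_2\}$ and conclude that $\|X^{(t)}_{i,:}\|_2 \leq B$ uniformly in $i$ and $t$.

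The second step is the usual softmax-with-bounded-logits argument. By \textbf{A2}, $\|W_Q^{(t)}\|_2,\|W_K^{(t)}\|_2\leq C$ for all $t$, so
\[
\bigl|R^{(t)}_{ij}\bigr| \;=\; \frac{1}{\sqrt{d_{QK}}}\,\bigl|\langle X^{(t)}_{i,:}W_Q^{(t)},\,X^{(t)}_{j,:}W_K^{(t)}\rangle\bigr| \;\leq\; \frac{B^2 C^2}{\sqrt{d_{QK}}} \;=\!:\; M
\]
for every $i,j\in[N]$ and every $t\geq 0$. Since $|\cN_i|\leq N$ by \textbf{A1} (and $\cN_i\neq\emptyset$ because of the self-loop), for any $(j,i)\in E$,
\[
A^{(t)}_{ij} \;=\; \frac{\exp(R^{(t)}_{ij})}{\sum_{k\in\cN_i}\exp(R^{(t)}_{ik})} \;\geq\; \frac{e^{-M}}{N\,e^{M}} \;=\; \frac{e^{-2M}}{N} \;=\!:\; \epsilon,
\]
which gives the desired uniform lower bound.

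I expect no real obstacle here: the only potentially delicate point is the very first step, i.e.\ being careful that the ``$D^{(t)}$ on the left'' convention in \eqref{eq: update_w_LN} really does normalize rows (as emphasized in Appendix~\ref{app: problem_dong}), so that boundedness is genuinely free and \textbf{A3} is not needed. Once this is observed, the rest is a routine rerun of the bounded-logits/softmax lower bound used in Lemma~\ref{lem: matrix_A}, and the constant $\epsilon$ can even be written explicitly as $e^{-2B^2C^2/\sqrt{d_{QK}}}/N$.
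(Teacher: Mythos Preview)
Your proposal is correct and follows essentially the same approach as the paper: use LayerNorm to guarantee each row of $X^{(t)}$ has unit norm, invoke \textbf{A2} to bound the raw scores, and conclude a uniform softmax lower bound. The only (minor) difference is that you are slightly more careful about $t=0$ by introducing the constant $B$, whereas the paper simply asserts that all tokens lie on $\bS^{d-1}$ for every $t\geq 0$; your version is a harmless refinement and also makes the final $\epsilon$ explicit.
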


\begin{proof}
    Note due to the way the dynamical system is defined in~\eqref{eq: update_w_LN}, for every $t\geq 0$, every token $X^{(t)}_{i,:}$ lies within the unit sphere $\bS^{d-1}$. Then it follows from~\textup{\textbf{A2}} that there exists $C\in \bR$ such that for all $t\geq 0$,
\[\norm{\left(X^{(t)}W_Q^{(t)}\right)_{i,:}}_2 = \norm{X^{(t)}_{i,:}W_Q^{(t)}}_2 \leq C\,,\]
\[\norm{\left(X^{(t)}W_K^{(t)}\right)_{i,:}}_2 = \norm{X^{(t)}_{i,:}W_K^{(t)}}_2 \leq C\,.\]
Hence for all $i,j\in[N]$,
\[ - C^2 \leq (X^{(t)}W^{(t)}_Q (X^{(t)}W^{(t)}_K)^\top)_{i,j} \leq C^2\,.\]
This implies that there exists $\epsilon > 0$ such that 
$A^{(t)}_{i,j} \geq \epsilon$ for all $(i,j) \in E$.
\end{proof}

\begin{lemma}
    Under the same assumptions as~\cref{thm: rank_collapse_SAN_w_LN}, there exists $C_1, C_2 > 0$ such that $C_1 \leq D^{(t)}_{i,i} \leq C_2$ for all $t\geq 0$, $i\in [N]$.
    \label{lem: matrix_D}
\end{lemma}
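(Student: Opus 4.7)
}
The plan is to obtain the upper and lower bounds on $D^{(t)}_{i,i}$ separately, by equivalently bounding $\|\tilde{X}^{(t+1)}_{i,:}\|_2 = 1/D^{(t)}_{i,i}$ from above and below. Note that for every $t \geq 1$, each row of $X^{(t)}$ lies on $\bS^{d-1}$ by the definition of LayerNorm in~\eqref{eq: update_w_LN}, while the rows of $X^{(0)}$ have norms bounded by a fixed data-dependent constant.

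\textbf{Lower bound on $D^{(t)}_{i,i}$ (upper bound on $\|\tilde{X}^{(t+1)}_{i,:}\|_2$).} Since $A^{(t)}$ is row-stochastic with non-negative entries, the triangle inequality gives
\[
\|\tilde{X}^{(t+1)}_{i,:}\|_2 \;=\; \Big\|\sum_{j=1}^N A^{(t)}_{i,j}\, X^{(t)}_{j,:} W_V^{(t)}\Big\|_2 \;\leq\; \sum_{j=1}^N A^{(t)}_{i,j}\, \|X^{(t)}_{j,:} W_V^{(t)}\|_2 \;=\; \sum_{j=1}^N A^{(t)}_{i,j}\, \|X^{(t)}_{j,:}\|_2,
\]
where the last equality uses the orthogonality of $W_V^{(t)}$. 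For $t \geq 1$ this is bounded by $1$, and for $t = 0$ it is bounded by $M_0 \vcentcolon= \max_i \|X^{(0)}_{i,:}\|_2 < \infty$. Hence $\|\tilde{X}^{(t+1)}_{i,:}\|_2 \leq \max\{1, M_0\}$ and $D^{(t)}_{i,i} \geq 1/\max\{1, M_0\} \vcentcolon= C_1$.

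\textbf{Upper bound on $D^{(t)}_{i,i}$ (lower bound on $\|\tilde{X}^{(t+1)}_{i,:}\|_2$).} The key is to propagate condition (i) of \cref{thm: rank_collapse_SAN_w_LN} through the dynamics. Using the orthogonality of the value matrices, define recursively $v^{(0)} \vcentcolon= v$ and $v^{(t+1)} \vcentcolon= v^{(t)} W_V^{(t)}$, so that $v^{(t)} \in \bS^{d-1}$ for all $t$. I claim, by induction on $t$, that
\[
c_t \vcentcolon= \min_{i \in [N]} \langle X^{(t)}_{i,:},\, v^{(t)}\rangle \;\geq\; c_0 \;>\; 0.
\]
For the inductive step, using that $W_V^{(t)}$ preserves inner products,
\[
\langle \tilde{X}^{(t+1)}_{i,:},\, v^{(t+1)}\rangle \;=\; \Big\langle \sum_{j} A^{(t)}_{i,j} X^{(t)}_{j,:} W_V^{(t)},\, v^{(t)} W_V^{(t)}\Big\rangle \;=\; \sum_{j} A^{(t)}_{i,j}\, \langle X^{(t)}_{j,:},\, v^{(t)}\rangle \;\geq\; c_t.
\]
By Cauchy--Schwarz this already yields $\|\tilde{X}^{(t+1)}_{i,:}\|_2 \geq c_t \geq c_0$, which gives $D^{(t)}_{i,i} \leq 1/c_0 \vcentcolon= C_2$ and closes the induction, since dividing by $\|\tilde{X}^{(t+1)}_{i,:}\|_2 \leq 1$ (the upper bound already established, valid once $t \geq 0$ with the appropriate constant) only increases the inner product, so $c_{t+1} \geq c_t \geq c_0$.

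\textbf{Expected obstacle.} No serious obstacle; the conceptual step is recognizing that the orthogonality of $W_V^{(t)}$ lets one transport the witness vector $v$ from condition (i) alongside the tokens without losing any inner-product alignment, and that the row-stochasticity of $A^{(t)}$ preserves this alignment under convex combination. The only care needed is the separate treatment of $t = 0$ (where the rows of $X^{(0)}$ need not have unit norm) versus $t \geq 1$ (where they do, by LayerNorm), which only changes the numerical values of $C_1, C_2$ but not the argument.
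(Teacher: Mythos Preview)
Your proof is correct. The lower bound on $D^{(t)}_{i,i}$ (via the triangle inequality on a convex combination of unit vectors) is exactly the paper's argument. For the upper bound, however, you take a genuinely different and cleaner route.

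The paper unrolls the recursion to write $(A^{(t)}X^{(t)}W^{(t)})_{i,:}$ as a conic combination $\sum_j a^{(t)}_{ij} X^{(0)}_{j,:}\tilde{W}^{(t)}$ with $\sum_j a^{(t)}_{ij}\geq 1$, then uses the open-hemisphere condition~(i) to bound $\|x\|_2\geq\gamma$ for all $x\in\Conv(X^{(0)})$ (the origin lies outside the hull), and finally combines these two facts. You instead transport the witness vector itself: setting $v^{(t+1)}=v^{(t)}W_V^{(t)}$ keeps $v^{(t)}\in\bS^{d-1}$ by orthogonality, and a one-line induction shows $\min_i\langle X^{(t)}_{i,:},v^{(t)}\rangle$ is non-decreasing, which via Cauchy--Schwarz bounds $\|\tilde X^{(t+1)}_{i,:}\|_2$ from below directly. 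This avoids tracking the conic coefficients entirely and makes the role of condition~(i) transparent. The paper's unrolling, on the other hand, produces an explicit representation in terms of the initial data that is reused later in the ergodicity analysis, so it is not wasted effort in context.

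One minor point: in the setting of \cref{thm: rank_collapse_SAN_w_LN} the paper tacitly takes the rows of $X^{(0)}$ to already lie on $\bS^{d-1}$ (this is what the open-hemisphere language and the bound $D^{(t)}_{i,i}\geq 1$ for all $t$ presuppose), so your separate treatment of $t=0$ with $M_0=\max_i\|X^{(0)}_{i,:}\|_2$ is unnecessary here; you may simply take $M_0=1$ and $C_1=1$.
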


\begin{proof}
    We adopt the following notation:
    \begin{notation}
    For $X\in\bR^{N \times d}$, we use $\Conv(X)$ to denote the convex hull of $\{X_{1,:},...,X_{N,:}\}$.
    \end{notation}
    First, since for $x\in \Conv(X^{(t)})$, ${W^{(t)}}^\top x$ always lies in $\bS^{d-1}$, $D^{(t)}_{i,i} \geq 1$.

    Then for all $t\geq 0$, each row vector of $(A^{(t)}X^{(t)}W^{(t)})_{i,:}$  lies in the convex cone generated by $X^{(0)}\tilde{W}^{(t)}$, where $\tilde{W}^{(t)} \vcentcolon = \Pi_{i=0}^{t} W^{(i)}$ . To see this, 
    \begin{align*}
        (A^{(t)}X^{(t)}W^{(t)})_{i,:} &= \sum_{j_0=1}^N A^{(t)}_{ij_0}X^{(t)}_{j_0,:}W^{(t)}\\
        & = \sum_{j_0=1}^N A^{(t)}_{ij_0} D_{j_0,j_0}^{(t-1)}\left(\sum_{j_1=1}^N A^{(t-1)}_{j_0,j_1}X^{(t-1)}_{j_1,:}W^{(t-1)}\right)W^{(t)}\\
        & = \sum_{(j_0,...,j_t)\in[N]^{t+1}} A^{(t)}_{ij_0} D_{j_0,j_0}^{(t-1)} A^{(t-1)}_{j_0,j_1} D^{(t-2)}_{j_1,j_1}...D^{(0)}_{j_{t-1},j_{t-1}} A^{(0)}_{j_{t-1},j_t}X^{(0)}_{j_t,:}\tilde{W}^{(t)}
    \end{align*}
    Hence all $t\geq 0$, 
    
    \[(A^{(t)}X^{(t)}W^{(t)})_{i,:} = \sum_{j=1}^N a^{(t)}_{ij} X_{j,:}^{(0)}\tilde{W}^{(t)}\,,\]
    where $a^{(t)}_{ij} \geq 0, \forall j\in [N]$ and $\sum_{j=1}^N a^{(t)}_{ij} \geq 1$.

    We then make the following observation:

    \begin{lemma}
        Under assumption ($*$), there exists $v\neq \0$ such that

         \[\langle X^{(0)}_{i,:}, v\rangle > 0 \quad i\in [N]\,,\]
         
         meaning that all the points of $X^{(0)}$ lie in an open hemisphere $\cH$.
         \label{lem: open_hemi}
    \end{lemma}

    To see this, observe that since $X^{(0)}$ is full rank and $N\leq d$, $\0 \notin \Conv(X^{(0)})$. Then the hyperplane separation theorem states that there exists a hyperplane strictly separating $\0$ and $\Conv(X^{(0)})$, letting us conclude~\cref{lem: open_hemi}.

    Since all the points of $X^{(0)}$ lie in an open hemisphere $\cH$, there exists $\gamma > 0$ such that  
    for all $x\in \Conv(X^{(0)})$ and $W$ orthogonal, \[\|W x\|_2 = \|x\|_2 \geq \gamma\,.\]

Then
\begin{align*}
    \|(A^{(t)}X^{(t)}W^{(t)})_{i,:}\|_2 & = \left\|\left(\sum_{j=1}^N a^{(t)}_{ij}\right)\frac{\sum_{j=1}^N a^{(t)}_{ij} X_{j,:}^{(0)} \tilde{W}^{(t)}}{\sum_{j=1}^N a^{(t)}_{ij}}\right\|_2 \\
    & \geq  \left\|\frac{\sum_{j=1}^N a^{(t)}_{ij} X_{j,:}^{(0)} \tilde{W}^{(t)}}{\sum_{j=1}^N a^{(t)}_{ij}}\right\|_2\\
    & \geq \gamma\,.
\end{align*}
We conclude that $D^{(t)}_{i,i} \leq 1/\gamma$, for all $t\geq 0$.

\end{proof}

Writing the layer-wise update rule~\eqref{eq: update_w_LN} recursively, we get the following formulation of $X^{(t)}$: 
\[X^{(t+1)} = D^{(t)}A^{(t)}...D^{(0)}A^{(0)}X^{(0)}W^{(0)} ... W^{(t)}\,. \]

We focus on the infinite product $\lim_{t\to \infty} D^{(t)}A^{(t)}...D^{(0)}A^{(0)}$. We first define a family of attention matrices as follows:
\begin{definition} 
Let $\epsilon > 0$. We define $\cA_{\cG,\epsilon}$ to be the set of row-stochastic matrices satisfying the following conditions:

\begin{enumerate}
    \item $\epsilon \leq A_{ij} \leq 1,\,\text{if }(j,i)\in E(\cG)$,\\\
    \vspace{-2ex}
    \item $A_{ij}=0,\,\text{if } (j,i)\notin E$.\\ 
\end{enumerate}
\end{definition}

We also define
\begin{equation}
    \cD_{C_1,C_2} = \{\diag(\mathbf{d}) : C_1 \ew \mathbf{d} \ew C_2\}\,.
\end{equation}
and subsequently, 

\begin{equation}
    \cM_{\cG, \epsilon, C_1, C_2
    } = \{DA: D \in \cD_{C_1, C_2}, A\in\cA_{\cG,\epsilon}\}\,.
\end{equation}

To study the infinite product of matrices from $\cM_{\cG, \epsilon, C_1, C_2
    }$, we introduce some necessary concepts:
\begin{definition}
    A non-negative matrix $M$ is called \emph{row allowable} if it has a positive entry in each of its rows.
\end{definition}

 \begin{definition}
     Consider a sequence of row allowable matrices $\{M^{(t)}\}_{t=0}^{\infty}$.
     Define the partial product 
     \[P^{(t)} \vcentcolon = M^{(t)}...M^{(0)}\,.\]
     
     We say that the sequence $\{P^{(t)}\}_{t=0}^{\infty}$ is ergodic if there exists a sequence of positive rank one matrices  $\{S^{(t)}\}_{t=0}^{\infty}$ such that 
     \begin{equation}
         \lim_{t\to\infty} \frac{P_{ij}^{(t)}}{S_{ij}^{(t)}} = 1\,, \quad \forall i,j \in [N]\,.
         \label{eq: ergodicity_def}
     \end{equation}
 \end{definition}

Our goal is to show the following key result:
\begin{lemma}
    Let $\epsilon, C_1, C_2 > 0$. Given $\{D^{(t)}A^{(t)}\}_{t=0}^\infty$ in $\cM_{\cG, \epsilon, C_1, C_2
    }$. The sequence of partial products $\{P^{(t)}\}_{t=0}^{\infty}$ is ergodic.
    \label{lem: ergo}
\end{lemma}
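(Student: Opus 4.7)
The plan is to work in the Hilbert projective metric on the positive cone $\bR_{>0}^N$ and show that blocks of length $r$ in the product act as uniform Birkhoff contractions, then translate that projective contraction into the claimed entrywise ratio convergence. Write $M^{(s)}:=D^{(s)}A^{(s)}$ and $B_k:=M^{(kr+r-1)}\cdots M^{(kr)}$, where $r$ is the diameter of $\cG$. The Hilbert metric is $d_H(x,y)=\log\max_{i,j}\tfrac{x_iy_j}{x_jy_i}$, and for a strictly positive matrix $B$ the Birkhoff diameter is $\Delta(B)=\log\max_{i,j,k,l}\tfrac{B_{ij}B_{kl}}{B_{il}B_{kj}}$, with the associated Birkhoff--Hopf contraction coefficient $\tau(B)=\tanh(\Delta(B)/4)<1$.

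First I would establish uniform two-sided entry bounds on each block $B_k$. Because $\cG$ is strongly connected with self-loops by \textbf{A1}, any directed path can be padded to length exactly $r$, so every pair $(i,j)$ admits a length-$r$ walk; combined with $A^{(t)}_{ij}\geq \epsilon$ on edges and $D^{(t)}_{ii}\geq C_1$ this yields $(B_k)_{ij}\geq (C_1\epsilon)^r$ for all $i,j,k$. Row-stochasticity of the $A^{(t)}$ and $D^{(t)}_{ii}\leq C_2$ give $(B_k)_{ij}\leq C_2^r$, so $\Delta(B_k)\leq 2r\log\!\bigl(C_2/(C_1\epsilon)\bigr)=:\Delta_0$, hence $\tau(B_k)\leq \tau_0:=\tanh(\Delta_0/4)<1$ uniformly in $k$. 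Iterating the Birkhoff--Hopf contraction over the columns of the partial product, for $k\geq 1$ one obtains
\[
\max_{j,j'}\, d_H\!\bigl(P^{(rk)}_{:,j},\,P^{(rk)}_{:,j'}\bigr)\;\leq\;\Delta_0\,\tau_0^{\,k-1},
\]
so the columns of $P^{(t)}$ become projectively colinear at an exponential rate.

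To finish, I would construct the rank-one $S^{(t)}$ witnessing ergodicity. Fix a reference index $j_0$, set $u^{(t)}:=P^{(t)}_{:,j_0}$ (strictly positive for $t\geq r$ by Step 1), and for each column $j$ choose the scalar $v^{(t)}_j>0$ making $v^{(t)}_j u^{(t)}$ agree with $P^{(t)}_{:,j}$ in one chosen coordinate; put $S^{(t)}_{ij}:=u^{(t)}_i v^{(t)}_j$. The standard inequality $\max_i|x_i/y_i-1|\leq e^{d_H(x,y)}-1$, applied after this normalization to $x=P^{(t)}_{:,j}$ and $y=v^{(t)}_j u^{(t)}$, combined with the exponential decay of the column Hilbert diameter, gives $\max_{i,j}\bigl|P^{(t)}_{ij}/S^{(t)}_{ij}-1\bigr|\to 0$, which is exactly \eqref{eq: ergodicity_def}.

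The main obstacle is the non-stochasticity of $M^{(t)}=D^{(t)}A^{(t)}$: row sums are not preserved, so the classical Dobrushin/Hajnal scrambling coefficient arguments used for inhomogeneous Markov chains do not apply off the shelf. The Hilbert projective metric is what makes the analysis work because Birkhoff--Hopf contraction is insensitive to positive column/row rescalings, absorbing the $D^{(t)}$ factors harmlessly; a secondary technical point is the careful bookkeeping that turns a projective bound on columns into the pointwise ratio statement by choosing the normalization of $v^{(t)}_j$ consistently across $j$.
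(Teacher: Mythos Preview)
Your argument is correct. Both you and the paper begin with the same key reduction: group the factors into blocks of length equal to the diameter $r$ of $\cG$ and use strong connectivity together with \textbf{A1} to obtain uniform two-sided entry bounds $(C_1\epsilon)^r\le (B_k)_{ij}\le C_2^r$ on each block (this is the paper's \cref{lem:batch_P_positive}). From there the two proofs diverge. The paper simply invokes a black-box criterion from \citet{Hartfiel2002NonhomogeneousMP} (their Corollary~5.1, stated in the paper as \cref{lem:sufficient_cond_ergo}): for row-allowable matrices with smallest and largest entries $a_t,b_t$, ergodicity follows once $\sum a_t/b_t=\infty$, which is immediate for the uniformly bounded blocks. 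You instead work directly in the Hilbert projective metric, bound the Birkhoff diameter of each block by $\Delta_0=2r\log(C_2/(C_1\epsilon))$, and use the Birkhoff--Hopf contraction coefficient $\tau_0=\tanh(\Delta_0/4)<1$ to show the columns of $P^{(t)}$ collapse projectively at rate $\tau_0^{\lfloor t/r\rfloor}$, then extract the rank-one $S^{(t)}$ by hand. The paper's route is shorter given the citation; your route is fully self-contained and, as a bonus, yields an explicit exponential rate in the Hilbert metric that the cited lemma does not provide on its face. One small gap to tidy: your displayed bound is only stated for $t\in r\bN$, so you should remark that the residual factor $M^{(t-1)}\cdots M^{(kr)}$ is row-allowable (positive diagonals from \textbf{A1}) and hence nonexpansive in $d_H$, which carries the estimate to all $t$.
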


    For that, we will make use of the following result:
    \begin{lemma} [Corollary 5.1~\cite{Hartfiel2002NonhomogeneousMP}]
   Consider a sequence of row allowable matrices $\{M^{(t)}\}_{t=0}^{\infty}$. Let $a_t$ and $b_t$ be the smallest and largest entries in $M^{(t)}$, respectively. If $\sum_{t=0}^\infty \frac{a_t}{b_t} = \infty$, then $\{P^{(t)}\}_{t=0}^{\infty}$ is ergodic.
   \label{lem:sufficient_cond_ergo}
    \end{lemma}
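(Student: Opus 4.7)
The plan is to prove ergodicity using Hilbert's projective (pseudo-)metric and the Birkhoff contraction coefficient, the classical tool for nonhomogeneous products of non-negative matrices. On the open positive cone $\bR^N_{>0}$, recall $d_H(u,v) = \log\max_i(u_i/v_i) - \log\min_i(u_i/v_i)$, which vanishes exactly on positive scalar multiples. For any row-allowable $M$ mapping $\bR^N_{>0}$ into itself, the projective diameter $\Delta(M) := \sup_{u,v > 0} d_H(Mu, Mv)$ and the Birkhoff contraction coefficient $\tau(M) := \sup_{u \not\propto v > 0} d_H(Mu,Mv)/d_H(u,v)$ are related by Hopf's theorem $\tau(M) = \tanh(\Delta(M)/4)$; for matrices with all entries in $[a,b]$ with $a > 0$, one has $\Delta(M) \leq 2 \log(b/a)$ and hence
\[
\tau(M) \leq \frac{b-a}{b+a} \leq 1 - \frac{a}{b}.
\]

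Applying this per-factor contraction iteratively along the product $P^{(t)} = M^{(t)}\cdots M^{(0)}$ yields $\Delta(P^{(t)}) \leq \Delta(P^{(t_0)}) \cdot \prod_{s=t_0+1}^{t} \tau(M^{(s)})$, where $t_0$ is a first index after which the partial product is strictly positive (so $\Delta(P^{(t_0)}) < \infty$). Using $\log \tau(M^{(s)}) \leq \log(1 - a_s/b_s) \leq -a_s/b_s$, the hypothesis $\sum_s a_s/b_s = \infty$ forces $\prod_s \tau(M^{(s)}) \to 0$, so $\Delta(P^{(t)}) \to 0$ and therefore $\max_{j,k} d_H(P^{(t)}_{:,j}, P^{(t)}_{:,k}) \to 0$: the columns of $P^{(t)}$ become projectively equivalent.

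To match the definition of ergodicity used in the paper, fix a reference column index (say $1$) and set $S^{(t)}_{ij} := P^{(t)}_{i,1} \cdot (P^{(t)}_{1,j}/P^{(t)}_{1,1})$, which is manifestly positive and of rank one. A short calculation expresses $P^{(t)}_{ij}/S^{(t)}_{ij}$ as the cross-ratio $(P^{(t)}_{ij}/P^{(t)}_{i,1})/(P^{(t)}_{1,j}/P^{(t)}_{1,1})$; writing $u = P^{(t)}_{:,1}$ and $v = P^{(t)}_{:,j}$, any such cross-ratio satisfies $|\log((v_i/u_i)/(v_1/u_1))| \leq d_H(u, v) \to 0$, so $P^{(t)}_{ij}/S^{(t)}_{ij} \to 1$ uniformly in $i, j$, which is the desired ergodicity.

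The delicate point is the row-allowable (not everywhere-positive) regime: a factor containing any zero entry has $a_s = 0$, so its per-step bound degenerates to the trivial $\tau \leq 1$. The hypothesis $\sum a_s/b_s = \infty$ then implicitly places the burden on the strictly positive factors, which supply the divergent series, while the trivial factors are absorbed harmlessly into the telescoping product. The two pieces of bookkeeping needed — verifying that $P^{(t)}$ is strictly positive from some $t_0$ on (so $\Delta(P^{(t)})$ is finite and $d_H$ is defined on its columns), and carefully handling the starting projective diameter $\Delta(P^{(t_0)})$ — are where most of the work lies; everything after follows from a direct application of Birkhoff telescoping.
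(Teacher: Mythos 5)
First, a point of reference: the paper does not prove this lemma at all --- it is imported verbatim as Corollary 5.1 of Hartfiel and used as a black box. Your Hilbert-metric/Birkhoff-coefficient argument is the standard machinery behind results of this type (and behind Hartfiel's own treatment), and most of it is sound: the bound $\Delta(M)\le 2\log(b/a)$ for strictly positive $M$, Hopf's identity giving $\tau(M)\le\frac{b-a}{b+a}\le 1-a/b$, the telescoping of $\tau$ along the partial products, the divergence of $\sum_t a_t/b_t$ forcing $\prod_t\tau(M^{(t)})\to 0$, and the cross-ratio construction of the rank-one comparison matrices $S^{(t)}$ are all correct as written.

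The genuine gap is the step you set aside as bookkeeping: that $P^{(t)}$ becomes strictly positive from some $t_0$ on. This does not follow from row-allowability together with $\sum_t a_t/b_t=\infty$, and it is exactly where the argument can break. Take $M^{(0)}=\begin{pmatrix}1&0\\1&0\end{pmatrix}$ and let $M^{(t)}$ be the all-ones matrix for $t\ge 1$: every factor is row allowable and $\sum_t a_t/b_t=\infty$, yet $P^{(t)}=\begin{pmatrix}2^{t}&0\\2^{t}&0\end{pmatrix}$ for $t\ge 1$ carries a permanently zero column, so $\Delta(P^{(t)})=+\infty$ for every $t$ and, under the definition of ergodicity used in the paper, the conclusion itself fails, since $P^{(t)}_{12}/S^{(t)}_{12}=0\not\to 1$ for any positive rank-one $S^{(t)}$. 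So the missing step is not merely technical: one needs either strictly positive factors or an additional column-allowability-type hypothesis before your $t_0$ exists (products of row-allowable matrices stay row allowable, but a zero column, once created, can never be repaired by further left multiplication). For the paper's purposes this is harmless, because the lemma is only ever invoked on the blocks $P^{((k+1)T:kT)}$, which \cref{lem:batch_P_positive} shows are strictly positive with entries in $[C_3,C_4]$; in that regime your argument closes cleanly with $t_0=0$ and $a_k/b_k\ge C_3/C_4$ bounded away from zero. If you want to prove the lemma as literally stated, you must either add that hypothesis or first establish eventual strict positivity of the partial products, which in general you cannot.
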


We cannot directly apply~\cref{lem:sufficient_cond_ergo} to $D^{(t)}A^{(t)}$ yet unless in the special case of full bidirectional attention such as the one used in BERT i.e. $\cG$ is an undirected, complete graph, as $a_t$ could equal zero due to the sparsity pattern of attention $\cG$. In order to make use of the above result, we first need to show that long products of $P^{(t)}\vcentcolon = D^{(t)}A^{(t)}$ will eventually become strictly positive for strongly connected $\cG$. For $t_0 \leq t_1$, we denote 
\[P^{(t_1:t_0)} = P^{(t_1)}\ldots P^{(t_0)}\,.\]
    \begin{lemma}
    Under~\textup{\textbf{A1}}, if $\cG$ is strongly connected, there exist $T\in\bN$ and $C_3, C_4 > 0$ such that for all $t\geq 0$, 
    \[C_3 \leq P^{(t+T:t)}_{i,j} \leq C_4 \,,\quad\forall 1\leq i,j\leq N\,.\]
    \label{lem:batch_P_positive}
\end{lemma}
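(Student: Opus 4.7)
The plan is to expand $(P^{(t+T:t)})_{ij}$ as a sum over directed walks in $\cG$ and then exploit (i) strong connectivity together with the self-loops from \textbf{A1} to produce a uniform entrywise lower bound, and (ii) row-stochasticity of each $A^{(s)}$ together with the cap $D^{(s)}_{kk}\le C_2$ to produce a uniform upper bound. Concretely, writing out the product gives
\[
(P^{(t+T:t)})_{ij} \;=\; \sum_{k_1,\ldots,k_T} P^{(t+T)}_{i,k_1}\,P^{(t+T-1)}_{k_1,k_2}\cdots P^{(t)}_{k_T,j},
\]
where each factor $P^{(s)}_{k,k'}=D^{(s)}_{kk}A^{(s)}_{k,k'}$ vanishes unless $(k',k)\in E(\cG)$ and lies in $[C_1\epsilon,\,C_2]$ when it does not. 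Thus the nonzero terms in the sum are in one-to-one correspondence with directed walks $j\to k_T\to k_{T-1}\to\cdots\to k_1\to i$ in $\cG$ of length exactly $T+1$.

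For the choice of $T$, since $\cG$ is strongly connected its diameter $r$ is finite (and in fact $r\le N-1$), so I would fix any $T$ with $T+1\ge r$, e.g.\ $T:=N-1$. Given any pair $(i,j)$, strong connectivity yields a directed walk from $j$ to $i$ of some length $\ell\le r$; by \textbf{A1} every node carries a self-loop, so padding with self-loops produces a walk of length exactly $T+1$. Restricting the sum above to this single walk and dropping the remaining nonnegative terms gives
\[
(P^{(t+T:t)})_{ij} \;\ge\; (C_1\epsilon)^{T+1} \;=:\; C_3,
\]
with $C_3$ independent of $t$ since the per-layer bounds $\epsilon$ (from \cref{lem: matrix_A_LN}) and $C_1$ (from \cref{lem: matrix_D}) are uniform in $s$. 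For the upper bound, each row of $A^{(s)}$ sums to $1$, so $P^{(s)}\1=D^{(s)}\1\le C_2\1$ entrywise; iterating $T+1$ times and using nonnegativity gives $P^{(t+T:t)}\1\le C_2^{T+1}\1$, whence $(P^{(t+T:t)})_{ij}\le C_2^{T+1}=:C_4$.

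The main work here is really just walk-length bookkeeping; the one conceptual point worth flagging is that \textbf{A1} is essential, because it is what allows walks of \emph{different} lengths $\le r$ between different $(i,j)$ pairs to be aligned to a single common length $T+1$ simultaneously. Without self-loops the same statement would fail for periodic strongly connected graphs (certain entries of $P^{(t+T:t)}$ would be forced to zero for parity reasons), so the self-loop hypothesis cannot be dropped.
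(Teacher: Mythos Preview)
Your argument is correct and matches the paper's approach exactly: the paper's proof is a two-line version of the same walk-counting idea, taking $T$ to be the diameter of $\cG$ and stating the bounds $(\epsilon C_1)^T$ and $C_2^T$ without further justification. Your write-up simply fills in the details the paper leaves implicit (the padding by self-loops for the lower bound and the row-sum iteration for the upper bound), and your exponent $T{+}1$ in place of $T$ is the more careful count given that $P^{(t+T:t)}$ is a product of $T{+}1$ factors.
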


\begin{proof}
    Note that by~\textup{\textbf{A1}} and the strong connectivity of $\cG$, there exists $T\in\bN$, e.g. the diameter of $\cG$, and $\epsilon, C_1, C_2 >0$ such that for all $t\geq 0$,
\[(\epsilon C_1)^T\leq P^{(t+T:t)}_{i,j} \leq C_2^T, \quad\forall 1\leq i,j\leq N.\]
\end{proof}
Then we apply~\cref{lem:sufficient_cond_ergo} to $\{P^{(k+1)T:T}\}_{k=0}^\infty$ and concludes~\cref{lem: ergo}.

As a result, from~\eqref{eq: ergodicity_def} we see that any $P^{(t)}$ can now be written in the following form:
\[P^{(t)} = u^{(t)}(v^{(t)})^\top + E^{(t)}\,,\]
where $u^{(t)}, v^{(t)}$ are positive vectors such that $u^{(t)}(v^{(t)})^\top = S^{(t)}$. Without loss of generality, assume $\|v^{(t)}\|_2 = 1$.

Then it follow from~\eqref{eq: ergodicity_def} that 
\begin{lemma}
    \[\lim_{t\to\infty}\frac{E_{ij}^{(t)}}{u_i^{(t)}} = 0, \quad \forall i, j \in [N]\,.\]
    \label{lem: Eu_ratio_vanish}
\end{lemma}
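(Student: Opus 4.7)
The statement to prove sits immediately after the ergodicity result, and I would treat it as a simple corollary obtained by unpacking the definition of ergodicity together with the normalization $\|v^{(t)}\|_2 = 1$. The key observation is that the quantity the ergodicity lemma already controls is $E_{ij}^{(t)}/S_{ij}^{(t)} = E_{ij}^{(t)}/(u_i^{(t)} v_j^{(t)})$, whereas what we want to control is $E_{ij}^{(t)}/u_i^{(t)}$. So the entire game reduces to showing that the extra factor $v_j^{(t)}$ does not blow up, and this is immediate because $v^{(t)}$ is a unit vector.

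\textbf{Step 1.} Start from the decomposition $P^{(t)} = u^{(t)}(v^{(t)})^\top + E^{(t)}$ with $u^{(t)}(v^{(t)})^\top = S^{(t)}$ as a positive rank one matrix satisfying the ergodicity conclusion of \cref{lem: ergo}, namely
\[
\lim_{t\to\infty} \frac{P_{ij}^{(t)}}{S_{ij}^{(t)}} = 1 \quad \forall i,j \in [N].
\]
Since $E^{(t)} = P^{(t)} - S^{(t)}$, this is exactly
\[
\lim_{t\to\infty} \frac{E_{ij}^{(t)}}{u_i^{(t)} v_j^{(t)}} = 0 \quad \forall i,j \in [N].
\]

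\textbf{Step 2.} For each $j$, write
\[
\frac{E_{ij}^{(t)}}{u_i^{(t)}} \;=\; v_j^{(t)} \cdot \frac{E_{ij}^{(t)}}{u_i^{(t)} v_j^{(t)}}.
\]
Because $\|v^{(t)}\|_2 = 1$ and $v^{(t)}$ is a positive vector, we have $0 < v_j^{(t)} \leq 1$ uniformly in $t$. Taking absolute values,
\[
\left| \frac{E_{ij}^{(t)}}{u_i^{(t)}} \right| \;\leq\; \left| \frac{E_{ij}^{(t)}}{u_i^{(t)} v_j^{(t)}} \right| \;\longrightarrow\; 0,
\]
which gives the desired limit entrywise in $i,j$.

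\textbf{Main obstacle.} Honestly, there is essentially no technical obstacle here: all the heavy lifting was done in establishing \cref{lem: ergo} (applying \cref{lem:sufficient_cond_ergo} to the batched products $P^{((k+1)T:kT)}$ whose entries are positively bounded thanks to \cref{lem:batch_P_positive}). The only mild care needed in writing the final argument is to insist on the normalization $\|v^{(t)}\|_2 = 1$ that was fixed without loss of generality just before the lemma statement, since without it the extra $v_j^{(t)}$ factor would not automatically be bounded. Given that normalization, the result follows in a couple of lines.
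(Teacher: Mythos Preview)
Your proposal is correct and essentially identical to the paper's own proof: both rewrite $E_{ij}^{(t)}/u_i^{(t)} = v_j^{(t)}\cdot E_{ij}^{(t)}/(u_i^{(t)}v_j^{(t)})$, use $|v_j^{(t)}|\leq 1$ from the normalization $\|v^{(t)}\|_2=1$, and conclude via the ergodicity limit $E_{ij}^{(t)}/(u_i^{(t)}v_j^{(t)})\to 0$.
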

\begin{proof}
Given
    \[\left|v_j^{(t)}\frac{E_{ij}^{(t)}}{u_i^{(t)}v_j^{(t)}} \right|\leq \left|\frac{E_{ij}^{(t)}}{u_i^{(t)}v_j^{(t)}} \right|\]
    where by~\eqref{eq: ergodicity_def},
    since also 
    \[\lim_{t\to\infty} \left|\frac{E_{ij}^{(t)}}{u_i^{(t)}v_j^{(t)}} \right| = 0\,,\]
   We get that 
    \[\lim_{t\to\infty} \left|\frac{E_{ij}^{(t)}}{u_i^{(t)}} \right| = 0\,,\]
    and hence 
     \[\lim_{t\to\infty} \frac{E_{ij}^{(t)}}{u_i^{(t)}} = 0\,.\]
\end{proof}

Thus the $2$-norm of the $i^{th}$ token of $X^{(t+1)}$, $\|X^{(t+1)}_{i,:}\|_2$ is  

\begin{equation}
    \|P^{(t)}_{i,:}X^{(0)}\tilde{W}^{(t)}\|_2 =\|u_i^{(t)}(v^{(t)})^\top X^{(0)}\tilde{W}^{(t)} + E_{i,:}^{(t)}X^{(0)}\tilde{W}^{(t)}\|_2 = 1\,, \forall i \in [N].
\end{equation}

Then by~\cref{lem: Eu_ratio_vanish},
\begin{align*}
    \lim_{t\to\infty}\norm{(v^{(t)})^\top X^{(0)}\tilde{W}^{(t)} + \frac{E_{i,:}^{(t)}}{u_i^{(t)}}X^{(0)}\tilde{W}^{(t)}}_2 = \lim_{t\to\infty}\norm{(v^{(t)})^\top X^{(0)}\tilde{W}^{(t)}}_2 =  \lim_{t\to\infty} \frac{1}{u_i^{(t)}}\,.
\end{align*}

Note that 

\[\underset{\|v\|_2 = 1}{\min}\norm{v^\top X^{(0)}\tilde{W}^{(t)}}_2 = \sigma_{\min}((X^{(0)}\tilde{W}^{(t)})^\top)\,,\]
where $\sigma_{\min}(\cdot)$ denotes the smallest singular value.
Since we assume $X^{(0)}$ has full rank, and $\tilde{W}^{(t)}$ is orthogonal
\[\lim_{t\to\infty}\norm{(v^{(t)})^\top X^{(0)}\tilde{W}^{(t)}}_2 \geq \sigma_{\min}((X^{(0)}\tilde{W}^{(t)})^\top) > 0\,.\]
Then given $\forall i\in[N]$,
\[\lim_{t\to\infty}\norm{(v^{(t)})^\top X^{(0)}\tilde{W}^{(t)}}_2 = \lim_{t\to\infty} \frac{1}{u_i^{(t)}}\,,\forall i\in[N]\,,\]
as a result,
\begin{equation}
    \lim_{t\to\infty} u_i^{(t)} = \lim_{t\to\infty} u_k^{(t)} < \infty\,,\forall i, k\in[N]\,.
    \label{eq: same_u_finite}
\end{equation}

This together with~\cref{lem: Eu_ratio_vanish} implies that 
\[\lim_{t\to\infty}E_{ij}^{(t)} = 0, \forall i\in[N], j\in[d]\,,\]
and 
\begin{align}
    \lim_{t\to\infty}\mu(X^{(t)}) = \lim_{t\to\infty}\|Bu^{(t)}(v^{(t)})^\top X^{(0)} \tilde{W}^{(t)} + BE^{(t)}X^{(0)} \tilde{W}^{(t)}\|_F = 0\,.
    \label{eq: LN_convergence}
\end{align}

\subsubsection{Exponential convergence rate}
Having established the convergence in~\eqref{eq: LN_convergence}, we then establish the exponential convergence rate.

We define that 
\[\alpha^{(t)} \vcentcolon = \underset{i, j\in[N]}{\min}\langle X^{(t)}_{i,:}, X^{(t)}_{j,:}\rangle.\]

Similar to the derivation of~\eqref{eq: same_u_finite}, note that 

\[\underset{\|v\|_2 = 1}{\max}\norm{v^\top X^{(0)}\tilde{W}^{(t)}}_2 = \sigma_{\max}((X^{(0)}\tilde{W}^{(t)})^\top)\,,\]
it follows that
\[\lim_{t\to\infty}\norm{(v^{(t)})^\top X^{(0)}\tilde{W}^{(t)}}_2 \leq \sigma_{\max}((X^{(0)}\tilde{W}^{(t)})^\top) < \infty\,.\]
Since $\forall i\in[N]$,
\[\lim_{t\to\infty}\norm{(v^{(t)})^\top X^{(0)}\tilde{W}^{(t)}}_2 = \lim_{t\to\infty} \frac{1}{u_i^{(t)}}\,,\forall i\in[N]\,\]
as a result,
\begin{equation*}
    \lim_{t\to\infty} u_i^{(t)} = \lim_{t\to\infty} u_k^{(t)} > 0,\forall i, k\in[N]\,,
\end{equation*}
meaning that $\lim_{t\to\infty} X_{i,:}^{(t)} \in S^{d-1}$ for all $i\in[N]$ .

Then given the convergence established in~\eqref{eq: LN_convergence}, we get that 
$\lim_{t\to\infty} \alpha^{(t)} = 1\,,$
from which there exists $t_0 \in \bN$ such that  $\alpha^{(t)} > 0$, $\forall t \geq t_0$. It follows that for $t\geq t_0$,
\begin{align*}
    \phi^{(t+r)} & = \underset{i,j\in[N]}{\min}\langle X^{(t+r)}_{i,:}, X^{(t+r)}_{j,:}\rangle = \langle  X^{(t+r)}_{i^{(t+r)},:},X^{(t+r)}_{j^{(t+r)},:}\rangle\\
    & \geq  \langle \sum_{k_1 = 1}^N A^{(t+r-1)}_{i^{(t+r)},k_1} X^{(t+r-1)}_{k_1}, \sum_{l_1 = 1}^N A^{(t+r-1)}_{j^{(t+r)},l_1} X^{(t+r-1)}_{l_1}\rangle\\
    & \geq \sum_{(k_1,...,k_r) \in [N]^r}\sum_{(l_1,...,l_r) \in [N]^r} A^{(t+r-1)}_{i^{(t+r)},k_1}...A^{(t)}_{k_{r-1}, k_r}A^{(t+r-1)}_{j^{(t+r)},l_1}...A^{(t)}_{l_{r-1}, l_r}\langle X^{(t)}_{k_r}, X^{(t)}_{l_r} \rangle \\
    & \geq N \epsilon^{2r} + (1-N\epsilon^{2r})\phi^{(t)}\,.
\end{align*}
Thus \[1-\phi^{(t+r)} \leq (1-N\epsilon^{2r})(1-\phi^{(t)}).\]

Writing recursively, we get that for $k\geq 0$,

\begin{equation*}
    1-\phi^{(t_0 + kr)} \leq (1-N\epsilon^{2r})^k(1-\phi^{(t_0)})\,,
\end{equation*}

Let $C \vcentcolon = 2/(1-N\epsilon^{2r})^{t_0/r}$. Hence for all $t\geq 0$,
\begin{equation}
    1-\alpha^{(t)} \leq C (1-N\epsilon^{2r})^\frac{t}{r}\,.
\end{equation}

Since by definition, $1-\phi^{(t)} \geq 1- \langle X^{(t)}_{i,:}, X^{(t)}_{j,:}\rangle \geq \|X_{i,:}^{(t)} - X^{(t)}_{j,:}\|_2^2/2$, it follows that 

\begin{align*}
    \mu(X^{(t)}) & = \|X^{(t)} - \1\1^\top X^{(t)}/N\|_F = \sqrt{\sum_{i=1}^N \|X^{(t)}_{i,:} - \1^\top X^{(t)}/N\|_2^2}\\
    & = \sqrt{\frac{1}{2N}\sum_{i=1}^N \sum_{j=1}^N \|X^{(t)}_{i,:} - X^{(t)}_{j,:}\|_2^2} \\
    & \leq C' (1-N\epsilon^{2r})^\frac{t}{2r}\,,
\end{align*}
where $C' = \sqrt{CN}$, completing the proof.

\section{Proof of Corollary 1}
 First, since for $x\in \Conv(X^{(t)})$, ${W^{(t)}}^\top x$ always lies in $\bS^{d-1}$, $D^{(t)}_{i,i} \geq 1$, for all $i\in[N], t\geq 0$.

For all $t\geq 0$, since $\phi^{(0)}\geq 0$ and $\cG$ is quasi-strongly connected, we have that
\begin{align*}
    \phi^{(t+r)} & = \underset{i,j\in[N]}{\min}\langle X^{(t+r)}_{i,:}, X^{(t+r)}_{j,:}\rangle = \langle  X^{(t+r)}_{i^{(t+r)},:},X^{(t+r)}_{j^{(t+r)},:}\rangle\\
    & \geq  \langle \sum_{k_1 = 1}^N A^{(t+r-1)}_{i^{(t+r)},k_1} X^{(t+r-1)}_{k_1} W^{(t+r-1)}, \sum_{l_1 = 1}^N A^{(t+r-1)}_{j^{(t+r)},l_1} X^{(t+r-1)}_{l_1} W^{(t+r-1)}\rangle\\
    & =  \langle \sum_{k_1 = 1}^N A^{(t+r-1)}_{i^{(t+r)},k_1} X^{(t+r-1)}_{k_1}, \sum_{l_1 = 1}^N A^{(t+r-1)}_{j^{(t+r)},l_1} X^{(t+r-1)}_{l_1}\rangle\\
    & \geq \sum_{(k_1,...,k_r) \in [N]^r}\sum_{(l_1,...,l_r) \in [N]^r} A^{(t+r-1)}_{i^{(t+r)},k_1}...A^{(t)}_{k_{r-1}, k_r}A^{(t+r-1)}_{j^{(t+r)},l_1}...A^{(t)}_{l_{r-1}, l_r}\langle X^{(t)}_{k_r}, X^{(t)}_{l_r} \rangle \\
    & \geq \epsilon^{2r} + (1-\epsilon^{2r})\phi^{(t)}\,.
\end{align*}

Thus \[1-\phi^{(t+r)} \leq (1-\epsilon^{2r})(1-\phi^{(t)}).\]

Writing recursively, we get that for $k\geq 0$,

\begin{equation*}
    1-\phi^{(kr)} \leq (1-\epsilon^{2r})^k(1-\phi^{(0)})\,,
\end{equation*}

Let $C \vcentcolon = 1/(1-\epsilon^{2r})$. Hence for all $t\geq 0$,
\begin{equation}
    1-\phi^{(t)} \leq C (1-\epsilon^{2r})^\frac{t}{r}\,.
\end{equation}

Since by definition, $1-\phi^{(t)} \geq 1- \langle X^{(t)}_{i,:}, X^{(t)}_{j,:}\rangle \geq \|X_{i,:}^{(t)} - X^{(t)}_{j,:}\|_2^2/2$, it follows that 

\begin{align*}
    \mu(X^{(t)}) & = \|X^{(t)} - \1\1^\top X^{(t)}/N\|_F = \sqrt{\sum_{i=1}^N \|X^{(t)}_{i,:} - \1^\top X^{(t)}/N\|_2^2}\\
    & = \sqrt{\frac{1}{2N}\sum_{i=1}^N \sum_{j=1}^N \|X^{(t)}_{i,:} - X^{(t)}_{j,:}\|_2^2} \\
    & \leq C' (1-\epsilon^{2r})^\frac{t}{2r}\,,
\end{align*}
where $C' = \sqrt{CN}$, completing the proof.

\section{Detailed analysis of the illustrative example in Section 4.3.1}\label{app: d=2}

\paragraph{First token}

Given that 

\[\frac{X_{1,2}^{(t+1)}}{X_{1,1}^{(t+1)}} = \frac{X_{1,2}^{(t)}}{X_{1,1}^{(t)}} + w\,,\]
we get that
\[\frac{X_{1,2}^{(t+1)}}{X_{1,1}^{(t+1)}} = \frac{X_{1,2}^{(0)}}{X_{1,1}^{(0)}} + wt\,.\]
and thus 
\[\lim_{t\to\infty} \frac{X^{(t)}_{1,2}}{X^{(t)}_{1,1}}  = \infty\,.\]
Since the sign of $X_{1,1}^{(t)}$ is invariant, depending on its initial sign, we get that

\[ \lim_{t\to\infty} X^{(t)}_{1,:} =
\begin{cases}
   [0,1]  \quad \text{ if } X^{(0)}_{1,1} > 0 \\
   [0,-1]  \quad \text{ if } X^{(0)}_{1,1} < 0
\end{cases}\,.
\]

\paragraph{Second token} Assume that the first token converges to $[0,1]$. Then

\[\frac{X_{2,2}^{(t+1)}}{X_{2,1}^{(t+1)}} = \frac{X_{2,2}^{(t)}}{X_{2,1}^{(t)}} + w + \frac{1}{X_{2,1}^{(t)}} \vcentcolon = g\left(\frac{X_{2,2}^{(t)}}{X_{2,1}^{(t)}} \right)\,,\]
Solving $\frac{X_{2,2}^{(t+1)}}{X_{2,1}^{(t+1)}} = \frac{X_{2,2}^{(t)}}{X_{2,1}^{(t)}}$ gives $X_{2,1} = -1/w$ and 
\begin{equation}
\begin{cases}
    \frac{X_{2,2}^{(t+1)}}{X_{2,1}^{(t+1)}} > \frac{X_{2,2}^{(t)}}{X_{2,1}^{(t)}} \quad \text{ if }X_{2,1} < -1/w \\
    \frac{X_{2,2}^{(t+1)}}{X_{2,1}^{(t+1)}} < \frac{X_{2,2}^{(t)}}{X_{2,1}^{(t)}} \quad \text{ otherwise}\,.
    \label{eq: 2d_convergent_region}
\end{cases}
\end{equation}

To show the convergence, we utilize the following useful property of $g(\cdot)$:
\begin{lemma}
  $g$ is non-decreasing in $X_{2,2}/X_{2,1}$.
     \label{lem: g_increasing}
\end{lemma}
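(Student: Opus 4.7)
The plan is to reduce $g$ to a single-variable function of the ratio $r \vcentcolon= X_{2,2}^{(t)}/X_{2,1}^{(t)}$ by invoking the LayerNorm unit-norm constraint, and then verify monotonicity by a one-line derivative computation.

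First, I would argue that the sign $\sigma \vcentcolon= \mathrm{sgn}(X_{2,1}^{(t)})$ is conserved along the trajectory. Under the standing assumption that the first token has already converged to $(0,1)$, the pre-LayerNorm update to the first coordinate of the second token simplifies to
\[
\tilde{X}_{2,1}^{(t+1)} = \tfrac{1}{2}\bigl(X_{1,1}^{(t)} + X_{2,1}^{(t)}\bigr)\cdot 1 + \tfrac{1}{2}\bigl(X_{1,2}^{(t)} + X_{2,2}^{(t)}\bigr)\cdot 0 = \tfrac{1}{2}X_{2,1}^{(t)},
\]
since $X_{1,1}^{(t)} = 0$ and the first column of $W_V^{(t)}$ is $(1,0)^\top$. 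The subsequent LayerNorm step rescales by a strictly positive factor, so $\mathrm{sgn}(X_{2,1}^{(t+1)}) = \mathrm{sgn}(X_{2,1}^{(t)})$, making $\sigma$ time-invariant.

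Second, I would use the LayerNorm identity $(X_{2,1}^{(t)})^2 + (X_{2,2}^{(t)})^2 = 1$ to write
\[
X_{2,1}^{(t)} = \frac{\sigma}{\sqrt{1+r^2}}, \qquad \frac{1}{X_{2,1}^{(t)}} = \sigma\sqrt{1+r^2},
\]
so the definition of $g$ in the excerpt becomes the explicit scalar function
\[
g(r) = r + w + \sigma\sqrt{1+r^2}.
\]
Differentiating gives
\[
g'(r) = 1 + \frac{\sigma r}{\sqrt{1+r^2}}.
\]
Since $|r|/\sqrt{1+r^2} < 1$ for every finite $r$, we have $|\sigma r/\sqrt{1+r^2}| < 1$, and therefore $g'(r) > 0$ regardless of the sign $\sigma \in \{+1,-1\}$. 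This shows that $g$ is in fact strictly increasing on $\mathbb{R}$, which is stronger than the asserted non-decreasing property.

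The only subtle point (rather than a genuine obstacle) is that $g$ is not a function of the ratio $r$ alone without additional information: one must fix the sign $\sigma$ of $X_{2,1}^{(t)}$ to pin down $X_{2,1}^{(t)}$ from $r$. The sign-conservation argument above handles this cleanly, after which the lemma is a straightforward calculus exercise.
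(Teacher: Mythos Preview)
Your proof is correct and follows essentially the same route as the paper: use the unit-norm constraint to express $1/X_{2,1}$ as a function of the ratio, then differentiate and observe the derivative is nonnegative. You are in fact slightly more careful than the paper, which does not explicitly track the sign $\sigma$ when passing from $(1/X_{2,1})^2 = 1+y^2$ to $1/X_{2,1}$; your sign-conservation argument fills that small gap and also yields the strict inequality $g'>0$ rather than the paper's $g'\geq 0$.
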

\begin{proof}
     Let $y = X_{2,2}/X_{2,1}$.
     Given the identity that 
   \[\left(\frac{1}{X_{2,1}^{(t)}}\right)^2 =y^2 + 1\,,\]
   we get that 
   \[2\frac{\partial \left(\frac{1}{X_{2,1}}\right)}{\partial y}\frac{1}{X_{2,1}} = 2y\,.\]\
   Hence\[\frac{\partial \left(\frac{1}{X_{2,1}}\right)}{\partial y} = X_{2,1}y = X_{2,2}\,,\]
   which means that 
   \[\frac{\partial g}{\partial y} = 1 + X_{2,1} \geq 0\,.\]

\end{proof}
Given $X_{2,2}/X_{2,1}$ is invariant at $X_{2,1} = -1/w$ (symmetric points $A$ and $B$ in~\cref{fig:d2_second_token}), together with~\cref{lem: g_increasing}, they imply the invariance of the 4 segments on $\bS^{1}$, as shown in~\cref{fig:d2_second_token}. Furthermore, \eqref{eq: 2d_convergent_region} shows which point each region would converge to.
Thus it suffices to choose $X^{(0)}_{2,:}$ to be on the dark blue or orange segment.

\begin{figure}
    \centering
    \includegraphics[width = 6cm]{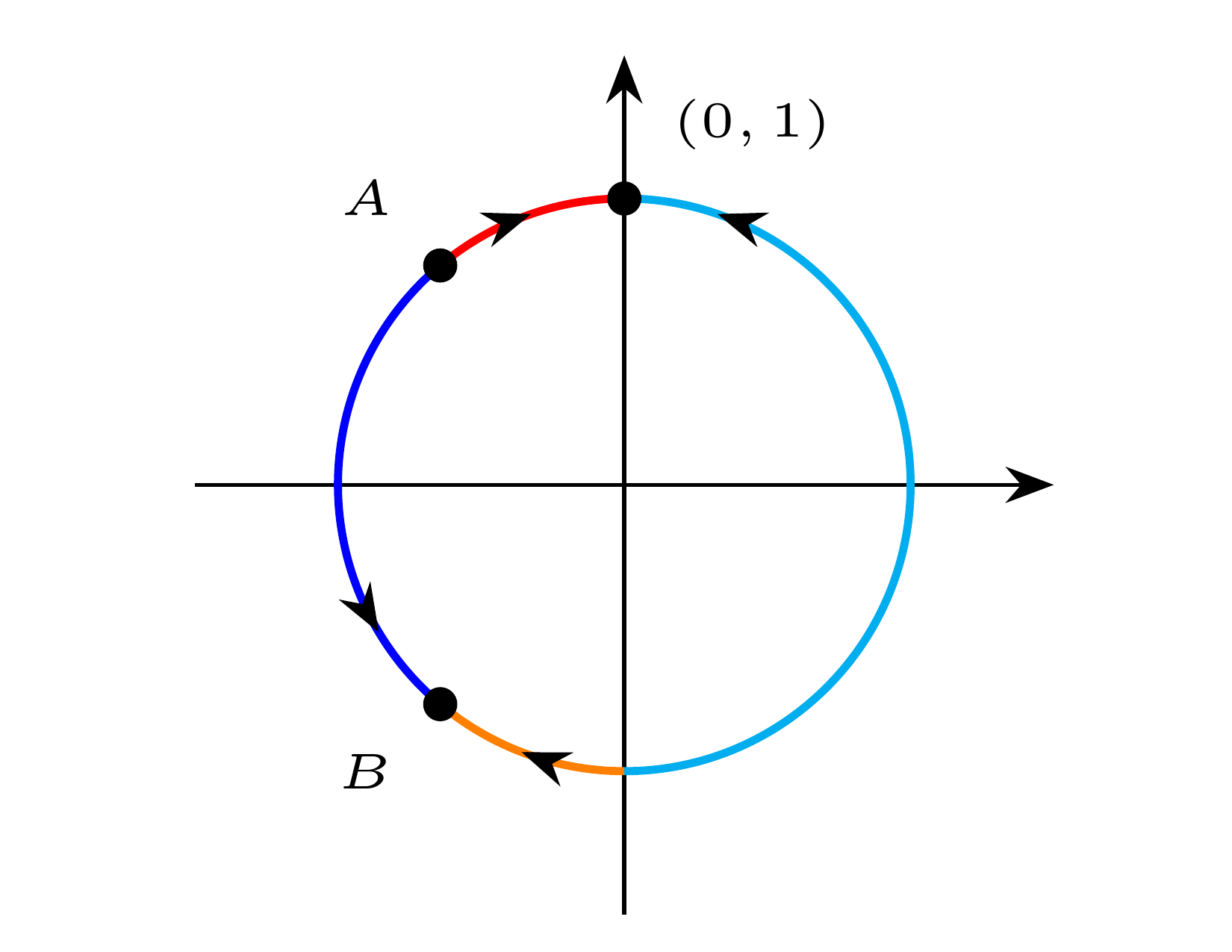}
    \caption{Convergence analysis of the second token in $d=2$.}
    \label{fig:d2_second_token}
\end{figure}

\section{Proof of Theorem 3}\label{app: thm3}
\subsection{Finding the equilibria}
\paragraph{Step 1} Without loss of generality, we set that $W_K^{(t)} = W_Q^{(t)} = \0$, for all $t\geq 0$.
The system of equations corresponds to the equilibrium condition for the attention dynamics with LayerNorm defined in ~\eqref{eq: update_w_LN} when the attention mask $\cG$ is causal is as follows:
\begin{equation*}
    \begin{cases}
        X_{1,:} = F_1(X) = d_1 X_{1,:}W_V \\
X_{2,:}  = F_2(X) = d_2 \left(\frac{1}{2}X_{1,:}W_V + \frac{1}{2}X_{2,:}W_V  \right) \\
\vdots\\
X_{N,:}  = F_N(X) = d_N \left(\frac{1}{N}X_{1,:}W_V + ... +\frac{1}{N}X_{N,:}W_V \right)
    \end{cases}
\end{equation*}

where $d_i = i/\|\sum_{j=1}^i X_{j,:}W_V\|_2$.
We define that $\beta_i = d_i/i$ for $i\in[N]$. Then the above system equation becomes 
\begin{equation*}
    \begin{cases}
        X_{1,:} = \beta_1 X_{1,:}W_V \\
X_{2,:} = \beta_2 \left(\frac{1}{\beta_1}X_{1,:} + X_{2,:}W_V  \right)\\
\vdots\\
X_{N,:} = \beta_N \left(\frac{1}{\beta_{N-1}}X_{N-1,:} + X_{N,:}W_V \right)
    \end{cases}
\end{equation*}

after substitution. 
Then for $i\in[N-1]$, 

\begin{equation}
    X_{i+1,:} = \beta_{i+1} \left(\frac{1}{\beta_i}X_{i,:} + X_{i+1,:}W_V\right)\,,
    \label{eq: jordan_chain_raw}
\end{equation}

Let $k \in [N]$. If $\beta_1 = ... = \beta_k = 1$, then for $i\in[k-1]$, 

\begin{equation}
    X_{i+1,:} = X_{i,:} + X_{i+1,:}W_V\,,
    \label{eq: jordan_chain}
\end{equation}

Then if $X_{K,:}$ is a generalized eigenvector of rank $k$ corresponding to $W_V$ with eigenvalue $1$, then from~\eqref{eq: jordan_chain} that $\{X_{K,:}, ..., X_{1,:}\}$ is the Jordan chain generated by $X_{K,:}$, which is known to be linearly independent.

Hence we set $W_V$ to be a matrix with a generalized eigenvector of rank $k$ corresponding to eigenvalue $1$. For example, 
\begin{enumerate}

    \item For $k = 1$, $W_V$ can be set as $I_d$;
     \item For $k = 2$, $W_V$ can be set as 
\[W_V^{(t)} = \begin{bmatrix}
1 & 0 & 0 & ... & 0& 0 \\
0 & 1 & 0& ... & 0 & 0\\
\vdots & \vdots & \vdots  & ... & \vdots & \vdots\\
0 & 0 & 0 & ... & 0 & 0 \\
0 & 0 & 0 & ... & 1 & w \\
0 & 0 & 0 & ...  &0 & 1\\
\end{bmatrix}\, \quad\forall t\geq 0\,.\]

 \item For $k = 3$, $W_V$ can be set as 
\[W_V^{(t)} = \begin{bmatrix}
1 & 0 & 0 & ... & 0& 0 \\
0 & 1 & 0& ... & 0 & 0\\
\vdots & \vdots & \vdots  & ... & \vdots & \vdots\\
0 & 0 & 0 & ... & w & 0 \\
0 & 0 & 0 & ... & 1 & w \\
0 & 0 & 0 & ...  &0 & 1\\
\end{bmatrix}\, \quad\forall t\geq 0\,.\]

    \item For $k = N$, $W_V$ can be set as 
\begin{equation}
W_V^{(t)} = \begin{bmatrix}
1 & w & 0 & ... & 0& 0 \\
0 & 1 & w & ... & 0 & 0\\
\vdots & \vdots & \vdots  & ... & \vdots & \vdots\\
0 & 0 & 0 & ... & w & 0 \\
0 & 0 & 0 & ... & 1 & w \\
0 & 0 & 0 & ...  &0 & 1\\
\end{bmatrix}\, \quad\forall t\geq 0\,.
\label{eq: full_rank_vallue}
\end{equation}

\end{enumerate}

For our own purpose, we consider $k=N =d$. Then to solve for equilibrium defined by the condition in~\eqref{eq: jordan_chain}, for $i\in[N-1]$, 

\[X_{i+1,:}\begin{bmatrix}
0 & -w & 0 & ... & 0& 0 \\
0 & 0 & -w & ... & 0 & 0\\
\vdots & \vdots & \vdots  & ... & \vdots & \vdots\\
0 & 0 & 0 & ... & -w & 0 \\
0 & 0 & 0 & ... & 0 & -w \\
0 & 0 & 0 & ...  &0 & 0\\
\end{bmatrix} =  X_{i,:}\,,\]

which means that 
\begin{equation}
    [0, -wX_{i+1,1}, -wX_{i+1,2},..., -wX_{i+1,d-1}] = [X_{i,1}, X_{i,2},  X_{i,3}, ..., X_{i,d}]\,.
    \label{eq: solve_x_i}
\end{equation}

Then setting $X_{1,:} = [0, 0, ...,0, 1]$, iteratively solving~\eqref{eq: jordan_chain}, we get that an equilibrium  satisfying~\eqref{eq: jordan_chain_raw} is as follows:
\begin{equation}
    \begin{cases}
    X_{1,:} = {[0, 0,...,0, 0, 1]}\\
    X_{2,:} = {\left[0, 0,...,0, -\frac{1}{w}, -\sqrt{1-\frac{1}{w^2}}\right]
}\\
X_{3,:} = {\left[0, 0,..., \frac{1}{w^2}, \sqrt{\frac{1}{w^2}-\frac{1}{w^4}}, \sqrt{1-\frac{1}{w^2}}\right]}\\
\vdots\\

X_{N,:} = (-1)^{N-1}\begin{bmatrix}
    \frac{1}{w^{N-1}}\\
    \sqrt{\frac{1}{w^{2(N-2)}}-\frac{1}{w^{2(N-1)}}}\\
    ...\\
     \sqrt{\frac{1}{w^{4}}-\frac{1}{w^{6}}}\\
    \sqrt{\frac{1}{w^{2}}-\frac{1}{w^{4}}}\\
    \sqrt{1-\frac{1}{w^2}}\,.
\end{bmatrix}^\top\,,
\label{eq: equilibrium_sol}
\end{cases}
\end{equation}
which is valid if $w> 1$. Since from~\eqref{eq: jordan_chain}, there is a free coordinate, in fact there a $2^N$ many such equilibria.
Hence there exists $\{W_K^{(t)}, W_Q^{(t)}, W_V^{(t)}\}_{t=0}^\infty$ such that the corresponding attention dynamics with LayerNorm has an equilibrium with rank $k=N$.

\paragraph{Step 2} To show that there are equilibria of any rank between $1$ and full co-existing for a corresponding dynamics given the same set of $\{W_K^{(t)}, W_Q^{(t)}, W_V^{(t)}\}_{t=0}^\infty$, consider $W_V^{(t)}$ to be the case of~\eqref{eq: full_rank_vallue}. Then notice that to show there exists equilibrium of $k$ beyond the full rank one as derived in Step 1, notice the following: 

We can set the first $N- k + 1$ tokens to be $[0,...,0,1]$, which still satisfies the equilibrium condition. This implies that $\beta_{N-k+1} = 1/(N-K+1)$.
Then recall~\eqref{eq: jordan_chain_raw}:
\[ X_{i+1,:} = \beta_{i+1} \left(\frac{1}{\beta_i}X_{i,:} + X_{i+1,:}W_V\right)\,\]
Let $\beta_{N-k+2} = 1$, we get that 
\[X_{N-k+2,:}(I - W) = [0,0,...,0,N-k+1]\,,\]
which yields 
\[
X_{N-k+2,:} = \left[\0_{d-2}, -\frac{N-k+1}{w}, \pm\sqrt{1-\frac{(N-k+1)^2}{w^2}}\right]\,.
\]
Continue the process by setting $\beta_{i} = 1$ for $i\geq N-k+2$, we get a rank $k$ equilibrium as desired (there are $2^k$ such equilibria),  as long as $w\geq N$.

\paragraph{Step 3}Finally to ensure  $\{W_K^{(t)}, W_Q^{(t)}, W_V^{(t)}\}_{t=0}^\infty$ satisfies \textbf{A2}-\textbf{A3}:
\begin{itemize}
    \item We see that $\{W_K^{(t)}, W_Q^{(t)}\}_{t=0}^\infty$ satisfies \textbf{A2} by construction;
    \item Since this is a dynamics with LayerNorm,  applying $W_V^{(t)}$ is in fact equivalent to applying $\hat{W}_V = W_V/\|W_V\|_2$ under the dynamics, where the latter has $2$-norm at most $1$ and thus using $\hat{W}_V$ satisfies  \textbf{A3}.
\end{itemize}

\subsection{Anisotropy of the full rank equilibria}\label{app: anisotropy_proof}
To show that the full rank equilibria $X^*$ lie in a narrow region, we make use of the notion stable rank: 
\[\SRank(X) = \frac{\|X\|^2_F}{\|X\|^2_2}\,.\]
We would like to show that for any $\delta > 0$, weights can be chosen such that
\[1\leq \SRank(X^*)\leq (1+\delta) + O(1/N)\,.\]

Since for any matrix $M$, $\|M\|^2_F \geq \|M\|_2^2$, the first inequality trivially holds. To show the second inequality, consider the equilibrium in~$\eqref{eq: equilibrium_sol}$. Then direct calculation yields that 
\[\|X^*\|_F^2 = \Tr(X^*(X^*)^\top) = N\,. \]
Since by definition, 
\[\|X^*\|_2 = \underset{\|v\|_2 = 1}{\sup}\|X^*v\|_2\,,\]
and thus any $\|X^*v\|_2$ such that $v\in \bR^{N}, \|v\|_2 = 1$ serves as a lower bound for $\|X\|_2$. 

We consider $v = [0,0, ..., 0,1]^\top \in\bR^{d}$. Then it follows that 
\begin{align*}
    \|X^*\|^2_2 \geq 1 + (N-1)\left(1-\frac{1}{w^2}\right)\,,
\end{align*}
which implies that 
\begin{align*}
    \SRank(X^*) =&  \frac{\|X^*\|^2_F}{\|X^*\|_2^2} \leq \frac{N}{N-\frac{N-1}{w^2}}
\end{align*}
Hence for any $\delta > 0$, it suffices to choose $w \geq 1$ such that
\[w \geq \sqrt{\frac{1}{\delta} + 1}\,.\]

\subsection{Convergence analysis}

The convergence analysis for the case $d=2$ can be found in~\cref{app: d=2}. Here, we deal with the general case where $d>2$.

To show that tokens do not converge to an rank one subspace, it suffices to show that given certain conditions on the initial input $X^{(0)}$, there are invariant space of the tokens that is at least rank $2$. To that end, we analyze the convergence of the first two tokens.

\paragraph{First token}

Given that 

\[\frac{X_{1,2}^{(t+1)}}{X_{1,1}^{(t+1)}} = \frac{X_{1,2}^{(t)}}{X_{1,1}^{(t)}} + w\,,\]
we get that
\[\frac{X_{1,2}^{(t+1)}}{X_{1,1}^{(t+1)}} = \frac{X_{1,2}^{(0)}}{X_{1,1}^{(0)}} + wt\,.\]

Let $\frac{X_{1,2}^{(0)}}{X_{1,1}^{(0)}} = r_2w$, for $r_2\in\bR$. It follows from above that 

\begin{equation}
    \frac{X_{1,2}^{(t)}}{X_{1,1}^{(t)}} = (r_2+t)w\,.
    \label{eq: first_token_coord_2_formula}
\end{equation}

Then for all $3\leq i \leq d$,
\begin{equation}
    \frac{X_{1,i}^{(t+1)}}{X_{1,1}^{(t+1)}} = w \frac{X_{1,i-1}^{(t)}}{X_{1,1}^{(t)}} + \frac{X_{1,i}^{(t)}}{X_{1,1}^{(t)}}\,,
    \label{eq: first_token_i_to_1}
\end{equation}

setting 
\[\frac{X_{1,i}^{(0)}}{X_{1,1}^{(0)}} = r_iw^{i-1}\,,\]
where $r_i\in\bR$, and substituting from the base case~\eqref{eq: first_token_coord_2_formula} to~\eqref{eq: first_token_i_to_1} recursively, we get that 
\begin{align}
    \frac{X_{1,i}^{(t)}}{X_{1,1}^{(t)}} = w^{i-1}\Bigg(r_i &+ r_{i-1}t + r_{i-2}\sum_{j_1=0}^{t-1}j_1 + r_{i-3}\sum_{j_2=0}^{t-1}\sum_{j_1=0}^{j_2}j_1 + ... + \nonumber\\
    & +  r_2\sum_{j_{i-3} = 0}^{t-1}... \sum_{j_2=0}^{j_3}\sum_{j_1=0}^{j_2} j_1... + \sum_{j_{i-2}=0}^{t-1} ...\sum_{j_2=0}^{j_3}\sum_{j_1=0}^{j_2}j_1 \Bigg) \,,
    \label{eq: first_token_i_to_1_explicit}
\end{align}
which implies that 
\begin{equation}
   \lim_{t\to\infty}\frac{X_{1,1}^{(t+1)}}{X_{1,i}^{(t+1)}}\frac{X_{1,i}^{(t)}}{X_{1,1}^{(t)}} = 1\,.
   \label{eq: cross_ratio}
\end{equation}

Further, for $i\geq 3$
\begin{align}
   \frac{X_{1,i}^{(t+1)}}{X_{1,i-1}^{(t+1)}} & = \frac{wX_{1,i-1}^{(t)} + X_{1,i}^{(t)}}{wX_{1,i-2}^{(t)} + X_{1,i-1}^{(t)}} = \frac{X_{1,1}^{(t+1)}}{X_{1,i-1}^{(t+1)}}\frac{X_{1,i-1}^{(t)}}{X_{1,1}^{(t)}}\left(w+\frac{X_{1,i}^{(t)}}{X_{1,i-1}^{(t)}}\right)
   \label{eq: first_token_i_to_i-1}\,.
\end{align}

Then~\eqref{eq: cross_ratio} and~\eqref{eq: first_token_i_to_i-1} yields that 
\[\lim_{t\to\infty}\frac{X_{1,i}^{(t)}}{X_{1,i-1}^{(t)}} = \infty\,,\]
which let us conclude that if $X_{1,1}^{(0)} > 0$, the first token would converge to $[0,0,...,0,1]$.

\paragraph{Second token}
Assume that the first token has converged to $[0,0,...,0,1]$.
Similar to the first token, we can show that for $2\leq i\leq d-1$, it holds that 
\[\lim_{t\to\infty}\frac{X_{2,i}^{(t)}}{X_{2,i-1}^{(t)}} = \infty\,,\]
 and
\begin{align}
    \frac{X_{2,i}^{(t)}}{X_{2,1}^{(t)}} = w^{i-1}\Bigg(r_i &+ r_{i-1}t + r_{i-2}\sum_{j_1=0}^{t-1}j_1 + r_{i-3}\sum_{j_2=0}^{t-1}\sum_{j_1=0}^{j_2}j_1 + ... + \nonumber\\
    & +  r_2\sum_{j_{i-3} = 0}^{t-1}... \sum_{j_2=0}^{j_3}\sum_{j_1=0}^{j_2} j_1... + \sum_{j_{i-2}=0}^{t-1} ...\sum_{j_2=0}^{j_3}\sum_{j_1=0}^{j_2}j_1\Bigg)\,,
    \label{eq: second_token_i_to_1_explicit}
\end{align}

by setting $X_{2,i}^{(0)}/X_{2,1}^{(0)} = r_iw^{i-1}$, where $r_i>0$.
 Then note that 
\begin{align*}
   \frac{X_{2,d}^{(t+1)}}{X_{2,d-1}^{(t+1)}} & = \frac{wX_{2,d-1}^{(t)} + X_{2,d}^{(t)} + 1}{wX_{2,d-1}^{(t)} + X_{2,d}^{(t)}} = \frac{X_{2,1}^{(t+1)}}{X_{2,d-1}^{(t+1)}}\frac{X_{2,d-1}^{(t)}}{X_{2,1}^{(t)}}\left(w+\frac{X_{2,d}^{(t)}}{X_{2,d-1}^{(t)}} + \frac{1}{X_{2,d-1}^{(t)}}\right)\\
   & \vcentcolon = h\left(\frac{X_{2,3}^{(t)}}{X_{2,2}^{(t)}}, t, r_0, ..., r_{d-1}\right)\,.
\end{align*}
We will show a useful monotone property of $h(\cdot)$.
\begin{lemma}
    Fix $t, r_2, ..., r_{d-1} > 0$,  $h$ is non-decreasing in $X_{2,d}/X_{2,d-1}$.
       \label{lem: h_increasing}
\end{lemma}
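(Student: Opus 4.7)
The plan is to adapt the implicit-differentiation approach of \cref{lem: g_increasing} (the $d=2$ case) using the correct parameterization. Under the hypothesis ``fix $t, r_2, \ldots, r_{d-1} > 0$'', the ratios $\alpha_i := X_{2,i}^{(t)}/X_{2,1}^{(t)}$ for $i \in \{2,\ldots,d-1\}$ are pinned by~\eqref{eq: second_token_i_to_1_explicit}, leaving $\alpha_d = y\alpha_{d-1}$ (equivalently $y := X_{2,d}^{(t)}/X_{2,d-1}^{(t)}$) as the single free variable. As $y$ varies, $X_{2,1}^{(t)}$ must rescale via the LayerNorm constraint $\|X_{2,:}^{(t)}\|_2 = 1$, and all the $X_{2,i}^{(t)} = \alpha_i X_{2,1}^{(t)}$ scale together---so in particular $\kappa := a/b = \alpha_{d-2}/\alpha_{d-1}$, with $a := X_{2,d-2}^{(t)}$ and $b := X_{2,d-1}^{(t)}$, is \emph{independent} of $y$.

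First I would simplify $h$. Using the update rule with $X_{1,:}^{(t)} = (0,\ldots,0,1)$ and $W_V$ from~\eqref{eq: full_rank_vallue}, and noting that the LayerNorm scalar cancels from ratios, one verifies $h(y) = X_{2,d}^{(t+1)}/X_{2,d-1}^{(t+1)} = [b(w+y)+1]/[wa+b]$. Substituting $a = \kappa b$ gives the clean form
\[
h(y) \;=\; \frac{w+y}{w\kappa+1} \;+\; \frac{1}{b\,(w\kappa+1)}\,.
\]
Next I would differentiate implicitly. From the constraint $b^2(S_0 + y^2\alpha_{d-1}^2) = \alpha_{d-1}^2$ with $S_0 := \sum_{i=1}^{d-1}\alpha_i^2$ fixed, one obtains $db/dy = -y b^3$, hence $d(1/b)/dy = yb = X_{2,d}^{(t)}$, and therefore
\[
\frac{dh}{dy} \;=\; \frac{1 + X_{2,d}^{(t)}}{w\kappa + 1}\,.
\]

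Finally I would verify signs. Since $X_{2,:}^{(t)} \in \bS^{d-1}$, $|X_{2,d}^{(t)}| \leq 1$, so the numerator is non-negative. The denominator is positive because $r_i > 0$ combined with~\eqref{eq: second_token_i_to_1_explicit} implies $\alpha_i > 0$ for every $i \leq d-1$, so $\kappa > 0$ and $w\kappa + 1 > 0$.

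I expect the main subtlety---rather than a genuine computational obstacle---to be identifying the correct parameterization: one must \emph{not} freeze the absolute values of $X_{2,1}^{(t)},\ldots,X_{2,d-2}^{(t)}$ as $y$ varies (which would overdetermine the system against the unit-norm constraint), but only their ratios to $X_{2,1}^{(t)}$. This is precisely what makes $\kappa$ independent of $y$ and produces the simple quotient $(1+X_{2,d}^{(t)})/(w\kappa+1)$, in direct analogy with the $1 + X_{2,2}$ expression from the $d=2$ proof.
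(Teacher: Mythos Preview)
Your argument is essentially the same as the paper's: both fix the ratios $X_{2,i}/X_{2,1}$ for $i\le d-1$ via~\eqref{eq: second_token_i_to_1_explicit}, implicitly differentiate the unit-norm constraint to obtain $d(1/X_{2,d-1})/dy = X_{2,d}$, and conclude $\partial h/\partial y = (1+X_{2,d})\cdot\frac{X_{2,1}^{(t+1)}}{X_{2,d-1}^{(t+1)}}\frac{X_{2,d-1}^{(t)}}{X_{2,1}^{(t)}}$, which is exactly your $(1+X_{2,d})/(w\kappa+1)$. Your explicit verification that $w\kappa+1>0$ (via $\alpha_i>0$) is a small addition the paper leaves implicit.
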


\begin{proof}
     Let $y = X_{2,d}/X_{2,d-1}$.
     Given the identity that 
   \[\left(\frac{1}{X_{2,d-1}^{(t)}}\right)^2 =y^2 + 1 + \frac{X_{2,1}^2 + ... + X_{2,d-2}^2}{X_{2,d-1}^2} \vcentcolon= y^2 + 1 + f(r_2,...,r_d)\,,\]
   we get that 
   \[2\frac{\partial \left(\frac{1}{X_{2,d-1}}\right)}{\partial y}\frac{1}{X_{2,d-1}} = 2y\,.\]\
   Hence\[\frac{\partial \left(\frac{1}{X_{2,d-1}}\right)}{\partial y} = X_{2,d-1}y = X_{2,d}\,,\]
   which means that 
   \[\frac{\partial h}{\partial y} = \frac{X_{2,1}^{(t+1)}}{X_{2,d-1}^{(t+1)}}\frac{X_{2,d-1}^{(t)}}{X_{2,1}^{(t)}}(1 + X_{2,d}) \geq 0\,.\]
\end{proof}

With this, we will show the following claim proving the invariance of the subspace of $X_{2,d-1}\leq -1/w$ given some initial conditions:

\begin{lemma}
    Suppose $X_{2,i}^{(0)} < 0$ for $i\in [d-2]$, $X_{2,d-1}^{(0)} \leq -1/w$ such that 
   \begin{equation}
       \frac{1}{w^2}\left(\frac{r_{d-2}}{r_{d-1}}\right)^2 + \frac{1}{w^4}\left(\frac{r_{d-3}}{r_{d-1}}\right)^2 + ... + \frac{1}{w^{2(d-3)}}\left(\frac{r_2}{r_{d-1}}\right)^2 + \frac{1}{w^{2(d-2)}}\left(\frac{1}{r_{d-1}}\right)^2 \leq \sqrt{w^2-1}\,,
       \label{eq: r_i_condition}
   \end{equation}
    then $X_{2,d-1}^{(t)} \leq -1/w$ for all $t\geq 0$.
\end{lemma}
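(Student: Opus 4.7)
The plan is to prove the lemma by induction on $t$. The base case $t=0$ follows from the stated hypothesis together with the unit-norm identity $\|X_{2,:}^{(0)}\|_2 = 1$, which translates~\eqref{eq: r_i_condition} into a direct bound on $X_{2,d-1}^{(0)}$. For the inductive step I exploit the fact that, since the first token has converged to $e_d := [0,\ldots,0,1]$ and $W_Q = W_K = \0$, the update of the second token reduces to
\begin{equation*}
X_{2,:}^{(t+1)} \;=\; \frac{(X_{2,:}^{(t)} + e_d)\,W_V}{\|(X_{2,:}^{(t)} + e_d)\,W_V\|_2}\,.
\end{equation*}
Writing $v^{(t)} := X_{2,:}^{(t)}$ and reading off the entries of $W_V$, the pre-normalization vector $\tilde v^{(t+1)}$ satisfies $\tilde v_1^{(t+1)} = v_1^{(t)}$, $\tilde v_i^{(t+1)} = w v_{i-1}^{(t)} + v_i^{(t)}$ for $2 \leq i \leq d-1$, and $\tilde v_d^{(t+1)} = w v_{d-1}^{(t)} + v_d^{(t)} + 1$.

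The crucial sub-step is a simultaneous invariance: I would carry along (i) $v_i^{(t)} < 0$ for $i \in [d-2]$ and (ii) $v_{d-1}^{(t)} \leq -1/w$. Sign invariance follows easily since $v_1^{(t+1)}$ inherits the sign of $v_1^{(t)}$ under positive normalization, and the ratios $y_i^{(t)} := v_i^{(t)}/v_1^{(t)}$ satisfy the LayerNorm-invariant linear recursion $y_i^{(t+1)} = w y_{i-1}^{(t)} + y_i^{(t)}$ for $i \leq d-1$, which by~\eqref{eq: second_token_i_to_1_explicit} remain strictly positive for all $t$. In particular $wv_{d-2}^{(t)} + v_{d-1}^{(t)} < 0$, so $v_{d-1}^{(t+1)} < 0$, and the target inequality $v_{d-1}^{(t+1)} \leq -1/w$ is equivalent to
\begin{equation*}
w^2 (w v_{d-2}^{(t)} + v_{d-1}^{(t)})^2 \;\geq\; v_1^{(t),2} + \sum_{i=2}^{d-1}(w v_{i-1}^{(t)} + v_i^{(t)})^2 + (w v_{d-1}^{(t)} + v_d^{(t)} + 1)^2\,.
\end{equation*}

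Dividing by $v_1^{(t),2}$ and substituting $y_i^{(t+1)} = wy_{i-1}^{(t)} + y_i^{(t)}$ turns the above into a polynomial inequality in the ratios $y_i^{(t+1)}/y_{d-1}^{(t+1)}$ plus a controlled extra term from the $d$-th coordinate. By the explicit formula~\eqref{eq: second_token_i_to_1_explicit}, each $y_i^{(t)}$ is a degree-$(i-1)$ polynomial in $t$ whose leading behavior is governed by the $r_j$'s, so each ratio $y_i^{(t+1)}/y_{d-1}^{(t+1)}$ for $i < d-1$ is bounded above (up to explicit powers of $w$) by quantities of the form $r_i/r_{d-1}$. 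Hypothesis~\eqref{eq: r_i_condition} is engineered so that the aggregate contribution of these ratios stays below the threshold demanded by the comparison above, while the monotone polynomial growth of the $y_i$'s ensures that if the condition holds at $t=0$ it continues to hold at $t+1$.

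The main obstacle will be the contribution of the $d$-th coordinate, $(wv_{d-1}^{(t)} + v_d^{(t)} + 1)^2$, which does not appear in~\eqref{eq: r_i_condition} and whose dynamics is inhomogeneous due to the $+1$ from $e_d$. I would dispatch it by using the induction hypothesis directly: $v_{d-1}^{(t)} \leq -1/w$ forces $w v_{d-1}^{(t)} + 1 \leq 0$, and the unit-norm constraint gives $|v_d^{(t)}| \leq \sqrt{1 - v_{d-1}^{(t),2}} \leq \sqrt{1 - 1/w^2}$, so this term admits a uniform upper bound consistent with the slack $\sqrt{w^2-1}$ appearing on the right-hand side of~\eqref{eq: r_i_condition}. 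Once this bound is in hand, the remaining verification is a routine term-by-term algebraic comparison that closes the induction.
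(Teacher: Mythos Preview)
Your inductive setup and the sign-invariance argument for coordinates $1,\ldots,d-2$ are fine and match the paper. The divergence, and the gap, is in how you treat the $d$-th coordinate.

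The paper does not bound $(wv_{d-1}^{(t)}+v_d^{(t)}+1)^2$ directly. Instead it works with the single scalar $y := X_{2,d}^{(t)}/X_{2,d-1}^{(t)}$ and observes that, once the ratios $X_{2,i}^{(t)}/X_{2,d-1}^{(t)}$ for $i\le d-2$ are fixed by the explicit polynomial formula, the condition $X_{2,d-1}^{(t)}\le -1/w$ is \emph{equivalent} (via the unit-norm identity) to $|y|\le \bar y^{(t)}:=\sqrt{w^2-1-f(t)}$, where $f(t)=\sum_{i\le d-2}(X_{2,i}^{(t)}/X_{2,d-1}^{(t)})^2$ and condition~\eqref{eq: r_i_condition} is exactly $f(0)\leq \sqrt{w^2-1}$. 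The preceding lemma in the paper---that the one-step map $h$ on $y$ is monotone nondecreasing---then reduces the invariance of $[-\bar y^{(t)},\bar y^{(t)}]$ to evaluating $h$ only at the endpoints $y=\pm\bar y^{(t)}$, i.e.\ at the boundary $X_{2,d-1}^{(t)}=-1/w$. At that boundary a cancellation occurs: $w+1/X_{2,d-1}^{(t)}=w-w=0$, so the inhomogeneous $+1$ disappears from $h$ and one gets $h(\pm\bar y^{(t)})=c\cdot(\pm\bar y^{(t)})$ with $0\le c\le 1$ coming from the cross-ratio of the polynomial growth. Since $f$ is non-increasing in $t$, $\bar y^{(t)}\le \bar y^{(t+1)}$, and the induction closes.

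Your proposed bounds $wv_{d-1}^{(t)}+1\le 0$ and $|v_d^{(t)}|\le\sqrt{1-1/w^2}$ do not control $(wv_{d-1}^{(t)}+v_d^{(t)}+1)^2$ tightly enough to close the comparison: when $v_{d-1}^{(t)}$ is strictly below $-1/w$ the quantity $wv_{d-1}^{(t)}+1$ can be anywhere in $(1-w,0]$, while the left-hand side $(w^2-1)(wv_{d-2}^{(t)}+v_{d-1}^{(t)})^2$ moves at the same time. After dividing by $v_1^{(t),2}$ as you suggest, the offending term contains $1/v_1^{(t)}$, which is unbounded and not governed by~\eqref{eq: r_i_condition} at all, so there is no ``routine term-by-term comparison'' available. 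What is missing is precisely the monotonicity-and-boundary reduction: it collapses the two-parameter interaction between $v_{d-1}$ and $v_d$ to a one-parameter problem and lands you at the unique point where the inhomogeneous term cancels.
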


\begin{proof}
    When $X_{2,d-1}^{(t)} = -1/w$, and given the conditions on $r_2,...,r_{d-1}$ in~\eqref{eq: r_i_condition},

\begin{align*}
    \frac{X_{2,d}^{(t)}}{X_{2,d-1}^{(t)}} & = \pm\sqrt{w^2-1- \frac{\left(X_{2,1}^{(t)}\right)^2 + ... + \left(X^{(t)}_{2,d-2}\right)^2}{\left(X^{(t)}_{2,d-1}\right)^2}}\\
    & = \pm\sqrt{w^2-1- f(t, r_1, ..., r_{d-2})}
\end{align*}
is well defined, where $f(\cdot)$ is monotone non-increasing in $t$ for fixed $r_1, ..., r_{d-2} >0$. We denote \[\bar{y}^{(t)} = \sqrt{w^2-1- f(t, r_1, ..., r_{d-2})}\,.\]

Note that if $X_{2,d-1}^{(t)}\leq -1/w$, we have

\[\frac{X_{2,d}^{(t)}}{X_{2,d-1}^{(t)}}\in \left[-\bar{y}^{(t)}, \bar{y}^{(t)}\right]\]
then by~\cref{lem: h_increasing} we get that 
\[h\left(\frac{X_{2,d}^{(t)}}{X_{2,d-1}^{(t)}}, t, r_2,..., r_{d-1}\right)\in \left[- h\left(\bar{y}^{(t)}, t, r_2, ..., r_d\right), h\left(\bar{y}^{(t)}, t, r_2,...,r_d\right)\right]\,\]
and since by~\eqref{eq: second_token_i_to_1_explicit}, 
\[\frac{X_{2,1}^{(t+1)}}{X_{2,d-1}^{(t+1)}}\frac{X_{2,d-1}^{(t)}}{X_{2,1}^{(t)}} \leq 1\,,\]

it implies that
\begin{align*}
    \left|h\left(\pm\bar{y}^{(t)}\right)\right| & = \frac{X_{2,1}^{(t+1)}}{X_{2,d-1}^{(t+1)}}\frac{X_{2,d-1}^{(t)}}{X_{2,1}^{(t)}} \bar{y}^{(t)}\leq \bar{y}^{(t+1)}\,,
\end{align*}
which means that 
\[\frac{X_{2,d-1}^{(t+1)}}{X_{2,d}^{(t+1)}}\in \left[-\bar{y}^{(t+1)}, \bar{y}^{(t+1)}\right]\]
and thus $X_{2,d-1}^{(t)}\leq -1/w$ for $t\geq 0$.
\end{proof}
This let us conclude that for a general class of sequences $X^{(0)}$, $X^{(t)}$ does not converge to a rank one subspace as $t\to\infty$.

\subsection{Further discussion}
\paragraph{Directionality of attention mask $\cG$} In the proof of~\cref{thm: counterexample_generalization}, we would like to note that $\cG$ being the causal graph gives a key structure to construct the counterexamples. A causal graph essentially introduces asymmetries among token interactions where some tokens can attend to some other tokens but not verse versa and we exploit such asymmetries in constructing the counterexamples. A similar construction thus does not naturally apply to the case where $\cG$ is undirected, e.g. a complete graph. 

As a result, we leave the following interesting question for future research about whether the directionality of graphs empowers the attention mechanism differently. We conjecture a negative answer based on preliminary simulations.

\begin{quote}
    \textit{Does a similar equilibrium exist if one uses an undirected graphs $\cG$ as the attention mask? If not, what differentiates the case between an undirected graph and a directed graph?}
\end{quote}

\section{Experiments}\label{app:numerical}
Here we provide more details about numerical experiments presented in Section~\ref{sec:exp} and some additional experimental results. All models were implemented with PyTorch~\cite{Paszke2019PyTorchAI} and Transformers library~\cite{Wolf2020TransformersSN}.

\paragraph{Compute} We ran all of our experiments on CPUs.
\paragraph{Licenses}
\begin{itemize}
    \item Libraries
    \begin{itemize}
    \item Wikipedia: MIT license
    \item tranformers~\cite{Wolf2020TransformersSN}: Apache License 2.0
    \end{itemize}
    \item Pretrained models
    \begin{itemize}
    \item BERT (bert-base-uncased)~\cite{Devlin2019BERTPO}: Apache License 2.0
    \item GPT2 (openai-community/gpt2)~\cite{radford2019language}: MIT license
    \item T5 (google-t5/t5-base)~\cite{2020t5}: Apache License 2.0
    \item ALBERT (albert/albert-base-v2)~\cite{Lan2019ALBERTAL}: Apache License 2.0
\end{itemize}
\end{itemize}

\subsection{Deeper models}\label{app:numerical_deeper}
We provide the results for deeper models for the experiment presented in~\cref{fig: effect_of_mask_LN}. Here, we conduct the same experiment in a $128$ layer randomly initialized BERT with $12$ heads and $768$ hidden dimension using different attention masks. Similarly, in SANs, while different masks have different convergence rates, all of them suffer from exponential rank collapse. But as soon as we add in LayerNorm, $\mu(X^{(t)})$ no longer converges to $0$, as $\mu(\cdot)$ does not show any decreasing trend even after $128$ layers. 

\begin{figure}[h]
    \centering
    \includegraphics[width=\linewidth]{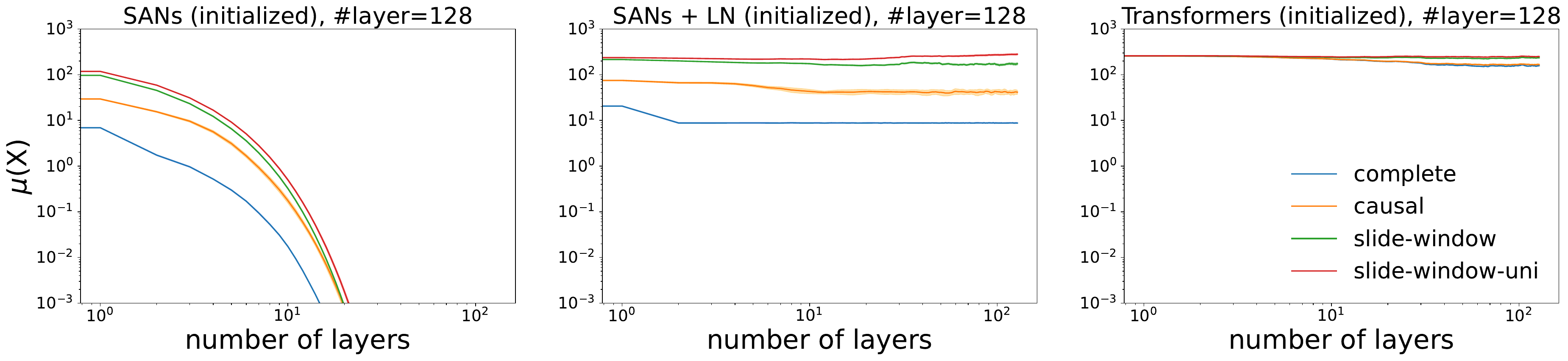}
    \caption{Evolution of $\mu(X^{(t)})$ (in log-log scale) as the number of layers increases in $128$ layer models.}
    \label{fig:enter-label}
\end{figure}

\subsection{The effect of the temperature term $d_{QK}$ in initialized transformers}\label{app:numerical_temp_init}
The effect of the temperature term $d_{QK}$ in initialized transformers is shown in~\cref{fig: temp_init}. Similar to the pretrained models, we see that smaller temperature terms alleviate the rate of rank collapse, and effect is more significant under global attention than under sparser masked attention, and in shallower layers than deeper layers.

\begin{figure}[h]
    \centering
    \includegraphics[width=\linewidth]{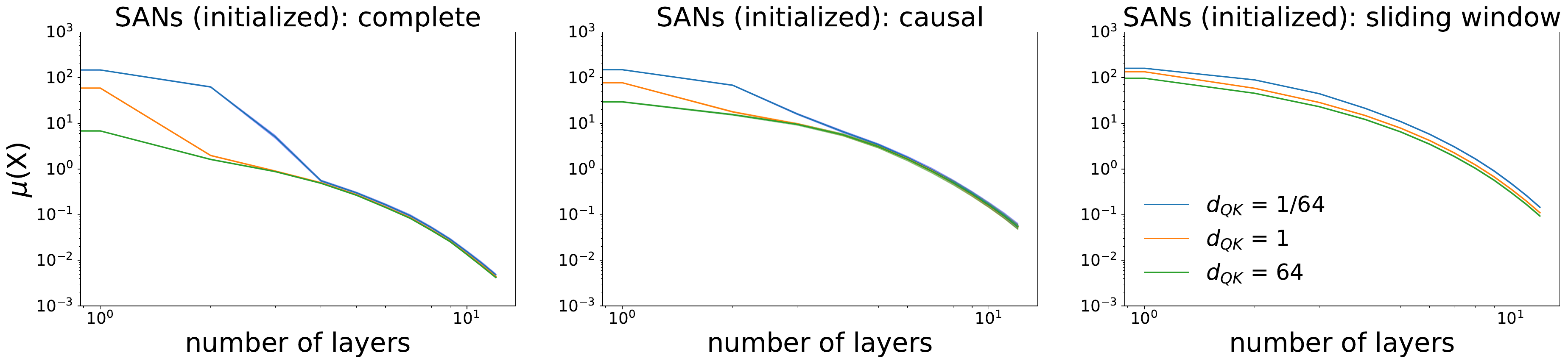}
    \caption{Evolution of $\mu(X^{(t)})$ (in log-log scale) as the number of layers increases in SANs with different $d_{QK}$ at initialization.}
    \label{fig: temp_init}
\end{figure}

\subsection{The evolution of singular values of token representations in transformers}
In this section, we investigate the full set of singular values of token representations $X^{(t)}$ in transformers. For a clear presentation, we randomly select $3$ out of the $3000$ sequence samples described in~\cref{sec:exp} and calculate all the singular values of $X^{(t)}$. The results in randomly initialized transformers are presented in~\cref{fig: singular_full_pretrained}.

We see that while the rank is full for $X^{(t)}$, there is usually one principal component that is much more dominant than the rest. As we have discussed in the main text, this is the anisotropic characteristic of token representations--- there is a strong non-uniformity in different directions. Nonetheless, it is important to note that the ``small" singular values are in fact not negligible here in the absolute sense---the minimal singular values are never close to zero, as~\cref{fig: geometry_evolution} (middle) shows.

\subsection{Initial token geometry}\label{app:numerical_initial_geometry}
We verify the initial token geometry in transformers. For a clear presentation, we randomly select $3$ out of the $3000$ sequence samples described in~\cref{sec:exp} and calculate all the pairwise cosine similarities between tokens. The results in randomly initialized transformers are presented in~\cref{fig:init_geo_init}. We see that in particular in BERT and ALBERT, the initial pairwise cosine similarities are all non-negative.

\begin{figure}[h]
    \centering
        \includegraphics[width=\linewidth]{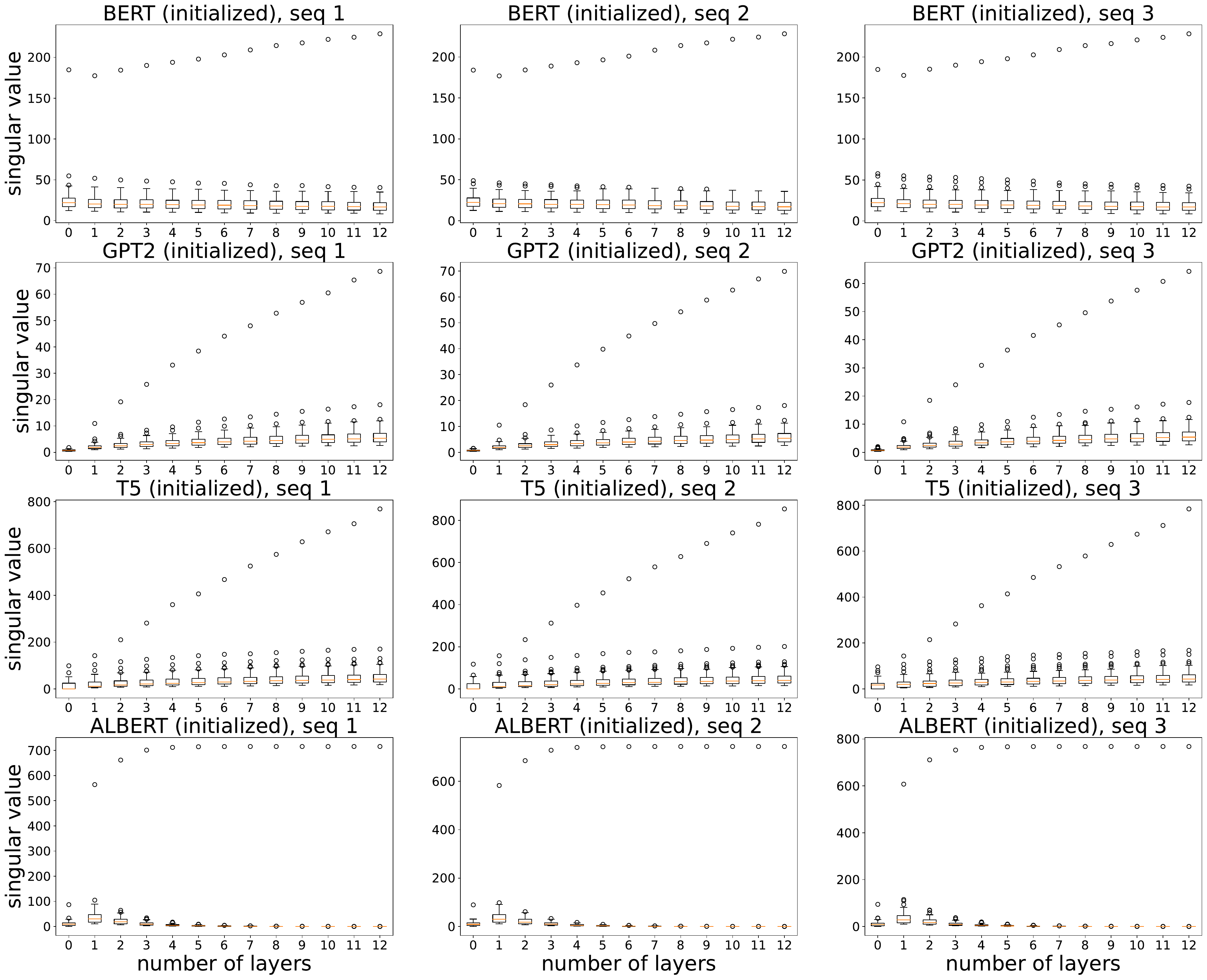}

    \caption{The full set of singular values of $X^{(t)}$ at each layer in initialized and pretrained transformers.}
    \label{fig: singular_full_pretrained}
\end{figure}

\begin{figure}[t]
    \centering
    \includegraphics[width = \linewidth]{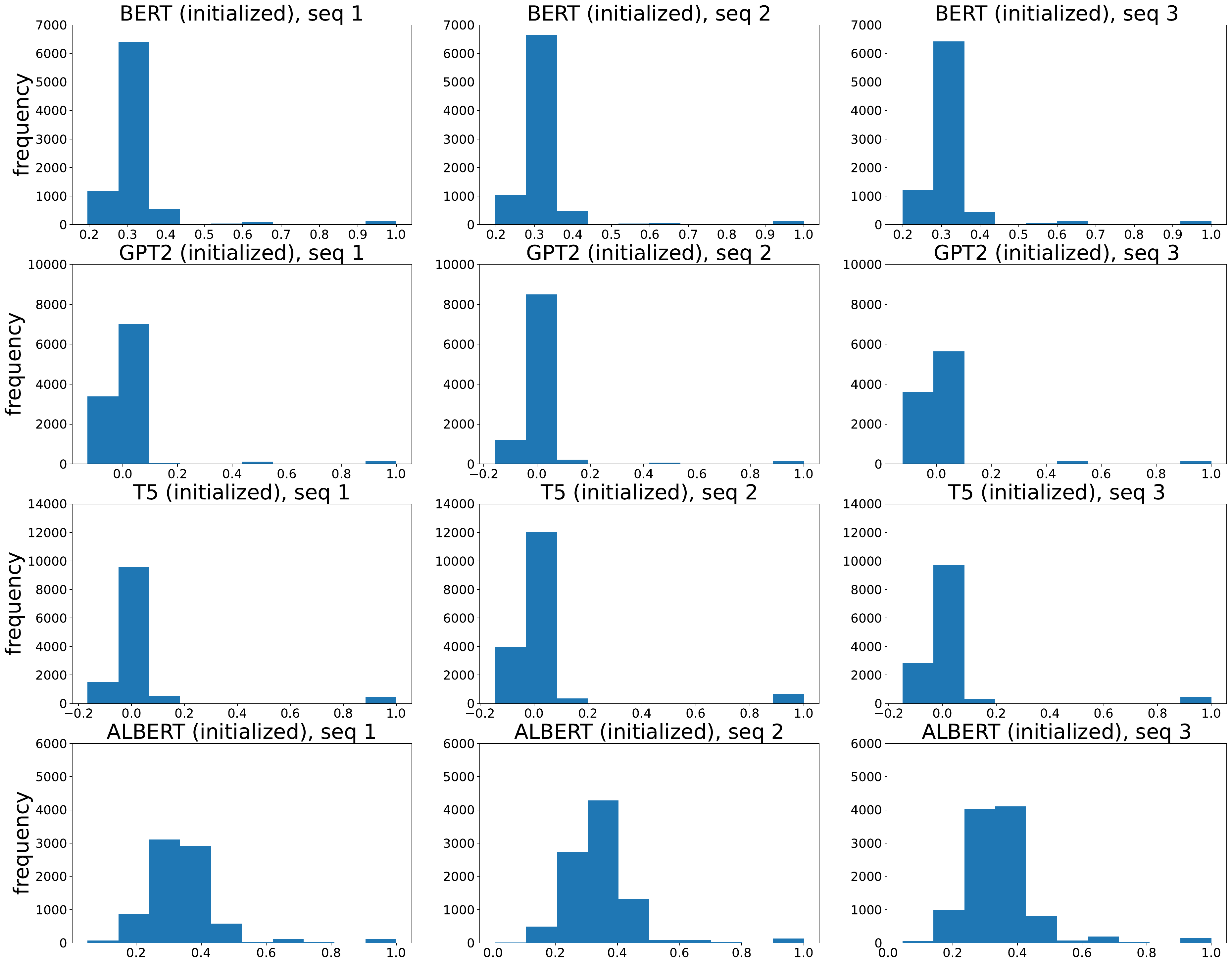}

    \caption{Pairwise cosine similarities between tokens calculated from the initial embeddings in randomly initialized and pretrained transformers, respectively.}
    \label{fig:init_geo_init}
\end{figure}

\newpage

\end{document}